\documentclass[twosided]{article}
\usepackage[T1]{fontenc}
\usepackage{lmodern}

\title{Non-asymptotic and Accurate \\Learning of Nonlinear Dynamical Systems}
\author{Yahya~Sattar\thanks{Department of Electrical and Computer Engineering, University of California, Riverside, CA 92521, USA. Email: ysatt001@ucr.edu, oymak@ece.ucr.edu.} \and Samet~Oymak\footnotemark[1] 
}

\usepackage{hyperref,graphicx,amsmath,amsfonts,amssymb,bm,url,breakurl,epsfig,epsf,color,fullpage,MnSymbol,mathbbol,fmtcount,semtrans,cite,caption,subcaption,multirow,comment}
\usepackage[noend]{algpseudocode}

\usepackage{amssymb}

\usepackage[utf8]{inputenc} 
\usepackage[T1]{fontenc}    
\usepackage{url}            
\usepackage{booktabs}       
\usepackage{amsfonts}       
\usepackage{nicefrac}       
\usepackage{microtype}      

\makeatletter
\providecommand*{\boxast}{%
  \mathbin{
    \mathpalette\@boxit{*}%
  }%
}
\newcommand*{\@boxit}[2]{%
  \sbox0{$\m@th#1\Box$}%
  \ifx#1\displaystyle \ht0=\dimexpr\ht0+.05ex\relax \fi
  \ifx#1\textstyle \ht0=\dimexpr\ht0+.05ex\relax \fi
  \ifx#1\scriptstyle \ht0=\dimexpr\ht0+.04ex\relax \fi
  \ifx#1\scriptscriptstyle \ht0=\dimexpr\ht0+.065ex\relax \fi
  \sbox2{$#1\vcenter{}$}
  \rlap{%
    \hbox to \wd0{%
      \hfill
      \raisebox{%
        \dimexpr.5\dimexpr\ht0+\dp0\relax-\ht2\relax
      }{$\m@th#1#2$}%
      \hfill
    }%
  }%
  \Box
}
\makeatother

  \makeatletter
\def\BState{\State\hskip-\ALG@thistlm}
\makeatother

  \usepackage{mathtools}

\usepackage{titlesec}

\usepackage{tikz}
\usepackage{pgfplots}
\usetikzlibrary{pgfplots.groupplots}

\setcounter{secnumdepth}{3}

\titleformat{\paragraph}
{\normalfont\normalsize\bfseries}{\theparagraph}{1em}{}
\titlespacing*{\paragraph}
{0pt}{3.25ex plus 1ex minus .2ex}{1.5ex plus .2ex}

\usepackage{movie15}

\usepackage{cite}
\usepackage{caption}
\usepackage[bottom,hang,flushmargin]{footmisc} 

\setlength{\captionmargin}{30pt}

\newcommand{\tsn}[1]{{\left\vert\kern-0.25ex\left\vert\kern-0.25ex\left\vert #1 
    \right\vert\kern-0.25ex\right\vert\kern-0.25ex\right\vert}}

\definecolor{darkred}{RGB}{150,0,0}
\definecolor{darkgreen}{RGB}{0,150,0}
\definecolor{darkblue}{RGB}{0,0,200}
\hypersetup{colorlinks=true, linkcolor=darkblue, citecolor=darkgreen, urlcolor=darkgreen}

\newtheorem{theorem}{Theorem}[section]

\newtheorem{assumption}{Assumption}

\newtheorem{lemma}[theorem]{Lemma}
\newtheorem{corollary}[theorem]{Corollary}

\newtheorem{definition}[theorem]{Definition}

\newtheorem{remark}[subsection]{Remark}

\newcommand{\vertiii}[1]{{\vert\kern-0.25ex\vert\kern-0.25ex\vert #1 \vert\kern-0.25ex\vert\kern-0.25ex\vert}}

\newcommand{\eps}{\varepsilon}

\newcommand{\beq}{\begin{equation}}

\newcommand{\eeq}{\end{equation}}

\newcommand{\var}[1]{{{\text{\bf{var}}}}[#1]}

\newcommand{\nn}{\nonumber}
\newcommand{\la}{\lambda}
\newcommand{\A}{{\mtx{A}}}

\newcommand{\bt}{\bteta}



\newcommand{\B}{{{\mtx{B}}}}

\newcommand{\Gb}{{\mtx{G}}}
\newcommand{\Fb}{{\mtx{F}}}

\newcommand{\Lc}{{\cal{L}}}
\newcommand{\Lch}{\hat{{\cal{L}}}}
\newcommand{\Lcp}{{\cal{L}}_{\cal{D}}}

\newcommand{\Ltr}{\hat{{\cal{L}}}^{\text{tr}}}

\newcommand{\ltr}{\hat{{\ell}}^{\text{tr}}}

\newcommand{\Dc}{{\cal{D}}}

\newcommand{\bSi}{{\boldsymbol{{\Sigma}}}}

\newcommand{\bmu}{{\boldsymbol{{\mu}}}}

\newcommand{\Iden}{{\mtx{I}}}
\newcommand{\M}{{\mtx{M}}}

\newcommand{\order}[1]{{\cal{O}}(#1)}

\newcommand{\lmn}[1]{{\lambda_{\min}(#1)}}
\newcommand{\lmx}[1]{{\lambda_{\max}(#1)}}
\newcommand{\z}{{\vct{z}}}

\newcommand{\tn}[1]{\|{#1}\|_{\ell_2}}
\newcommand{\tinf}[1]{\|{#1}\|_{\ell_\infty}}

\newcommand{\bad}{{\bar{d}}}

\newcommand{\tf}[1]{\|{#1}\|_{F}}
\newcommand{\te}[1]{\|{#1}\|_{\psi_1}}
\newcommand{\tsub}[1]{\|{#1}\|_{\psi_2}}

\newcommand{\Cc}{\mathcal{C}}

\newcommand{\Rc}{\mathcal{R}}

\newcommand{\bdel}{\boldsymbol{\delta}}
\newcommand{\bGma}{\boldsymbol{\Gamma}}
\newcommand{\omg}{\boldsymbol{\omega}}
\newcommand{\bteta}{\boldsymbol{\theta}}
\newcommand{\bTeta}{\boldsymbol{\Theta}}

\newcommand{\Bc}{\mathcal{B}}
\newcommand{\Sc}{\mathcal{S}}

\newcommand{\Nc}{\mathcal{N}}

\newcommand{\vb}{\vct{v}}

\newcommand{\vh}{\vct{\hat{v}}}

\newcommand{\w}{\vct{w}}

\newcommand{\wh}{{\hat{\mtx{w}}}}
\newcommand{\li}{\left<}
\newcommand{\ri}{\right>}

\newcommand{\ab}{\vct{a}}

\newcommand{\bb}{\vct{b}}
\newcommand{\ub}{{\vct{u}}}

\newcommand{\h}{\vct{h}}
\newcommand{\htt}{\tilde{\h}}
\newcommand{\bh}{\bar{\vct{h}}}

\newcommand{\g}{{\vct{g}}}
\newcommand{\q}{{\vct{q}}}
\newcommand{\hq}{\hat{\vct{q}}}

\newcommand{\distas}{\overset{\text{i.i.d.}}{\sim}}

\newcommand{\leqsym}[1]{\stackrel{\text{(#1)}}{\leq}}

\newcommand{\eqsym}[1]{\stackrel{\text{(#1)}}{=}}

\newcommand{\by}{\bar{\y}}

\newcommand{\x}{\vct{x}}

\newcommand{\y}{\vct{y}}

\newcommand{\bgl}{{~\big |~}}



\definecolor{emmanuel}{RGB}{255,127,0}

\newcommand{\Kb}{{\mtx{K}}}

\newcommand{\R}{\mathbb{R}}
\newcommand{\Pro}{\mathbb{P}}

\renewcommand{\P}{\operatorname{\mathbb{P}}}
\newcommand{\E}{\operatorname{\mathbb{E}}}

\newcommand{\grad}[1]{{\nabla\Lc(#1)}}

\newcommand{\e}{\mathrm{e}}

\newcommand{\vct}[1]{\bm{#1}}
\newcommand{\mtx}[1]{\bm{#1}}


\numberwithin{equation}{section} 

\def \endprf{\hfill {\vrule height6pt width6pt depth0pt}\medskip}
\newenvironment{proof}{\noindent {\bf Proof} }{\endprf\par}

\newcommand{\red}{\textcolor{black}}

\usepackage{enumitem}

\usepackage{multirow}
\usepackage{algorithm}
\usepackage{algpseudocode}

\newcommand{\RED}[1]{\textcolor{black}{#1}}

\newcommand{\btetas}{{\boldsymbol{\theta}_\star}}
\newcommand{\As}{{\boldsymbol{A}_\star}}
\newcommand{\Aps}{{\boldsymbol{A'}_\star}}
\newcommand{\Bs}{{\boldsymbol{B}_\star}}
\newcommand{\btetat}{\tilde{{\boldsymbol{\theta}}}}
\newcommand{\btetah}{\hat{{\boldsymbol{\theta}}}}

\newcommand{\bheta}{{\boldsymbol{\theta}}}

\newcommand{\bhetas}{{\boldsymbol{\theta}_\star}}
\newcommand{\bbalpha}{{\boldsymbol{\alpha}}}
\newcommand{\bphi}{\tilde{\phi}}

\newcommand{\Bheta}{{\boldsymbol{\Theta}}}

\newcommand{\Bhetas}{{\boldsymbol{\Theta}_\star}}


\date{}
\begin{document}
\maketitle
\begin{abstract}
	We consider the problem of learning \red{nonlinear dynamical} systems governed by nonlinear state equation $\h_{t+1}=\phi(\h_t,\ub_t;\bteta)+\w_t$. Here $\bteta$ is the unknown system dynamics, $\h_t $ is the state, $\ub_t$ is the input and $\w_t$ is the additive noise vector. We study gradient based algorithms to learn the system dynamics $\bteta$ from samples obtained from a single finite trajectory. \red{If the system is run by a stabilizing input policy, then using a mixing-time argument we show that temporally-dependent samples can be approximated by i.i.d.~samples}. We then develop new guarantees for the uniform convergence of the gradients of the empirical loss \red{induced by these i.i.d.~samples}. Unlike existing works, our bounds are noise sensitive which allows for learning ground-truth dynamics with high accuracy and small sample complexity. Together, our results facilitate efficient learning of \red{a broader class of nonlinear dynamical systems as compared to the prior works}. 
We specialize our guarantees to  entrywise nonlinear activations and verify our theory in various numerical experiments. 
\end{abstract}

\section{Introduction}
Dynamical systems are fundamental for modeling a wide range of problems appearing in complex physical processes, cyber-physical systems and machine learning. Contemporary neural network models for processing sequential data, such as recurrent networks and LSTMs, can be interpreted as nonlinear dynamical systems and establish state-of-the-art performance in machine translation and speech recognition  \cite{li2013accelerating,mikolov2010recurrent,graves2013speech,sak2014long,bahdanau2014neural}.
Classical optimal control literature heavily relies on modeling the underlying system as a linear dynamical system (LDS) to synthesize control policies leading to elegant solutions such as PID controller and Kalman filter \cite{aastrom1995pid,welch1995introduction,ho1966effective}. In many of these problems, we have to estimate or approximate the system dynamics from data, either because the system is initially unknown or because it is time-varying. This is alternatively known as the system identification problem which is the task of learning an unknown system from the time series of its trajectories \cite{aastrom1971system,chen1990non,hochreiter1997long,ljung1999system,pintelon2012system}.

In this paper, we aim to learn the dynamics of nonlinear systems which are governed by following state equation,
\begin{align}
	\h_{t+1}=\phi(\h_t,\ub_t;\btetas)+\w_t, \label{state eqn intro}
\end{align} 
where $\btetas \in \R^d$ is the system dynamics, $\h_t \in \R^n$ is the state vector, $\ub_t \in \R^p$ is the input and $\w_t \in \R^n$ is the additive noise at time $t$. Our goal is understanding the statistical and computational efficiency of gradient based algorithms for learning the system dynamics from a single finite trajectory. 

\noindent{\bf{Contributions:}} Although system identification is classically well-studied, obtaining non-asymptotic sample complexity bounds is challenging especially when it comes to nonlinear systems. We address this challenge by relating the system identification problem (which has temporally dependent samples) to classical statistical learning setup where data is independent and identically distributed (i.i.d). We build on this to provide the following contributions.

\noindent $\bullet$ {\bf{Learning nonlinear systems via gradient descent:}} We work with (properly defined) \red{stable} nonlinear systems and use \red{stability} in conjunction with mixing-time arguments to address the \red{problem of learning the system dynamics from a single finite trajectory}. Under proper and intuitive assumptions, this leads to sample complexity and convergence guarantees for learning nonlinear dynamical systems \eqref{state eqn intro} via gradient descent. Unlike the related results on nonlinear systems \cite{oymak2019stochastic,bahmani2019convex}, our analysis accounts for the noise, achieves optimal dependence and applies to a \red{broader class of} nonlinear systems.

\noindent $\bullet$ {\bf{Accurate statistical learning:}} Of independent interest, we develop new statistical guarantees for the uniform convergence of the gradients of the empirical loss. Improving over earlier works~\cite{mei2018landscape,foster2018uniform}, our bounds \red{properly capture the noise dependence} and allows for learning the ground-truth dynamics with high accuracy and small sample complexity (see \S\ref{sec:accu SLWG} for further discussion).

\noindent $\bullet$ {\bf{Applications:}} We specialize our results by establishing theoretical guarantees for learning linear~($\h_{t+1} = \As\h_t + \Bs\ub_t+\w_t$) as well as nonlinear~($\h_{t+1} = \phi(\Bhetas\h_t) +\z_t +\w_t$) dynamical systems via gradient descent which highlight the optimality of our guarantees. Lastly, we verify our theoretical results through various numerical experiments with nonlinear activations.

{\bf{Organization:}} We introduce the problem under consideration in \S\ref{sec:math_prelim} and provide uniform convergence guarantees for empirical gradients in \S\ref{sec:accu SLWG}. We relate the gradients of single trajectory loss and multiple trajectory loss in~\S\ref{sec: truncated results}. Our main results on learning nonlinear systems are presented in \S\ref{sec 5} and applied to two special cases in \S\ref{sec:applications}. \S\ref{sec:numerical_exp} provides numerical experiments to corroborate our theoretical results. \S\ref{sec:related} discusses the related works and finally \S\ref{sec:conclusion} concludes the paper. Finally, the proofs of our main results are provided in Appendices~\ref{sec: proofs main} and \ref{sec: appenix A}.

{\bf{Notations:}} We use boldface uppercase (lowercase) letters to denote matrices (vectors). For a vector $\vb$, we denote its Euclidean norm by $\tn{\vb}$. For a matrix $\M$, $\rho(\M),\|\M\|$ and $\tf{\M}$ denote the spectral radius, spectral norm and Frobenius norm respectively. $c,c_0,c_1,\dots, C,C_0$ denote positive absolute constants. $\Sc^{d-1}$ denotes the unit sphere while $\Bc^d(\ab,r)$ denotes the Euclidean ball of radius $r$, centered at $\ab$, in $\R^d$. The normal distribution is denoted by $\Nc(\mu,\sigma^2)$. For a random vector $\vb$, we denote its covariance matrix by $\bSi[\vb]$. We use $\gtrsim$ and $\lesssim$ for inequalities that hold up to a constant factor. We denote by $a \lor b$, the maximum of two scalars $a$ and $b$. Similarly, $a \land b$ denotes the minimum of the two scalars. Given a number $a$, $\lfloor a \rfloor$ denotes the largest integer less than or equal to $a$, whereas, $\lceil a \rceil$ denotes the smallest integer greater than or equal to $a$.

\section{Problem Setup}\label{sec:math_prelim}
We assume the system is driven by inputs $\ub_t=\mtx{\pi}(\h_t)+\z_t$, where $\mtx{\pi}(\cdot)$ is a fixed control policy and $\z_t$ is excitation for exploration. For statistical analysis, we assume the excitation and noise are random, that is, $(\z_{t})_{t\geq 0}\distas \Dc_z$ and $(\w_{t})_{t\geq 0} \distas \Dc_w$ for some distributions $\Dc_z$ and $\Dc_w$. With our choice of inputs, the state equation~\eqref{state eqn intro} becomes,
\begin{align}
	\h_{t+1}&=\phi(\h_t,\mtx{\pi}(\h_t)+\z_t;\btetas)+\w_t :=\bphi(\h_t,\z_t;\btetas)+\w_t, \label{general state eqn}
\end{align}
where $\bphi$ denotes the closed-loop nonlinear system. \red{Throughout, we assume the nonlinear functions $\phi(\cdot,\cdot;\bt)$ and $\bphi(\cdot,\cdot;\bt)$ are differentiable in $\bt$. For clarity of exposition, we will not explicitly state this assumption when it is clear from the context.} A special case of~\eqref{general state eqn} is a linear state equation with $\btetas=[\As~\Bs]$, $\mtx{\pi}(\h_t)=-\Kb \h_t$ and
\begin{align}
	\h_{t+1}=(\As-\Bs\Kb)\h_t+\Bs\z_t+\w_t,\label{linear state eqn}
\end{align}
To analyze~\eqref{general state eqn} in a non-asymptotic setup, we assume access to a finite trajectory $(\h_{t},\z_{t})_{t=0}^{T-1}$ generated by unknown dynamics $\btetas$. Towards estimating $\btetas$, we formulate an empirical risk minimization (ERM) problem over single finite trajectory as follows,
\begin{equation}\label{eqn:minimization problem}
	\begin{aligned}
		&\btetah = \arg \min_{\bteta \in \R^d}\hat{\Lc}(\bteta), \quad \text{subject to} \quad \hat{\Lc}(\bteta) :=\frac{1}{2(T-L)}\sum_{t=L}^{T-1}\tn{\h_{t+1}-\bphi(\h_t,\z_t;\bteta)}^2, 
	\end{aligned}
\end{equation}
where $L \geq 1$ is a churn period which is useful for simplifying the notation later on, as $L$ will also stand for the approximate mixing-time of the system. To solve \eqref{eqn:minimization problem}, we investigate the properties of the gradient descent algorithm, given by the following iterate
\begin{align}	
	\bteta_{\tau+1} &= \bteta_{\tau} - \eta \nabla{\hat{\Lc}(\bteta_{\tau})}, \label{eqn:gradient descent} 
\end{align}
where $\eta >0$ is the fixed learning rate. ERM with i.i.d.~samples is a fairly well-understood topic in classical machine learning. However, samples obtained from a single trajectory of a dynamical system are temporally dependent. For \red{stable} systems~(see Def.~\ref{def:non-linear stability}), it can be shown that this dependence decays exponentially over the time. Capitalizing on this, we show that one can obtain almost i.i.d.~samples from a given trajectory $(\h_{t},\z_{t})_{t=0}^{T-1}$. This will in turn allow us to leverage techniques developed for i.i.d.~data to solve problems with sequential data.
\subsection{Assumptions on the System and the Inputs}
We assume that \red{the closed-loop system $\bphi$ is stable. Stability} in linear dynamical systems is connected to the spectral radius of the closed-loop system~\cite{krauth2019finite,simchowitz2018learning}. The definition below provides a natural generalization of stability to nonlinear systems. 
\begin{definition}[$(C_\rho,\rho)$-stability] \label{def:non-linear stability}
	Given excitation $(\z_t)_{t\geq 0}$ and  noise $(\w_t)_{t\geq 0}$, denote the state sequence~\eqref{general state eqn} resulting from initial state $\h_0=\bbalpha$, $(\z_{\tau})_{\tau = 0}^{t-1}$ and $(\w_{\tau})_{\tau = 0}^{t-1}$ by $\h_t(\bbalpha)$. Let $C_\rho \geq 1$ and $\rho \in (0,1)$ be system related constants. We say that the closed loop system $\bphi$ is $(C_\rho,\rho)$-stable if, for all $\bbalpha$, $(\z_t)_{t \geq 0}$ and $(\w_t)_{t\geq 0}$ triplets, we have 
	\begin{align}
		\tn{\h_t(\bbalpha)-\h_t(0)}\leq C_\rho \rho^t\tn{\bbalpha}.
	\end{align}
\end{definition}
Def.~\ref{def:non-linear stability} is a generalization of the standard notion of stability in the case of LDS. For a stable LDS~($\rho(\As)<1$), as a consequence of Gelfand's formula, there exists $C_\rho \geq 1$ and $\rho \in (\rho(\As),1)$ such that $(C_\rho,\rho)$-stability holds~(see \S\ref{sec:proofs of LDS}). 
\RED{A concrete example of nonlinear stable system is a contractive system where $\bphi$ is $\rho$-Lipschitz function of $\h_t$ for some $\rho<1$. We remark that, our interest in this work is not verifying the stability of a nonlinear system, but using stability of the closed-loop nonlinear system as an ingredient of the learning process. Verifying stability of the nonlinear systems can be very challenging, however, system analysis frameworks such as integral quadratic constraints (IQC)~\cite{megretski1997system} and Sum of Squares \cite{papachristodoulou2005tutorial,prajna2002introducing} may provide informative bounds.}
\begin{assumption}[Stability] \label{assum stability} 
	The closed-loop system $\bphi$ is $(C_\rho,\rho)$-stable for some $\rho<1$. 
\end{assumption}
\red{Assumption 1 implies that the closed-loop system forgets a past state exponentially fast. This is different from the usual notion of ``exponential Lyapunov stability'' which claims the exponential convergence to a point in state space. On the other hand, in the case of $(C_\rho,\rho)$-stability, the trajectories $\h_t(\bbalpha)$ and $\h_t(0)$ do not have to converge, rather their difference $\tn{\h_t(\bbalpha) - \h_t(0)}$ exponentially converges to zero~(assuming $\tn{\bbalpha}$ is bounded).} 
To keep the exposition simple, we will also assume $\h_0=0$ throughout. For data driven guarantees, we will make use of the following independence and boundedness assumptions on excitation and noise.
%
%
\red{
	\begin{assumption}[Boundedness] \label{ass bounded} There exist scalars $B,\sigma >0$, such that $(\z_t)_{t\geq 0}\distas \Dc_z$ and $(\w_t)_{t\geq 0}\distas \Dc_w$ obey $\tn{\bphi(0,\z_t;\btetas)}\leq B\sqrt{n}$ and $\tinf{\w_t} \leq \sigma$ for $0 \leq t \leq T-1$ with probability at least $1-p_0$ over the generation of data.
\end{assumption}}
\subsection{Optimization Machinery}
To concretely show how \red{stability} helps, we define the following loss function, obtained from i.i.d. samples at time $L-1$ and can be used as a proxy for $\E[\hat{\Lc}]$.
\begin{definition}[Auxiliary Loss]\label{def:TPL}
	Suppose $\h_{0} = 0$. Let $(\z_t)_{t\geq 0}\distas \Dc_z$ and $(\w_t)_{t\geq 0}\distas \Dc_w$. The auxiliary loss is defined as the expected loss at timestamp $L-1$, that is, 
	\begin{equation}\label{eqn:TPL}
		\begin{aligned}
			&\Lcp(\bteta) = \E[\Lc(\bteta,(\h_L,\h_{L-1},\z_{L-1}))], \\
			\text{where} \quad &\Lc(\bteta,(\h_L,\h_{L-1},\z_{L-1})) :=  \frac{1}{2}\tn{\h_L - \bphi(\h_{L-1},\z_{L-1};\bteta)}^2.
		\end{aligned}
	\end{equation}
\end{definition}
Our generic system identification results via gradient descent will utilize the one-point convexity hypothesis. This is a special case of Polyak-Łojasiewicz inequality and provides a generalization of strong convexity to nonconvex functions.
\begin{assumption}[One-point convexity~(OPC)  \& smoothness] \label{assum grad dominance} 
	There exist scalars $\beta \geq \alpha>0, r>0$ such that, \red{for all $\bt \in \Bc^d(\btetas,r)$}, the auxiliary loss $\Lcp(\bteta)$ of Definition~\ref{def:TPL} satisfies
	\begin{align}
		&\li\bteta-\btetas,\nabla{\Lcp(\bteta)}\ri\geq\alpha \tn{\bteta-\btetas}^2, \label{assump 1a}\\
		&\tn{\nabla{\Lcp(\bteta)}}\leq \beta\tn{\bteta-\btetas}. \label{assump 1b}
	\end{align}
\end{assumption}
\red{A concrete example of a nonlinear system satisfying OPC is the nonlinear state equation $\h_{t+1} = \phi(\Bhetas \h_{t}) + \z_t + \w_t$, with $\gamma$-increasing activation~(i.e.~$\phi'(x)\geq \gamma>0$ for all $x\in\R$. See Lemma~\ref{lemma:grad dominance nonlinear} for detail). We expect many activations including ReLU to work as well. The main challenge is verifying OPC of the population loss. For ReLU, Lemma 6.1 of~\cite{mohammadreza2019fitting} shows this property for i.i.d. Gaussian features. Extending this to subgaussian features, would yield the ReLU result. 
	There are many other interesting directions for which OPC can be verified, e.g., nonlinear ARX form $\h_{t} = \phi(\A_1\h_{t-1} + \A_2\h_{t-2} +\cdots+\A_{m}\h_{t-m})+\w_{t-1}$ 
	with $\gamma$-increasing activation, where $m \geq 1$ is called the order of the nonlinear ARX form. Specifically, setting $\x_{L-1} := [\h_{L-1}^T~\h_{L-2}^T~\cdots~\h_{L-m}^T]^T \in \R^{mn}$ and $\bTeta_\star := [\A_1~\A_2~\cdots~\A_m] \in \R^{n \times mn}$, we have $\h_{L} = \phi(\Bhetas\x_{L-1})+\w_{L-1}$. To verify the OPC for the nonlinear ARX form, denoting the $k_{\rm th}$ row of $\Bhetas$ by $\bheta_k^{\star \top}$, the auxiliary loss is given by,
	\begin{align*}
		\Lcp(\Bheta) = \sum_{k=1}^n \Lc_{k,\Dc}(\bheta_k) \quad \text{where}\quad \Lc_{k,\Dc}(\bheta_k) := \frac{1}{2}\E[(\h_L[k] -\phi(\bheta_k^\top\x_{L-1}))^2]. 
	\end{align*}
	Verifying the OPC of $\Lc_{k,\Dc}$ is equivalent to showing that the covariance matrix $\bSi[\x_{L-1}]$ is positive-definite. 
	If $\bSi[\x_{L-1}] \succeq \sigma_x^2\Iden_{mn}$ and $\phi$ is $\gamma$-increasing, then using a similar proof strategy of Lemma~\ref{lemma:grad dominance nonlinear}, it is straightforward to show that, for all $1\leq k \leq n$, the auxiliary losses $\Lc_{k,\Dc}(\bheta_k)$ satisfy the following OPC bound,
	\begin{align*}
		\li\bheta_k-\bheta_k^\star,\nabla{\Lc_{k,\Dc}(\bheta_k)}\ri &\geq \gamma^2 \sigma_x^2 \tn{\bheta_k-\bheta_k^\star}^2.
	\end{align*}
	Lastly, if the goal were only to show convergence to a local minima, we believe the one-point convexity assumption might be replaced with a less stronger assumption at the expense of slower learning. Specifically, similar to~\cite{li2017convergence}, we may analyze a two stage convergence of Gradient descent. In the first stage the gradient might point to the wrong direction, however, a potential function $g$ gradually decreases. Then, in the second stage, Gradient descent enters a nice one-point convex region and converges.}

\red{To proceed, if the gradient of $\hat{\Lc}(\bt)$ is close to that of $\Lcp(\bt)$ and Assumption~\ref{assum grad dominance} holds, gradient descent converges to the population minimum up to a statistical error governed by the noise level.
	\begin{theorem}[Informal result] \label{thrm:meta} Consider the state equation~\eqref{general state eqn}. Suppose Assumptions \ref{assum stability} and \ref{assum grad dominance} hold. Let $C,C_0,\xi,\xi_0>0$ be system related constants and $\sigma,\sigma_0>0$ denote the noise levels. Assume for all $\bt\in\Bc^d(\btetas,r)$, $\nabla\hat{\Lc}$ satisfies 
		\begin{align}
			\tn{\nabla \Lch(\bt)-\nabla \Lcp(\bt)}\leq \underbrace{C_0(\sigma_0+\xi_0\tn{\bt-\btetas}) \sqrt{d/N}}_{\text{finite sample error}} + \underbrace{C(\sigma+\xi\tn{\bteta-\btetas})C_\rho\rho^{L-1}}_{\text{single trajectory error}}
		\end{align}
		with high probability. Let $N = \lfloor(T-L)/L\rfloor$, where we pick $L$ via 
		\begin{align}
			L = \big\lceil 1 + \frac{\log((CC\rho/C_0)\sqrt{N/d}(\sigma/\sigma_0 \lor \xi/\xi_0))}{1-\rho} \big\rceil.
		\end{align}
		Suppose $N \gtrsim \xi_0^2 C_0^2d/\alpha^2$. Given $r>0$, set the learning rate $\eta = \alpha/(16\beta^2)$ and pick $\bheta_0 \in \Bc^d(\bhetas,r)$. Assuming $\sigma_0 \lesssim r\xi_0$, with high probability, all gradient descent iterates $\bheta^{(\tau)}$ on $\Lch$ satisfy
		\begin{align}
			\tn{\bt_\tau-\btetas}\leq (1-\frac{\alpha^2}{128\beta^2})^{\tau}\tn{\bt_0-\btetas}+\frac{5C_0\sigma_0}{\alpha}\sqrt{\frac{d}{N}}.
		\end{align}
	\end{theorem}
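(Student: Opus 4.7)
The plan is a standard gradient descent analysis under one-point convexity, with a perturbation argument that absorbs the gap between $\nabla\hat{\Lc}$ and $\nabla\Lcp$ into the statistical error. I would first split the empirical gradient at iterate $\bt_\tau$ as $\nabla\hat{\Lc}(\bt_\tau)=\nabla\Lcp(\bt_\tau)+\Delta_\tau$, where by the hypothesis of the theorem
\[
\tn{\Delta_\tau}\leq C_0(\sigma_0+\xi_0\tn{\bt_\tau-\btetas})\sqrt{d/N} + C(\sigma+\xi\tn{\bt_\tau-\btetas})C_\rho\rho^{L-1}.
\]
The stated choice of $L$ is designed precisely so that the single-trajectory term is dominated by the finite-sample term: solving $CC_\rho\rho^{L-1}\asymp C_0\sqrt{d/N}(\sigma_0/\sigma\vee\xi_0/\xi)$ and invoking $\log(1/\rho)\geq 1-\rho$ reproduces the given formula, after which $\tn{\Delta_\tau}\lesssim C_0(\sigma_0+\xi_0\tn{\bt_\tau-\btetas})\sqrt{d/N}$.

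Next I would unroll one gradient step against $\btetas$:
\[
\tn{\bt_{\tau+1}-\btetas}^2 = \tn{\bt_\tau-\btetas}^2 - 2\eta\langle\bt_\tau-\btetas,\nabla\Lcp(\bt_\tau)\rangle - 2\eta\langle\bt_\tau-\btetas,\Delta_\tau\rangle + \eta^2\tn{\nabla\Lcp(\bt_\tau)+\Delta_\tau}^2.
\]
The OPC bound \eqref{assump 1a} lower-bounds the first inner product by $\alpha\tn{\bt_\tau-\btetas}^2$; Cauchy--Schwarz bounds the second in absolute value by $\tn{\bt_\tau-\btetas}\tn{\Delta_\tau}$; and the smoothness bound \eqref{assump 1b} together with the triangle inequality yields $\tn{\nabla\Lcp(\bt_\tau)+\Delta_\tau}^2\leq 2\beta^2\tn{\bt_\tau-\btetas}^2+2\tn{\Delta_\tau}^2$. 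With $\eta=\alpha/(16\beta^2)$, the deterministic portion already contracts as $(1-c\alpha^2/\beta^2)\tn{\bt_\tau-\btetas}^2$ for some absolute $c$, and the multiplicative component $\xi_0\tn{\bt_\tau-\btetas}\sqrt{d/N}$ of $\Delta_\tau$ is absorbed into this contraction precisely when $N\gtrsim\xi_0^2C_0^2d/\alpha^2$, degrading the rate to at most $1-\alpha^2/(128\beta^2)$.

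After absorption, writing $a_\tau:=\tn{\bt_\tau-\btetas}$, the recursion reduces to $a_{\tau+1}^2\leq(1-\alpha^2/(128\beta^2))a_\tau^2+b^2$ with $b$ of order $\eta C_0\sigma_0\sqrt{d/N}$. The elementary reduction $a_\tau\leq(1-\kappa/2)^\tau a_0+2b/\kappa$ (via $\sqrt{x+y}\leq\sqrt{x}+\sqrt{y}$ followed by a geometric sum) then delivers the advertised bound with fixed point $5C_0\sigma_0\sqrt{d/N}/\alpha$. The last piece is an induction on $\tau$ ensuring $\bt_\tau\in\Bc^d(\btetas,r)$ throughout, so that both the OPC inequality and the gradient perturbation bound remain in force; the side condition $\sigma_0\lesssim r\xi_0$ is exactly what is needed to guarantee that the statistical floor never exceeds $r$. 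The main technical obstacle I anticipate is the tight interplay between the multiplicative-in-$\tn{\bt_\tau-\btetas}$ part of $\Delta_\tau$ and the contraction rate: the sample-size threshold $N\gtrsim\xi_0^2C_0^2d/\alpha^2$ has to be matched carefully to $\eta=\alpha/(16\beta^2)$ so as to preserve the claimed $\alpha^2/(128\beta^2)$ rate without losing a factor in the $\sqrt{d/N}$ scaling of the residual.
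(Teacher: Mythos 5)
Your plan is correct and follows the same skeleton as the paper's proof: choose $L$ so the single-trajectory term is dominated by the finite-sample term, choose $N\gtrsim \xi_0^2C_0^2d/\alpha^2$ so the multiplicative part of the gradient error is absorbed as $(\alpha/2)\tn{\bt-\btetas}$ plus a constant $\nu\asymp C_0\sigma_0\sqrt{d/N}$, and then run a perturbed one-step analysis of gradient descent using the OPC bound, Cauchy--Schwarz on the error term, and smoothness, with $\eta=\alpha/(16\beta^2)$. The only place you diverge is the endgame. The paper (Theorem \ref{one point thm}) closes the recursion by a two-regime case analysis: when $\tn{\bdel_\tau}\geq 4\nu/\alpha$ the iteration contracts at rate $1-\alpha^2/(64\beta^2)$ in the squared norm, and when $\tn{\bdel_\tau}<4\nu/\alpha$ a direct estimate shows $\tn{\bdel_{\tau+1}}\leq 5\nu/\alpha$, so once the iterates enter the noise ball they never leave it; this also implicitly keeps all iterates inside $\Bc^d(\btetas,r)$ under $r\gtrsim \nu/\alpha$, i.e.\ $\sigma_0\lesssim r\xi_0$. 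You instead claim a uniform recursion $a_{\tau+1}^2\leq(1-\kappa)a_\tau^2+b^2$ and unroll it. That route works, but note two bookkeeping points: the cross term $2\eta\nu a_\tau$ is linear in $a_\tau$, so you must first apply Young's inequality (sacrificing a fraction of the $\eta\alpha a_\tau^2$ contraction) before you get the claimed form, and the resulting $b$ is of order $\nu/\beta$, not $\eta\nu$; moreover the correct unrolling gives the fixed point $b/\sqrt{\kappa}\asymp\nu/\alpha$, whereas the bound $2b/\kappa$ you quote would, with your stated $b$, give a looser constant (and with $b\asymp\nu/\beta$ would even pick up a spurious $\beta/\alpha$ factor). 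With those corrections your argument recovers a residual of the form $c\,C_0\sigma_0\sqrt{d/N}/\alpha$, matching the theorem up to the absolute constant, while the paper's case split is what yields the clean $5\nu/\alpha$ directly.
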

	Theorem \ref{thrm:meta} will be used in conjunction with uniform convergence of gradients to provide finite sample convergence and estimation guarantees.} For pure linear regression example (eq. \eqref{GLM} with identity $\phi$), it can be verified that this combination achieves the optimal error rate $\sigma \sqrt{d/N}$. Sections~\ref{sec:main results} and \ref{sec:application_to_RNN} accomplish this for more challenging setup of nonlinear systems. \red{In the next two sections, we provide a uniform convergence result for gradient of the empirical loss $\Lch(\bt)$ which will be used in conjunction with Theorem~\ref{thrm:meta} to provide our main result in \S\ref{sec 5}.}

\begin{figure}[t!]
	\centering

		}
	}
	\captionof{table}{\small{This table lists the core properties of the (random) state matrix in our experiments. The values are averaged over $1000$ random trials. For linear systems, the state matrix $\As$ is unstable however the closed-loop matrix $\Aps$ is stable. We also list the nonlinear spectral norms (i.e.~$\sup_{\tn{\x}=1}\tn{\phi(\As\x)}$) associated with $\As$ and $\Aps$, as a function of different leakage levels of leaky-ReLUs, which are all larger than $1$. Despite this, experiments show nonlinear systems are stable with $\Aps$ (some even with $\As$). This indicates that Definition \ref{def:non-linear stability} is indeed applicable to a broad range of systems. }}	\label{table:only one}
\end{center}
\begin{figure*}
	\centering
	\begin{subfigure}[t]{0.30\textwidth}
		\includegraphics[width=\linewidth]{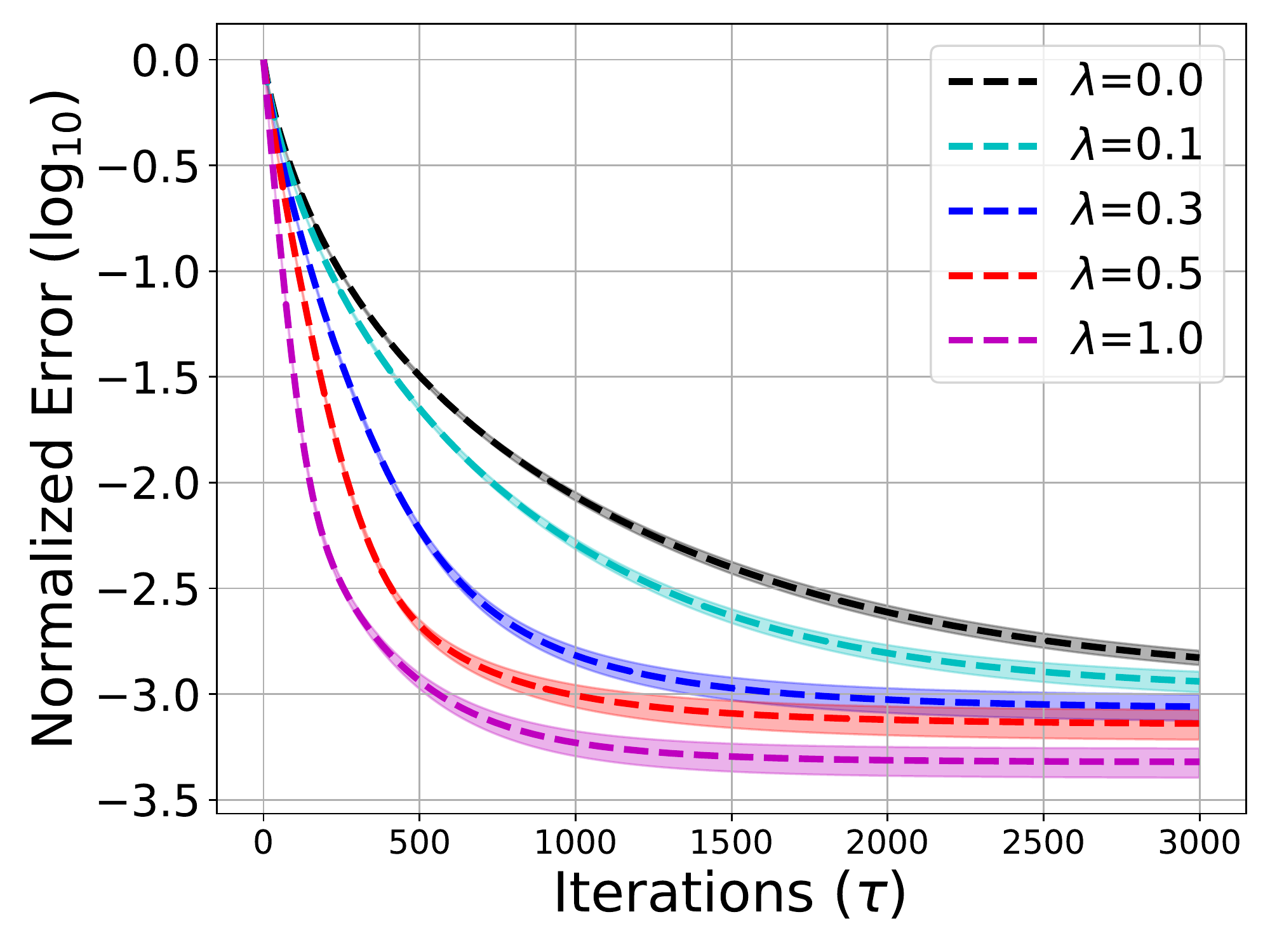}\vspace{-6pt}
		\caption{Nonlinearity}\label{fig1a}
	\end{subfigure}
	~
	\begin{subfigure}[t]{0.30\textwidth}
		\includegraphics[width=\linewidth]{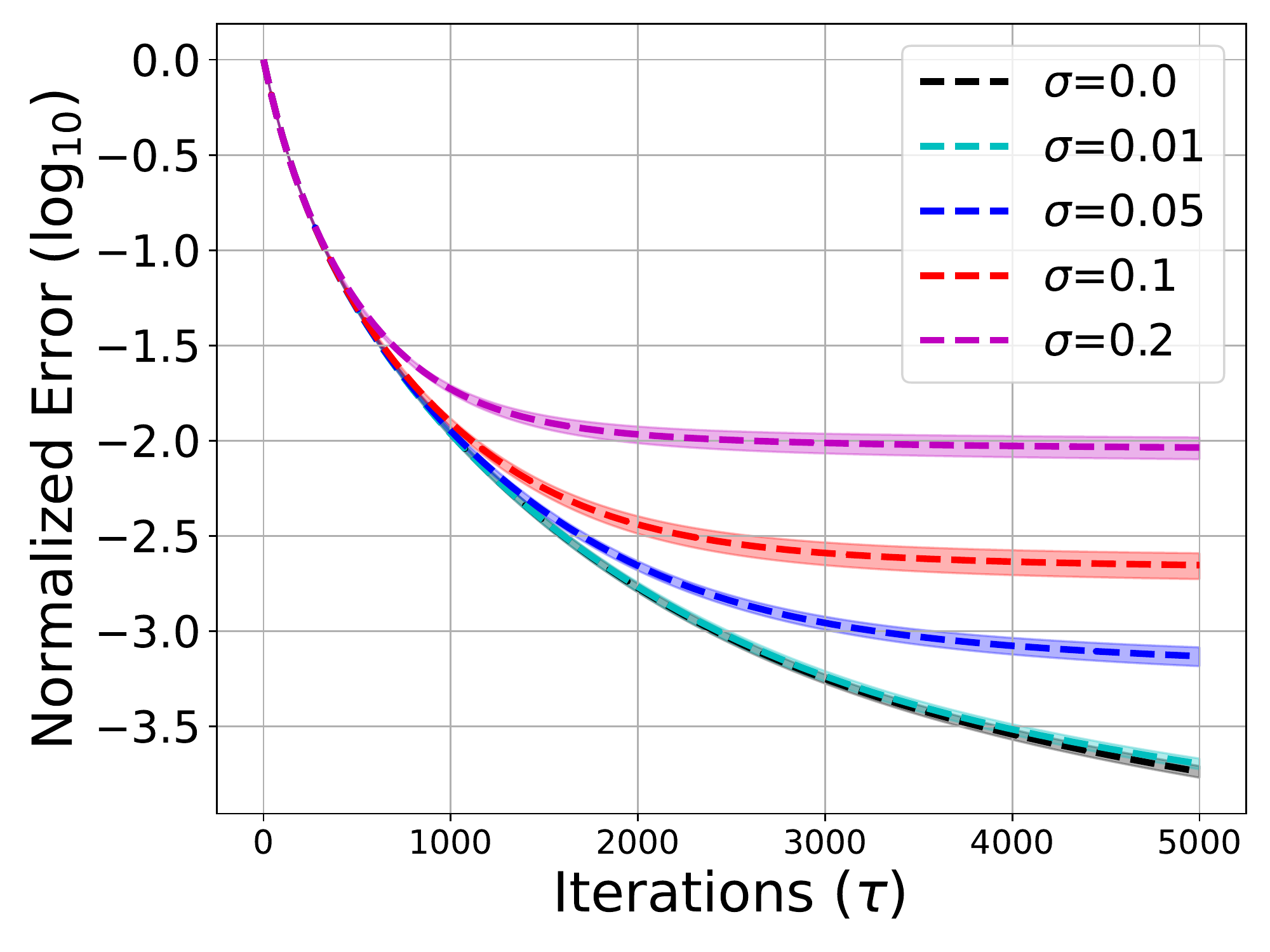}\vspace{-6pt}
		\caption{Noise level}\label{fig1b}
	\end{subfigure}
	~
	\begin{subfigure}[t]{0.30\textwidth}
		\includegraphics[width=\linewidth]{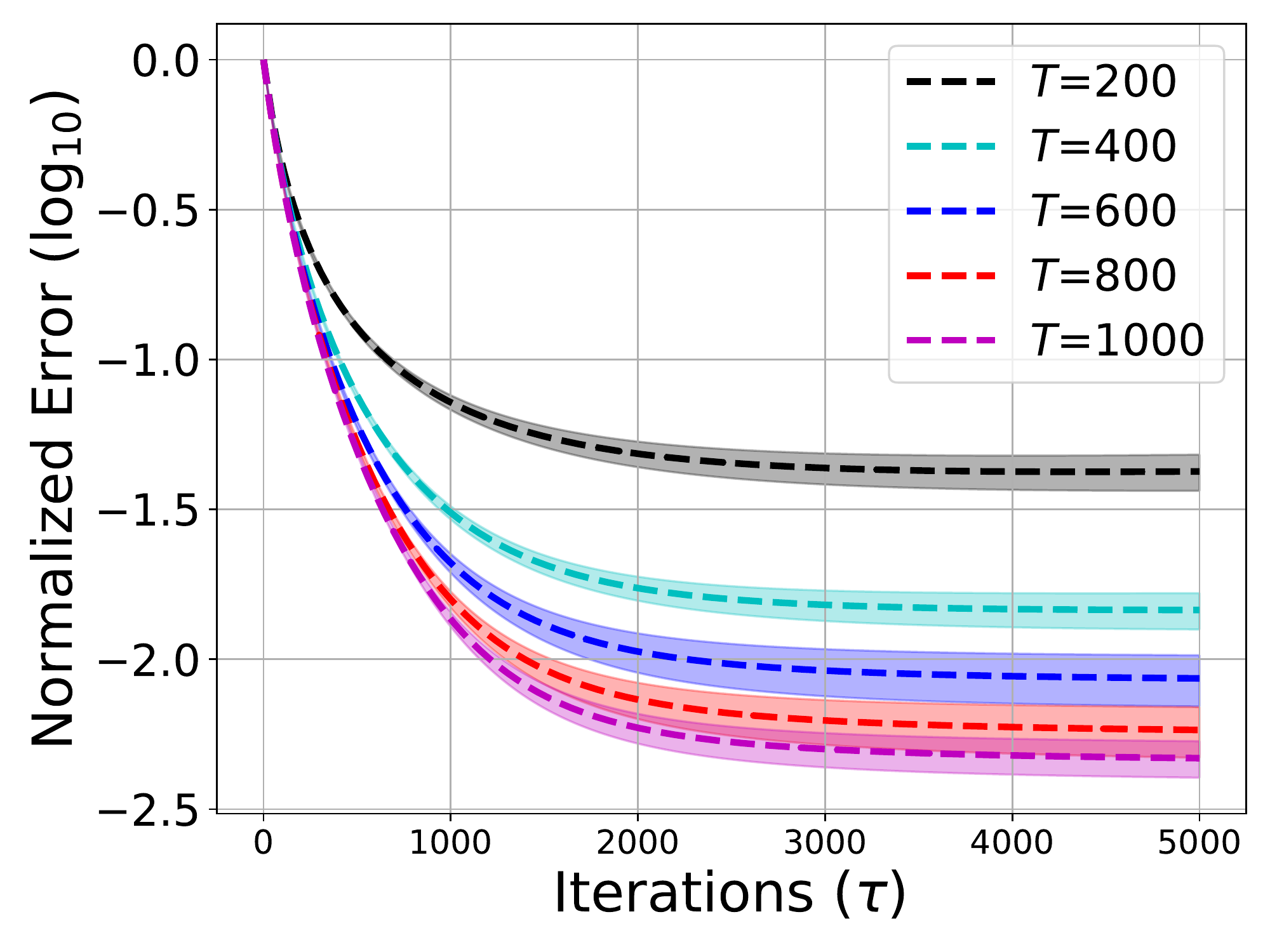}\vspace{-6pt}
		\caption{Trajectory length}\label{fig1c}
	\end{subfigure}
	\caption{We run gradient descent to learn nonlinear dynamical system governed by state equation $\h_{t+1} = \phi(\A\h_t + \B\ub_t)+\w_t$. We study the effect of nonlinearity, noise variance and trajectory length on the convergence of gradient descent. The empirical results verify what is predicted by our theory.}
	\label{fig1}
\end{figure*}
For our experiments, we choose unstable nonlinear dynamical systems~($\rho(\A) >1$) governed by nonlinear state equation $\h_{t+1} = \phi(\A\h_t + \B\ub_t)+\w_t$ with state dimension $n = 80$ and input dimension $p=50$. $\A$ is generated with $\Nc(0,1)$ entries and scaled to have its largest $10$ eigenvalues greater than $1$. $\B$ is generated with i.i.d.~$\Nc(0,1/n)$ entries. For nonlinearity, we use either softplus~($\phi(x) = \ln(1+e^x)$) or leaky-ReLU~($\max(x,\la x)$, with leakage $0\leq \lambda\leq 1$) activations. We run gradient descent with fixed learning rate $\eta = 0.1/T$, where $T$ denotes the trajectory length. \RED{We choose a noisy stabilizing policy $\mtx{K}$ for the linear system (ignoring $\phi$) and set $\ub_t = -\mtx{K}\h_t + \z_t $. Here $\mtx{K}$ is obtained by solving a discrete-time Riccati equation~(by setting rewards $\mtx{Q},\mtx{R}$ to identity) and adding random Gaussian noise with zero mean and variance 0.001 to each entry of the Riccati solution. We want to emphasize that any stabilizing policy will work here. For some nonlinear activations, as shown in Figure~\ref{fig2}, one can learn the system dynamics using a policy which is unstable for the linear system but remains stable for the nonlinear system. Lastly, $\z_t~\distas \Nc(0,\Iden_p)$ and $\w_t \distas \Nc(0,\sigma^2\Iden_n)$.}

We plot the normalized estimation error of $\A$ and $\B$ given by the formula $\tf{\A-\hat{\A}}^2/\tf{\A}^2$ (same for $\B$). Each experiment is repeated $20$ times and we plot the mean and one standard deviation. To verify our theoretical results, we study the effect of the following on the convergence of gradient descent for learning the system dynamics.

\noindent $\bullet$ {\bf{Nonlinearity:}} This experiment studies the effect of nonlinearity on the convergence of gradient descent for learning nonlinear dynamical system with leaky-ReLU activation. We run gradient descent over different values of $\lambda$~(leakage). The trajectory length is set to $T=2000$ and the noise variance is set to $\sigma^2=0.01$. In Figure \ref{fig1a}, we plot the normalized estimation error of $\A$ over different values of $\lambda$. We observe that, decreasing nonlinearity leads to faster convergence of gradient descent. 

\noindent $\bullet$ {\bf{Noise level:}} This experiment studies the effect of noise variance on the convergence of gradient descent for learning nonlinear dynamical system with softplus activation. The trajectory length is set to $T=2000$. In Figure \ref{fig1b}, we plot the normalized estimation error of $\A$ over different values of noise variance. We observe that, the gradient descent linearly converges to the ground truth plus some residual which is proportional to the noise variance as predicted by our theory.

\noindent $\bullet$ {\bf{Trajectory length:}} This experiment studies the effect of trajectory length on the statistical accuracy of learning system dynamics via gradient descent. We use softplus activation and the noise variance is set to $\sigma^2=0.01$. In Figure \ref{fig1c}, we plot the normalized estimation error of $\A$ over different values of $T$. We observe that, by increasing the trajectory length~(number of samples), the estimation gets better, verifying our theoretical results.

We remark that, we get similar plots for the input matrix $\B$. Lastly, Figure~\ref{fig2} is generated by evolving the state through $100$ timesteps and recording the Euclidean norm of $\h_t$ at each timestep. This is repeated $500$ times with $\rho(\A) > 1$ and using leaky-ReLU activations. In Figure~\ref{fig2}, we plot the mean and one standard deviation of the Euclidean norm of the states $\h_t$ over different values of $\lambda$~(leakage). The states are bounded when we use leaky-ReLU with $\la\leq 0.5$ even when the corresponding LDS is unstable. \red{This shows that the nonlinearity can help the states converge to a point in state space. However, this is not always true. For example, when $\mtx{A} = 2 \Iden$ and $\h_0$ has all entries positive. Then, using leaky-ReLU will not help the trajectory to converge.}

\begin{figure}[t!]
	\centering
	\includegraphics[width=0.4\textwidth]{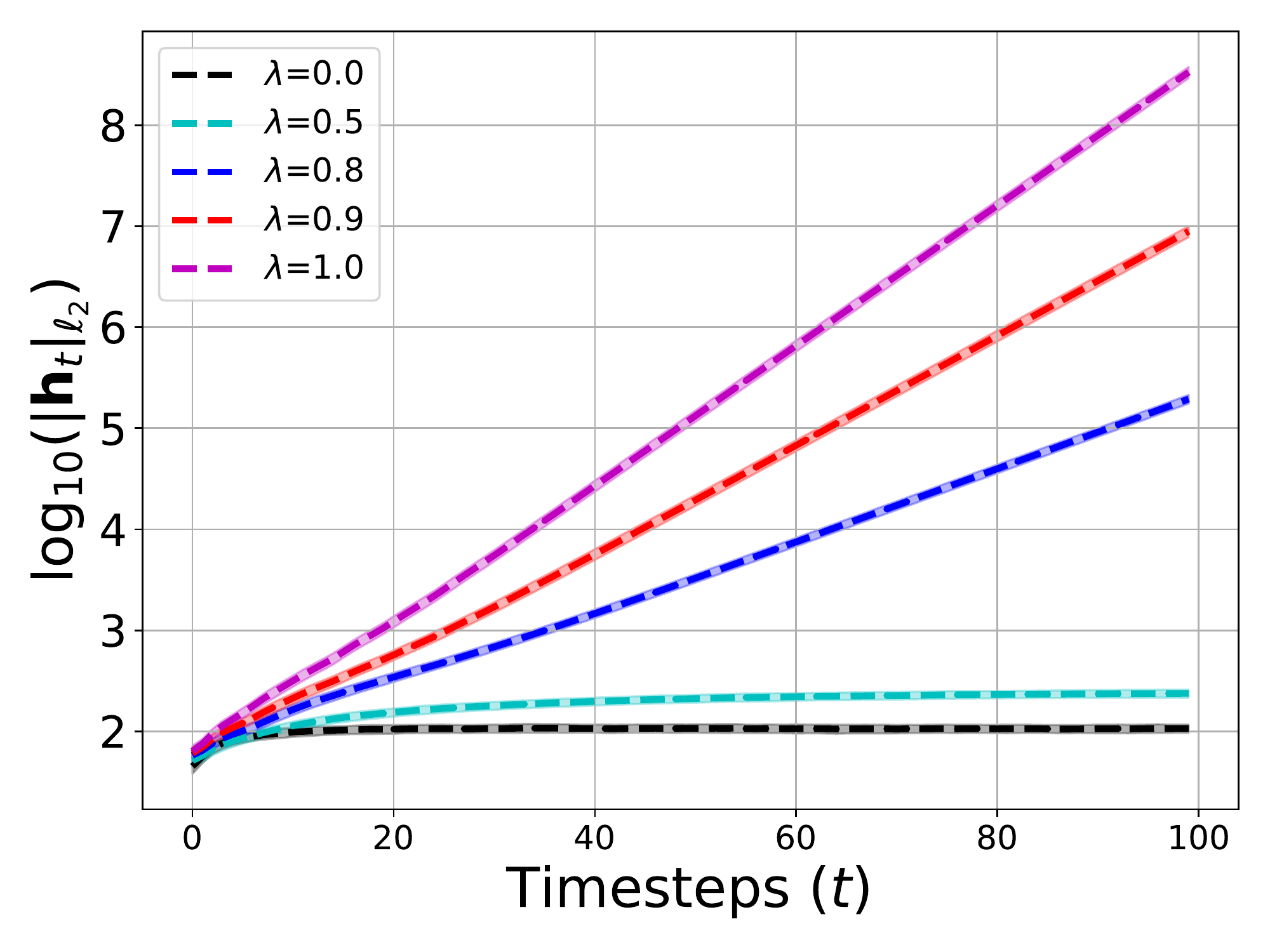}\vspace{-5pt}
	\vspace{-6pt}\caption{For a properly chosen random unstable system the state vectors diverge for LDS while they stay bounded for leaky ReLU systems with small leakage.
	}\label{fig2}
\end{figure}


\section{Related Work}\label{sec:related}
Nonlinear dynamical systems relate to the literature in control theory, reinforcement learning, and recurrent neural networks. We study nonlinear dynamical systems from optimization and learning perspective rather than control. While such problems are known to be challenging (especially under nonlinearity), there is a growing interest in understanding system identification and associated optimal control problems (e.g.~LQR) in a non-asymptotic and data-dependent fashion \cite{recht2019tour}. \red{More recently \cite{faradonbeh2017regret,faradonbeh2018finite,dean2018regret,simchowitz2018learning,simchowitz2019learning,hardt2018gradient,oymak2018non,fattahi2019learning,hazan2017learning,hazan2018spectral,sarkar2018fast, sarkar2019data,sarkar2019finite,sarkar2019near,tsiamis2019finite,tsiamis2020sample,wagenmaker2020active} explore linear system identification in great depth.} Similar to these, we also seek to provide non-asymptotic statistical guarantees for learning nonlinear dynamical systems. \cite{allen2018convergence} provides preliminary guarantees for recurrent networks (RNN) and \cite{miller2018stable} shows the role of stability in RNNs. There is also a substantial amount of work on model-free approaches~\cite{dann2015sample,zou2019finite,fazel2018global,krauth2019finite,malik2019derivative} which avoid learning the dynamics and find the optimal control input by directly optimizing over policy space.

\red{In a different line of work, \cite{singh2020learning} proposed a learning framework for trajectory planning from learned dynamics. They propose a regularizer of dynamics that promotes stabilizability of the learned model, which allows tracking reference trajectories based on estimated dynamics. Also, \cite{khosravi2020convex} and \cite{khosravi2020nonlinear} developed learning methods that exploit other control-theoretic priors. Nonetheless, none of these works characterize the sample complexity of the problem.}

Closer to our work, \cite{oymak2019stochastic,bahmani2019convex} study theoretical properties of nonlinear state equations with a goal towards understanding recurrent networks and nonlinear systems. While some high-level ideas, such as mixing-time arguments, are shared, our results \red{(a) apply to a broader class of nonlinear systems (e.g.~mild assumptions on nonlinearity), (b) utilize a variation of the spectral radius for nonlinear systems\footnote{Rather than enforcing contraction (i.e. 1-Lipschitzness)-based stability which corresponds to using spectral norm rather than spectral radius.}, (c) account for process noise, and (d) develop new statistical guarantees for the uniform convergence of the gradient of the empirical loss.} \RED{The concurrent work of~\cite{foster2020learning} provides related results for the recovery of generalized linear dynamical systems~($\h_{t+1} = \phi(\Bhetas\h_t)+\w_t$) using complementary techniques. \cite{foster2020learning} uses martingale arguments and analyze GLMtron algorithm of~\cite{Kakade_NIPS2011}, while we use mixing time arguments and analyze gradient descent.} \red{Lastly, \cite{mania2020active} proposes an active learning approach for non-asymptotic identification of nonlinear dynamical systems whose state transitions depend linearly on a known feature embedding of state-action pairs. Their adaptive method successfully achieves non-asymptotic identification by repeatedly applying trajectory planning, trajectory tracking, and re-estimation of the system.}

\red{ A very preliminary version of this work has been appeared in a workshop paper~\cite{sattar2019simple} where we provide preliminary guarantees for the identification nonlinear dynamical systems. In contrast to this work, \cite{sattar2019simple} does not provide sample complexity and statistical error bounds and learns a simple noiseless system by assuming the one-point convexity of the empirical loss~(with i.i.d. samples). On the other hand, this work provides new guarantees for non-asymptotic identification of nonlinear dynamical systems under process noise. It develops new statistical guarantees for the uniform convergence of the gradients of the empirical loss and applies the developed framework to learn nonlinear state equations $\h_{t+1} = \phi(\Bhetas\h_t)+\z_t+\w_t$. Lastly, it also provides the necessary technical framework and the associated proofs.}

Perhaps the most established technique in the statistics literature for dealing with non-independent, time-series data is the  use of mixing-time arguments~\cite{yu1994rates}. In  the  machine learning literature, mixing-time arguments have been used to develop generalization bounds~\cite{mohri2008stability,mohri2009rademacher,kuznetsov2017generalization,mcdonald2017nonparametric} which are analogous to the classical generalization bounds for i.i.d.~data. We utilize mixing-time for nonlinear stabilizable systems to connect our temporally-dependent problem to standard supervised learning task with a focus on establishing statistical guarantees for gradient descent. 

Finite sample convergence of the gradients of the empirical loss (to the population gradient) is studied by \cite{mei2018landscape,foster2018uniform}. These guarantees are not sufficient for our analysis as they only apply to problems with bounded nonlinearities and do not accurately capture the noise dependence. We address this by establishing stronger uniform convergence guarantees for empirical gradients and translate our bounds to the system identification via mixing-time/stability arguments.


\section{Conclusions}\label{sec:conclusion}
We proposed a general approach for learning nonlinear dynamical systems by utilizing stabizability and mixing-time arguments. We showed that, under reasonable assumptions, one can learn the dynamics of a nonlinear stabilized systems from a single finite trajectory. Our general approach can treat important dynamical systems, such as LDS and the setups of \cite{oymak2019stochastic,bahmani2019convex,foster2020learning} as special cases. We provided both sample size and estimation error guarantees on LDS and certain nonlinear state equations. Finally, the numerical experiments verify our theoretical findings on statistical and computational efficiency of gradient descent for learning nonlinear systems.

There are many interesting future avenues. One direction is exploring alternative approaches to mixing-time arguments. Martingale based arguments have the potential to provide tighter statistical guarantees and mitigate dependence on the spectral radius \cite{simchowitz2018learning}. Another important direction is learning better control policies by optimizing the policy function $\mtx{\pi}$ in a data driven fashion. This topic attracted significant attention for linear systems \cite{dean2018regret,recht2019tour} and led to strong regret guarantees \cite{mania2019certainty,cohen2019learning} however nonlinearity presents significant challenges. Our framework is more suitable for model based approaches (as it learns system dynamics $\btetas$) however model-free guarantees would be similarly intriguing.

\vspace{-5pt}\section*{Acknowledgements}\vspace{-5pt} This work is partially supported by NSF CNS-1932254.
{
\bibliographystyle{plain}
\bibliography{Bibfiles}
}
\clearpage
\appendix
\section{Proofs of the Main Results}\label{sec: proofs main}
\red{In this section, we present the proofs of our main results.}
\subsection{Proof of Theorem \ref{thrm:meta}}
\red{Before we begin the proof, we state a lemma to show the linear convergence of Gradient descent for minimizing an empirical loss $\Lch$ when the population loss $\Lcp$ satisfies Assumption~\ref{assum grad dominance} and the Euclidean distance between the gradients of the two losses is upper bounded as follows: $\tn{\nabla \Lch(\bt)-\nabla \Lcp(\bt)}\leq \nu+(\alpha/2)\tn{\bt-\btetas}$.}
\begin{theorem}[OPC convergence] \label{one point thm} Suppose Assumption \ref{assum grad dominance} holds. Assume for all $\bt\in\Bc^d(\btetas,r)$, $\nabla\hat{\Lc}$ satisfies 
$
\tn{\nabla \Lch(\bt)-\nabla \Lcp(\bt)}\leq \nu+(\alpha/2)\tn{\bt-\btetas}
$ and $r\geq 5\nu/\alpha$. Set learning rate $\eta=\alpha/(16\beta^2)$ and pick $\bteta_0\in\Bc^d(\btetas,r)$. All gradient descent iterates $\bt_\tau$ on ${\hat{\Lc}}$ satisfy
\begin{align}
	\tn{\bt_\tau-\btetas}\leq \big(1-\frac{\alpha^2}{128\beta^2}\big)^{\tau}\tn{\bt_0-\btetas}+\frac{5\nu}{\alpha}.
\end{align}
\end{theorem}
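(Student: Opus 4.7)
\begin{proofsk}
The plan is to run the standard gradient-descent one-step analysis, splitting the empirical gradient into the population gradient plus an error term, and then track $\|\bt_\tau-\btetas\|$ via induction, separating two regimes according to the size of this distance relative to the ``statistical radius'' $R:=5\nu/\alpha$. First I would write
\[
\bt_{\tau+1}-\btetas = (\bt_\tau-\btetas) - \eta\,\nabla\Lcp(\bt_\tau) - \eta\bigl(\nabla\Lch(\bt_\tau)-\nabla\Lcp(\bt_\tau)\bigr),
\]
expand $\|\bt_{\tau+1}-\btetas\|^2$, and use the two assumed properties of $\nabla\Lcp$ (OPC with parameter $\alpha$ and Lipschitz-like upper bound with parameter $\beta$) together with the error bound $\|\nabla\Lch-\nabla\Lcp\|\le \nu+(\alpha/2)\|\bt-\btetas\|$. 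Writing $\Delta_\tau:=\bt_\tau-\btetas$, this yields, after Cauchy--Schwarz and the triangle inequality, the key one-step inequalities
\[
\langle \Delta_\tau,\nabla\Lch(\bt_\tau)\rangle \;\ge\; \tfrac{\alpha}{2}\|\Delta_\tau\|^2 - \nu\|\Delta_\tau\|,\qquad
\|\nabla\Lch(\bt_\tau)\|\;\le\;\bigl(\beta+\tfrac{\alpha}{2}\bigr)\|\Delta_\tau\|+\nu.
\]

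Next, I would do a case split, which is the heart of the argument. In the \emph{large-error regime} $\|\Delta_\tau\|\ge R=5\nu/\alpha$, I use $\nu\le\alpha\|\Delta_\tau\|/5$ to absorb the $\nu\|\Delta_\tau\|$ term into the OPC inner-product lower bound, obtaining $\langle \Delta_\tau,\nabla\Lch(\bt_\tau)\rangle \ge (3\alpha/10)\|\Delta_\tau\|^2$ and $\|\nabla\Lch(\bt_\tau)\|^2\le 3\beta^2\|\Delta_\tau\|^2$ (using $\alpha\le\beta$). Plugging $\eta=\alpha/(16\beta^2)$ into the descent identity gives $\|\Delta_{\tau+1}\|^2\le (1-c\alpha^2/\beta^2)\|\Delta_\tau\|^2$ for a constant $c$ large enough that $\sqrt{1-c\alpha^2/\beta^2}\le 1-\alpha^2/(128\beta^2)=:\rho$. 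In the \emph{small-error regime} $\|\Delta_\tau\|<R$, I use the general one-step bound with both the positive contribution from OPC and Young's inequality on the cross term $2\eta\nu\|\Delta_\tau\|$; the choice $\eta=\alpha/(16\beta^2)$ is tuned so that the $(1-c\alpha^2/\beta^2)\|\Delta_\tau\|^2$ contraction dominates the additive $\nu^2$-terms, from which a short arithmetic check shows $\|\Delta_{\tau+1}\|\le R$.

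Finally I would assemble these into an induction proving $\|\Delta_\tau\|\le \rho^\tau\|\Delta_0\|+R$. The base case holds trivially. For the inductive step, in the large-error regime the one-step contraction gives $\|\Delta_{\tau+1}\|\le \rho\|\Delta_\tau\|\le \rho^{\tau+1}\|\Delta_0\|+\rho R\le \rho^{\tau+1}\|\Delta_0\|+R$; in the small-error regime the previous paragraph gives $\|\Delta_{\tau+1}\|\le R\le \rho^{\tau+1}\|\Delta_0\|+R$. The hypothesis $r\ge 5\nu/\alpha$ combined with $\bt_0\in\Bc^d(\btetas,r)$ ensures every iterate lies in $\Bc^d(\btetas,r)$, so the assumed gradient bounds apply throughout.

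The main obstacle is the small-error regime: a crude triangle-inequality bound only gives $\|\Delta_{\tau+1}\|\le(1+O(\eta\beta))R+\eta\nu$, which exceeds $R$, so I must retain the OPC contraction to cancel this excess. This is where the specific constant $5$ in $R=5\nu/\alpha$ is used—any smaller constant would fail to absorb the quadratic slack $2\eta\nu\|\Delta_\tau\|$, and any larger would loosen the final error term. All other steps are routine algebra once $\eta=\alpha/(16\beta^2)$ is fixed.
\end{proofsk}
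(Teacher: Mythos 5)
Your proposal is correct and follows essentially the same route as the paper's proof: the same one-step expansion of $\tn{\bt_{\tau+1}-\btetas}^2$ using OPC, smoothness, and the gradient-error bound, followed by a case split between a contraction regime and a small-error regime and an induction combining the two. The only cosmetic difference is that you split at $5\nu/\alpha$ while the paper splits at $4\nu/\alpha$ (showing iterates then stay within $5\nu/\alpha$); both thresholds make the arithmetic go through with $\eta=\alpha/(16\beta^2)$, and your unified induction also cleanly handles the iterates remaining in $\Bc^d(\btetas,r)$.
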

\begin{proof} Set $\bdel_\tau=\bt_\tau-\bt_\star$. At a given iteration $\tau$ we have that $\bdel_{\tau+1}=\bdel_\tau-\eta\nabla \Lch(\bt_\tau)$ which implies
\begin{align}
	\tn{\bdel_{\tau+1}}^2&=\tn{\bdel_\tau}^2-2\eta\li\bdel_\tau,\nabla \Lch(\bt_\tau)\ri+\eta^2\tn{\nabla \Lch(\bt_\tau)}^2.
\end{align}
Using Assumptions~\ref{assum grad dominance} and $\tn{\nabla \Lch(\bt)-\nabla \Lcp(\bt)}\leq \nu+(\alpha/2)\tn{\bt-\bt_\star}$, we have that
\begin{align}
	\li\bdel_\tau,\nabla \Lch(\bt_\tau)\ri&\geq \li\bdel_\tau,\nabla \Lc_\Dc(\bt_\tau)\ri-  | \li\bdel_\tau,\nabla \Lch(\bt_\tau)-\nabla \Lc_\Dc(\bt_\tau)\ri|, \nn \\
	&\geq \alpha\tn{\bdel_\tau}^2-(\nu+(\alpha/2)\tn{\bdel_\tau})\tn{\bdel_\tau}\geq (\alpha/2)\tn{\bdel_\tau}^2-\nu\tn{\bdel_\tau}. 
\end{align}
Similarly,
\begin{align}
	\tn{\nabla \Lch(\bt_\tau)}\leq \tn{\nabla \Lcp(\bt_\tau)} + \tn{\nabla \Lch(\bt_\tau)-\nabla \Lcp(\bt_\tau)} \leq (3/2)\beta\tn{\bdel_\tau}+\nu.
\end{align}
Suppose $\tn{\bdel_\tau}\geq 4\nu/\alpha$. Then, $(\alpha/2)\tn{\bdel_\tau}^2-\nu\tn{\bdel_\tau} \geq (\alpha/4)\tn{\bdel_\tau}^2$ and $(3/2)\beta\tn{\bdel_\tau}+\nu\leq 2\beta\tn{\bdel_\tau}$. Hence, using the learning rate $\eta=\frac{\alpha}{16\beta^2}$, we obtain
\[
\tn{\bdel_{\tau+1}}^2\leq \tn{\bdel_\tau}^2(1-\eta\alpha/2+4\eta^2\beta^2)\leq (1-\frac{\alpha^2}{64\beta^2})\tn{\bdel_\tau}^2.
\]
Now, imagine the scenario $\tn{\bdel_\tau}\leq 4\nu/\alpha$. We would like to prove that $\bdel_{\tau+1}$ satisfies a similar bound namely $\tn{\bdel_{\tau+1}}\leq 5\nu/\alpha$. This is shown as follows.
\begin{align}
	\tn{\bdel_{\tau+1}}^2&\leq \tn{\bdel_\tau}^2(1-\eta\alpha +(9/4)\eta^2\beta^2)+2\eta\nu\tn{\bdel_\tau}+\eta^2(3\nu\beta\tn{\bdel_\tau}+\nu^2), \nn \\
	&\leq (1-\frac{3\alpha^2}{64\beta^2})\tn{\bdel_\tau}^2 + \frac{\alpha}{8\beta^2}\nu\tn{\bdel_\tau} + \frac{\alpha^2}{256\beta^4}(3\nu \beta\tn{\bdel_\tau}+\nu^2), \nn \\
	& \leq (\frac{16}{\alpha^2}+\frac{1}{2\beta^2}+\frac{3\alpha}{64\beta^3}+\frac{\alpha^2}{256\beta^4})\nu^2\leq \frac{25}{\alpha^2} \nu^2, \nn
\end{align}
which implies $\tn{\bdel_{\tau+1}}\leq 5\nu/\alpha$. To get the final result observe that during initial iterations, as long as $\tn{\bdel_\tau}\geq 4\nu/\alpha$, we have
\[
\tn{\bdel_\tau}^2\leq  (1-\frac{\alpha^2}{64\beta^2})^\tau\tn{\bdel_0}^2\implies \tn{\bdel_\tau}\leq  (1-\frac{\alpha^2}{128\beta^2})^\tau\tn{\bdel_0}.
\]
After the first instance $\tn{\bdel_\tau}< 4\nu/\alpha$, iterations will never violate $\tn{\bdel_\tau}\leq 5\nu/\alpha$. The reason is
\begin{itemize}
	\item If $\tn{\bdel_\tau}< 4\nu/\alpha$: we can only go up to $5\nu/\alpha$ and $\bdel_{\tau+1}\leq  5\nu/\alpha$.
	\item If $4\nu/\alpha\leq \tn{\bdel_\tau}\leq 5\nu/\alpha$: we have to go down hence $\bdel_{\tau+1}\leq  5\nu/\alpha$.
\end{itemize}
\end{proof}
\red{In order to use Theorem~\ref{one point thm}, observe that the gradient difference $\tn{\nabla \Lch(\bt)-\nabla \Lcp(\bt)}$ can be simplified by picking a sufficiently large $L$ as follows,
\begin{align}
	\tn{\nabla \Lch(\bt)-\nabla \Lcp(\bt)}&\leq C_0(\sigma_0+\xi_0\tn{\bt-\btetas}) \sqrt{d/N} + C(\sigma+\xi\tn{\bteta-\btetas})C_\rho\rho^{L-1}, \nn \\
	&\leq 2C_0(\sigma_0+\xi_0\tn{\bt-\btetas}) \sqrt{d/N},
\end{align}
where we get the last inequality by choosing $L$ via,
\begin{align}
	\rho^{L-1} &\leq C_0(\sigma_0/\sigma \land \xi_0/\xi)\sqrt{d/N}/(CC_\rho), \nn \\ 
	\iff\quad\;\;\;   L &\geq 1 + \frac{\log((CC_\rho/C_0)\sqrt{N/d}(\sigma/\sigma_0 \lor \xi/\xi_0))}{\log(\rho^{-1})}, \nn \\
	\Longleftarrow \quad\quad    L &= \big\lceil 1 + \frac{\log((CC\rho/C_0)\sqrt{N/d}(\sigma/\sigma_0 \lor \xi/\xi_0))}{1-\rho} \big\rceil. \label{eqn: choice of L0} 
\end{align}
To proceed, if we pick $N \gtrsim \xi_0^2C_0^2 d/\alpha^2$, we obtain the following upper bound on the Euclidean distance between the gradient of $\Lch$ and $\Lcp$,
\begin{align}
	\tn{\nabla \Lch(\bt)-\nabla \Lcp(\bt)}&\leq (\alpha/2)\tn{\bt-\btetas} + C_0\sigma_0\sqrt{d/N}. 	
\end{align}
We are now ready to use Theorem~\ref{one point thm} to show the linear convergence Gradient descent algorithm for learning the unknown dynamical system~\eqref{general state eqn} by minimizing the empirical loss $\Lch$. Specifically, using Theorem~\ref{one point thm} with $\nu = C_0\sigma_0\sqrt{d/N}$, we get the statement of the Theorem. This completes the proof.}
\subsection{Proof of Theorem \ref{refined thm}}
\red{Before we begin our proof, we state a lemma to bound the Euclidean norm of a sum of i.i.d. subexponential random vectors. The following lemma is a restatement of Lemma D.7 of \cite{oymak2018learning} (by specializing it to unit ball) and it follows from an application of generic chaining tools.}
\begin{lemma} \label{subexp lem}Let $C>0$ be a universal constant. Suppose $N\geq d$. Let $(\vb_i)_{i=1}^N \in \R^d$ be i.i.d.~vectors obeying $\bmu=\E[\vb_i]$ and subexponential norm $\te{\vb_i-\bmu}\leq K$. With probability at least $1-2\exp(-c\min(t\sqrt{N},t^2))$, we have that
\begin{align}
	\tn{\frac{1}{N}\sum_{i=1}^n \vb_i-\bmu}\leq CK\frac{\sqrt{d}+t}{\sqrt{N}}.
\end{align}
Alternatively, setting $t=\tau\sqrt{d}$ for $\tau\geq 1$, with probability at least $1-2\exp(-c\tau d)$, we have
\begin{align}
	\tn{\frac{1}{N}\sum_{i=1}^N \vb_i-\bmu}\leq CK(\tau+1)\sqrt{d/N}.
\end{align}
\end{lemma}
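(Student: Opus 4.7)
The plan is to reduce the vector concentration to a scalar Bernstein bound via an $\epsilon$-net argument on the unit sphere. First, I would normalize the problem by passing to $\vb_i - \bmu$ and dividing by $K$, so WLOG $\bmu = 0$ and $\te{\vb_i} \leq 1$. Then, using the variational representation $\tn{\frac{1}{N}\sum_i \vb_i} = \sup_{\ub \in \Sc^{d-1}} \li \ub, \frac{1}{N}\sum_i \vb_i\ri$ and a standard $1/2$-net $\Nc \subset \Sc^{d-1}$ with $|\Nc| \leq 5^d$, I get
$$\tn{\tfrac{1}{N}\textstyle\sum_{i=1}^N \vb_i} \;\leq\; 2\sup_{\ub\in\Nc}\li\ub,\tfrac{1}{N}\textstyle\sum_{i=1}^N \vb_i\ri.$$

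Next, for each fixed $\ub \in \Nc$, the scalar random variables $\li \ub, \vb_i\ri$ are i.i.d., centered, and satisfy $\te{\li \ub, \vb_i\ri} \leq \te{\vb_i} \leq 1$. I would invoke Bernstein's inequality for sums of i.i.d.\ centered subexponential random variables, which yields
$$\P\!\left(\left|\tfrac{1}{N}\textstyle\sum_{i=1}^N \li\ub,\vb_i\ri\right| > s\right) \;\leq\; 2\exp\!\bigl(-c\,N\min(s^2, s)\bigr).$$
A union bound over the $5^d$ points of $\Nc$ then inflates the failure probability by a factor $5^d = \exp(d\log 5)$.

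Finally, I would set $s = C(\sqrt{d}+t)/\sqrt{N}$ for a suitably large absolute constant $C$ and split into two regimes governed by which of $s^2$ and $s$ is smaller. In the subgaussian regime $t \leq \sqrt{N}$, we have $s \leq 1$ so the Bernstein exponent equals $c\,N s^2 = cC^2(\sqrt{d}+t)^2 \geq cC^2(d + t^2)$; choosing $C$ so that $cC^2/2 \geq \log 5$ absorbs the net cardinality and yields a residual bound $2\exp(-c' t^2)$. In the subexponential regime $t > \sqrt{N}$, the exponent is $c\,Ns = cC(\sqrt{d}+t)\sqrt{N} \geq cC(t\sqrt{N} + d)$ (using $N \geq d$), and again $C$ large enough absorbs the $d\log 5$ cost, giving $2\exp(-c' t\sqrt{N})$. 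Combining the two regimes yields $2\exp(-c\min(t\sqrt{N},t^2))$, which is the first claim after undoing the normalization by $K$. For the second claim, substituting $t = \tau\sqrt{d}$ with $\tau \geq 1$ and using $N \geq d$ gives $\min(t\sqrt{N},t^2) \geq \tau d$, while the bound becomes $CK(1+\tau)\sqrt{d/N}$.

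The only delicate point is balancing the two Bernstein regimes and choosing $C$ large enough to simultaneously beat the $d\log 5$ cost of the net in both regimes; this is routine once the split at $t = \sqrt{N}$ is made explicit. Because the result is already stated as a restatement of Lemma~D.7 of~\cite{oymak2018learning}, one may alternatively cite that reference directly and omit the net/Bernstein derivation.
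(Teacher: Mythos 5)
Your proof is correct in substance, but it takes a different route from the paper: the paper does not prove this lemma at all, it simply invokes Lemma~D.7 of~\cite{oymak2018learning} (specialized to the unit ball), which is established there via generic chaining. Your covering-net-plus-Bernstein argument is a self-contained, elementary replacement: the $1/2$-net bound $\tn{\x}\leq 2\sup_{\ub\in\Nc}\li\ub,\x\ri$ with $|\Nc|\leq 5^d$, the scalar subexponential Bernstein inequality, and the union bound together reproduce exactly the $\sqrt{d}$ complexity term that generic chaining yields for the Euclidean ball, since $\gamma_2(\Sc^{d-1})\asymp\sqrt{d}$. What the chaining formulation buys is generality (arbitrary index sets with nontrivial $\gamma_1,\gamma_2$ functionals, which the cited Lemma~D.7 handles); what your argument buys is transparency and no external dependency. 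One small technical wrinkle: in your ``subgaussian regime'' $t\leq\sqrt{N}$ you assert $s\leq 1$, but with $s=C(\sqrt d+t)/\sqrt N$ and $C$ a large absolute constant you only get $s\leq 2C$, so the quadratic branch of Bernstein need not be the active one. This does not break the proof: if instead $s>1$ the exponent is $cNs=cC(\sqrt d+t)\sqrt N\geq cC(d+t^2)$ in that regime (using $N\geq d$ and $t\leq\sqrt N$), which still absorbs the $d\log 5$ net cost and leaves $c't^2$; so both branches deliver the claimed bound and only the case analysis needs to be stated more carefully. The substitution $t=\tau\sqrt d$ and the verification $\min(t\sqrt N,t^2)\geq\tau d$ for the second claim are both correct.
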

Throughout the proof of Theorem~\ref{refined thm}. we pick the constraint set $\Cc=\Bc^d(\btetas, r)$, however, these ideas are general and would apply to any set with small covering numbers (such as sparsity, $\ell_1$, rank constraints).
\begin{proof}{\bf{of uniform convergence with covering argument:}} We will use a peeling argument \cite{geer2000empirical}. Split the ball $\Bc^d(\btetas, r)$ into $P+1=\lceil\log(Kr/\sigma_0)\rceil+1$ sets via following arguments,
\begin{align*}
	\Bc^d(\btetas, r)=\cup_{i=0}^P \Sc_i\quad\text{where}\quad\Sc_i=\begin{cases}\Bc^d(\btetas,\sigma_0/K)&\text{if}\quad i=0, \\ \Bc^d(\btetas,\min(r,\e^i\sigma_0/K))-\Bc^d(\btetas,\e^{i-1}\sigma_0/K) &\text{else}. \end{cases}
\end{align*}
By Assumption \ref{assume lip}, with probability at least $1-p_0$, $\nabla \Lch_\Sc(\bt),~\nabla \Lc_{\Dc}(\bt)$ are $L_\Dc$-Lipschitz. Given a set $\Sc_i$ and the associated radius $r_i=\min(r,\e^i\sigma_0/K)$, pick an $\eps_i\leq r_i\leq r$ covering $\Nc_i$ of the set $\Sc_i\subset \Bc^d(\btetas, r_i)$ such that $\log |\Nc_i|\leq d\log (3r_i/\eps_i)$. Observe that over $\Sc_i$, by construction, we have
\begin{align}
	\max(\sigma_0/K,\tn{\bt-\bt_\star})\leq r_i\leq \max(\sigma_0/K,\e\tn{\bt-\bt_\star})\label{const eq}.
\end{align}
Applying Lemma \ref{subexp lem} together with a union bound over the $P+1$ covers and elements of the covers, we guarantee the following: Within all covers $\Nc_i$, gradient vector at all points $\bt\in \Nc_i$ satisfies
\begin{align}
	\tn{\nabla\Lch_\Sc(\bt)-\nabla\Lc_\Dc(\bt)}\lesssim (\sigma_0+Kr_i)\log(3r_i/\eps_i)\sqrt{d/N},
\end{align}
with probability at least $1-\sum_{i=0}^P\exp(-100d\log(3r_i/\eps_i))$. Given both events hold with probability at least $1-p_0-\sum_{i=0}^P\exp(-100d\log(3r_i/\eps_i))$, for any $\bt\in\Sc_i$, pick $\bt'\in \Nc_i$ so that $\tn{\bt-\bt'}\leq \eps$. This yields
\begin{align}
	&\tn{\nabla\Lch_\Sc(\bt)-\nabla\Lc_\Dc(\bt)} \nn \\
	&\quad\quad\quad\quad \leq \tn{\nabla\Lc_\Dc(\bt)-\nabla \Lc_\Dc(\bt')}+ \tn{\nabla\Lch_\Sc(\bt)-\nabla \Lch_\Sc(\bt')}+\tn{\nabla \Lc_\Dc(\bt')-\nabla \Lch_\Sc(\bt')},\nn\\
	&\quad\quad\quad\quad\lesssim \eps_i L_\Dc +(\sigma_0+Kr_i)\log(3r_i/\eps_i)\sqrt{d/N}.
\end{align}
Setting $\eps_i=\min(1, \frac{K}{L_\Dc}\sqrt{d/N})r_i$ for $0\leq i\leq P$, for any $\bt\in \Sc_i$ (and thus for any $\bt\in\Bc^d(\btetas, r)$), we have
\begin{align}
	\tn{\nabla\Lch_\Sc(\bt)-\nabla\Lc_\Dc(\bt)} &\lesssim (\sigma_0+Kr_i)\log(3(1+L_\Dc N/K))\sqrt{d/N}, \nn \\
	&\lesssim (\sigma_0+K\tn{\bt-\bt_\star})\log(3(1+L_\Dc N/K))\sqrt{d/N},
\end{align}
where we used \eqref{const eq} to get the last inequality. Finally, observing that $\log(3r_i/\eps_i)\geq 1$, the probability bound simplifies to
\begin{align}
	1-p_0-\sum_{i=0}^P\exp(-100d\log(3r_i/\eps_i))\geq 1-p_0-\log(\frac{Kr}{\sigma_0})\exp({-100d}).
\end{align}
This completes the proof.
\end{proof}
\subsection{Proof of Lemma~\ref{lemma:bounded states}}
\begin{proof}
Suppose $\h_0=0$. We claim that $\tn{\h_t}\leq \beta_+\sqrt{n}(1-\rho^t)$ with probability at least $1-p_0$, where $\beta_+:=C_\rho(\sigma+B)/(1-\rho)$. Note that, using the bounds on $\z_t,\w_t$, the state vector $\h_1$ satisfies the following bound and obeys the induction
\begin{align}
	\tn{\h_1}\leq B\sqrt{n}+\sigma\sqrt{n}\leq C_\rho\sqrt{n}(B+\sigma)=\beta_+\sqrt{n}(1-\rho^1).
\end{align}
Suppose the bound holds until $t-1$, where $ t\leq T$, and let us apply induction. First observe that $\tn{\h_{t,L}}$ obeys the same upper bound as $\tn{\h_{L}}$ by construction. Recalling~\eqref{eqn:truncation impact}, we get the following by induction
\begin{align}
	\tn{\h_t-\h_{t,t-1}} \leq C_\rho\rho^{t-1}\tn{\h_{1}} \implies \tn{\h_t}&\leq C_\rho\rho^{t-1}\tn{\h_{1}}+\tn{\h_{t,t-1}}, \nn \\
	\tn{\h_t} &\leqsym{a} C_\rho\rho^{t-1}\tn{\h_{1}} + \beta_+\sqrt{n}(1-\rho^{t-1}), \nn\\
	&\leqsym{b} \sqrt{n}(C_\rho\rho^{t-1}(\sigma+B)+ \beta_+(1-\rho^{t-1})),\nn \\
	&\leq  \beta_+\sqrt{n}(1-\rho^{t}),\label{eqn:states bound recursion}
\end{align}
where, we get (a) from the induction hypothesis and (b) from the bound on $\h_1$. This bound also implies $\tn{\h_t}\leq \beta_+\sqrt{n}$ with probability at least $1-p_0$, for all $0 \leq t \leq T$, and completes the proof.
\end{proof}

\subsection{Proof of Lemma~\ref{lemma:independence}}
\begin{proof}
By construction $\bh^{(i)}$ only depends on the vectors $\{\z_t,\w_t\}_{t = \tau+(i-1)L+1}^{\tau+iL-1}$. Note that the dependence ranges $[\tau+(i-1)L+1,\tau+iL-1]$ are disjoint intervals for each $i's$. Hence, $\{\bh^{(i)}\}_{i=1}^{N}$ are all independent of each other. To show the independence of $\{\bh^{(i)}\}_{i=1}^{N}$ and $\{\z^{(i)}\}_{i=1}^{N}$, observe that the inputs $\z^{(i)} = \z_{\tau+iL}$ have timestamps $\tau+iL$; which is not covered by $[\tau+(i-1)L+1,\tau+iL-1]$ - the dependence ranges of $\{\bh^{(i)}\}_{i=1}^{N}$. Identical argument shows the independence of $\{\bh^{(i)}\}_{i=1}^{N}$ and $\{\w^{(i)}\}_{i=1}^{N}$. Lastly, $\{\z^{(i)}\}_{i=1}^{N}$ and $\{\w^{(i)}\}_{i=1}^{N}$ are independent of each other by definition. Hence, $\{\bh^{(i)}\}_{i=1}^{N},\{\z^{(i)}\}_{i=1}^{N},\{\w^{(i)}\}_{i=1}^{N}$ are all independent of each other. This completes the proof.
\end{proof}

\subsection{Proof of Theorem~\ref{thrm:diff truncated vs actual}}
\begin{proof} 
Our proof consists of two parts. The first part bounds the Euclidean distance between the truncated and non-truncated losses while the second part bounds the Euclidean distance between their gradients.

\noindent $\bullet$ {\bf{Convergence of loss:}}
To start, recall $\Lch(\bteta)$ and $\Ltr(\bteta)$ from~\eqref{eqn:minimization problem} and \eqref{eqn:truncated loss} respectively. The distance between them can be bounded as follows.
\begin{align}
	&|\Lch(\bteta) - \Ltr(\bteta)| \nn \\ 
	&\quad\quad\quad = |\frac{1}{2(T-L)}\sum_{t=L}^{T-1}\tn{\h_{t+1}-\bphi(\h_t,\z_t;\bteta)}^2 -\frac{1}{2(T-L)}\sum_{t=L}^{T-1}\tn{\h_{t+1,L}-\bphi(\h_{t,L-1},\z_t;\bteta)}^2|, \nn\\
	&\quad\quad\quad \leq \frac{1}{2(T-L)}\sum_{t=L}^{T-1}|\tn{\h_{t+1}-\bphi(\h_t,\z_t;\bteta)}^2-\tn{\h_{t+1,L}-\bphi(\h_{t,L-1},\z_t;\bteta)}^2|, \nn\\
	&\quad\quad\quad \leq\frac{1}{2} \max_{L\leq t\leq (T-1)}| \tn{\h_{t+1}-\bphi(\h_t,\z_t;\bteta)}^2-\tn{\h_{t+1,L}-\bphi(\h_{t,L-1},\z_t;\bteta)}^2|, \nn\\
	&\quad\quad\quad\leq\frac{1}{2}\tn{\bphi(\h,\z;\btetas)+\w-\bphi(\h,\z;\bteta)}^2-\tn{\bphi(\bh,\z;\btetas)+ \w-\bphi(\bh,\z;\bteta)}^2|, \nn\\
	&\quad\quad\quad\leq \frac{1}{2} (|\tn{\bphi(\h,\z;\btetas)+\w-\bphi(\h,\z;\bteta)}-\tn{\bphi(\bh,\z;\btetas)+\w-\bphi(\bh,\z;\bteta)}|) \nn\\
	&\quad\quad\quad ~~~~~(|\tn{\bphi(\h,\z;\btetas)+\w-\bphi(\h,\z;\bteta)}+\tn{\bphi(\bh,\z;\btetas)+\w-\bphi(\bh,\z;\bteta)}|),
\end{align}
where, $(\h,\bh, \z,\w)$ corresponds to the maximum index~($\bh$ be the truncated state) and we used the identity $a^2-b^2 = (a+b)(a-b)$. Denote the $k_{\rm th}$ element of $\bphi(\h,\z;\bteta)$ by $\bphi_k(\h,\z;\bteta)$ and that of $\w$ by $w_k$ for $1\leq k\leq n$. To proceed, using Mean-value Theorem, with probability at least $1-p_0$, we have
\begin{align}
	|\bphi_k(\h,\z;\btetas)-\bphi_k(\h,\z;\bteta)+w_k| &\leq \sigma+ \sup_{\btetat \in [\bteta,\btetas]}\tn{\nabla_{\bteta}\bphi_k(\h,\z;\btetat)}\tn{\bteta-\btetas}, \nn \\
	& \leq \sigma+C_{\bphi}\tn{\bteta-\btetas}\quad \text{for all} \quad 1 \leq k \leq n,\label{perturb beta} \\
	\implies \tn{\bphi(\h,\z;\btetas)+\w-\bphi(\h,\z;\bteta)} &\leq \sqrt{n}\max_{1\leq k\leq n}|\bphi_k(\h,\z;\btetas)-\bphi_k(\h,\z;\bteta)+w_k|,\nn\\
	& \leq \sqrt{n}(\sigma+C_{\bphi}\tn{\bteta-\btetas}).
\end{align}
This further implies that, with probability at least $1-p_0$, we have 
\begin{align}
	&\frac{1}{2}|\tn{\bphi(\h,\z;\btetas)+\w-\bphi(\h,\z;\bteta)}+\tn{\bphi(\bh,\z;\btetas)+\w-\bphi(\bh,\z;\bteta)}| \nn \\
	&\quad\quad\quad\quad\quad\quad\quad\quad\quad\quad\quad\quad\quad\quad\quad\quad\quad\quad\quad\quad\quad\quad\quad\quad \leq \sqrt{n}(\sigma+C_{\bphi}\tn{\bteta-\btetas}).\label{sum bound}
\end{align}
To conclude, applying the triangle inequality and using the Mean-value Theorem, the difference term $\Delta:=|\tn{\bphi(\h,\z;\btetas)+\w-\bphi(\h,\z;\bteta)}-\tn{\bphi(\bh,\z;\btetas)+\w-\bphi(\bh,\z;\bteta)}|$ is bounded as follows,
\begin{align}
	\Delta &\leq \tn{\bphi(\h,\z;\btetas)-\bphi(\h,\z;\bteta)- \bphi(\bh,\z;\btetas)+\bphi(\bh,\z;\bteta)}, \nn \\
	&\leq \tn{\bphi(\h,\z;\bteta)-\bphi(\bh,\z;\bteta)}+ \tn{\bphi(\h,\z;\btetas)-\bphi(\bh,\z;\btetas)}, \nn \\
	&\leq\sup_{\tilde{\h} \in [\h,\bh]}\|\nabla_{\h}\bphi(\tilde{\h},\z;\bteta)\|\tn{\h-\bh} +\sup_{\tilde{\h} \in [\h,\bh]} \|\nabla_{\h}\bphi(\tilde{\h},\z;\btetas)\|\tn{\h-\bh}, \nn \\
	&\leqsym{a} B_{\bphi}C_\rho\rho^{L-1}\beta_+\sqrt{n}+B_{\bphi}C_\rho\rho^{L-1}\beta_+\sqrt{n}, \nn \\
	& =2B_{\bphi}C_\rho\rho^{L-1}\beta_+\sqrt{n},
\end{align}
with probability at least $1-p_0$, where we get (a) from~\eqref{eqn:truncation impact} and the initial assumption that $\|\nabla_{\h}\bphi(\h,\z;\bteta)\| \leq B_{\bphi}$. Multiplying this bound with \eqref{sum bound} yields the advertised bound on the loss difference.

\noindent $\bullet$ {\bf{Convergence of gradients:}}	
Next, we take the gradients of $\Lch(\bteta)$ and $\Ltr(\bteta)$ to bound Euclidean distance between them. We begin with
\begin{align}
	\tn{\nabla{\Lch(\bteta)}-\nabla\Ltr(\bteta)}&\leq \frac{1}{T-L}\sum_{t=L}^{T-1}\|\nabla_{\bteta}{\bphi(\h_t,\z_t;\bteta)}^\top(\bphi(\h_t,\z_t;\bteta)-\h_{t+1})\nn \\
	&\quad\quad\quad\quad\quad-\nabla_{\bteta}{\bphi(\h_{t,L-1},\z_t;\bteta)}^\top(\bphi(\h_{t,L-1},\z_t;\bteta)-\h_{t+1,L})\|_{\ell_2}, \nn \\
	& \leq \max_{L\leq t\leq (T-1)} \|\nabla_{\bteta}{\bphi(\h_t,\z_t;\bteta)}^\top(\bphi(\h_t,\z_t;\bteta)-\h_{t+1}) \nn\\ &\quad\quad\quad\quad\quad-\nabla_{\bteta}{\bphi(\h_{t,L-1},\z_t;\bteta)}^\top(\bphi(\h_{t,L-1},\z_t;\bteta)-\h_{t+1,L})\|_{\ell_2}, \nn \\
	& \leq \|\nabla_{\bteta}{\bphi(\h,\z;\bteta)}^\top(\bphi(\h,\z;\bteta)-\bphi(\h,\z;\btetas)-\w)\nn\\ &\quad\quad\quad\quad\quad-\nabla_{\bteta}{\bphi(\bh,\z;\bteta)}^\top(\bphi(\bh,\z;\bteta)-\bphi(\bh,\z;\btetas)-\w)\|_{\ell_2}, \nn \\
	&\leq \sqrt{n}\Lambda ,\label{eqn:grad diff first}
\end{align}
where $(\h,\bh, \z,\w)$ corresponds to the maximum index~($\bh$ be the truncated state) and we define $\Lambda$ to be the entry-wise maximum
\begin{align}
	\Lambda&:= \max_{1\leq k\leq n}\|(\bphi_k(\h,\z;\bteta)-\bphi_k(\h,\z;\btetas)-w_k)\nabla_{\bteta}{\bphi_k(\h,\z;\bteta)} \nn \\
	&\quad\quad\quad-(\bphi_k(\bh,\z;\bteta)-\bphi_k(\bh,\z;\btetas)-w_k)\nabla_{\bteta}{\bphi_k(\bh,\z;\bteta)}\|_{\ell_2},
\end{align}
where $\bphi_k(\h,\z;\bteta)$ denotes the $k_{\rm th}$ element of $\bphi(\h,\z;\bteta)$. Without losing generality, suppose $k$ is the coordinate achieving maximum value and attaining $\Lambda$. Note that $\Lambda=\alpha(\h)-\alpha(\bar{\h})$ for some function $\alpha$, hence, using Mean-value Theorem as previously, we bound $\Lambda\leq \sup_{\htt \in [\h,\bh]} \|\nabla_{\h}\alpha(\htt)\|\tn{\h-\bar{\h}}$ as follows,
\begin{align}
	\Lambda  & \leq \sup_{\htt \in [\h,\bh]}\|(\bphi_k(\htt,\z;\bteta)-\bphi_k(\htt,\z;\btetas)-w_k)\nabla_{\h}\nabla_{\bteta}{\bphi_k(\htt,\z;\bteta)} \nn \\
	&\quad\quad\quad\quad+\nabla_{\bteta}{\bphi_k(\htt,\z;\bteta)}(\nabla_{\h}\bphi_k(\htt,\z;\bteta)^\top-\nabla_{\h}\bphi_k(\htt,\z;\btetas)^\top)\|\tn{\h -\bh}, \nn \\
	& \leq \sup_{\htt \in [\h,\bh]} \big[|\bphi_k(\htt,\z;\bteta)-\bphi_k(\htt,\z;\btetas)-w_k|\|\nabla_{\h}\nabla_{\bteta}{\bphi_k(\htt,\z;\bteta)}\| \nn \\
	& \quad \quad \quad\quad +\tn{\nabla_{\bteta}{\bphi_k(\htt,\z;\bteta)}}\tn{\nabla_{\h}\bphi_k(\htt,\z;\bteta)-\nabla_{\h}\bphi_k(\htt,\z;\btetas)} \big] \tn{\h -\bh}, \nn \\
	&\leqsym{a}  \sup_{\htt \in [\h,\bh]} \big[D_{\bphi}|\bphi_k(\htt,\z;\bteta)-\bphi_k(\htt,\z;\btetas)-w_k|\nn \\ &\quad\quad\quad\quad+C_{\bphi}\tn{\nabla_{\h}\bphi_k(\htt,\z;\bteta)-\nabla_{\h}\bphi_k(\htt,\z;\btetas)}\big]\tn{\h -\bh}, \label{eqn:grad diff second}
\end{align} 
where we get (a) from the initial assumptions $\tn{\nabla_{\bteta}{\bphi_k(\h,\z;\bteta)}} \leq C_{\bphi}$ and $\|\nabla_{\h}\nabla_{\bteta}{\bphi_k(\h,\z;\bteta)}\| \leq D_{\bphi}$. To proceed, again using Mean-value Theorem, we obtain
\begin{align} \sup_{\htt \in [\h,\bh]}\tn{\nabla_{\h}\bphi_k(\htt,\z;\bteta)-\nabla_{\h}\bphi_k(\htt,\z;\btetas)} &\leq \sup_{\substack{\htt \in [\h,\bh] \\ \btetat \in [\bteta,\btetas] }}\|\nabla_{\bteta}\nabla_{\h}\bphi_k(\htt,\z;\btetat)\|\tn{\bteta-\btetas}, \nn \\
	&\leq D_{\bphi}\tn{\bteta-\btetas}. \label{eqn:grad diff put2}
\end{align}
Finally, plugging the bounds from \eqref{perturb beta} and \eqref{eqn:grad diff put2} into \eqref{eqn:grad diff second}, with probability at least $1-p_0$, we have
\begin{align}
	\tn{\nabla{\Lch(\bteta)}-\nabla\Ltr(\bteta)} &\leq \sqrt{n}\Lambda,\nn \\ & \leq \sqrt{n}(D_{\bphi}(\sigma+C_{\bphi}\tn{\bteta-\btetas})+C_{\bphi}D_{\bphi}\tn{\bteta-\btetas})\tn{\h-\bar{\h}}, \nn \\
	&\leq 2n\beta_+ C_\rho\rho^{L-1}D_{\bphi}(\sigma+C_{\bphi}\tn{\bteta-\btetas}),
\end{align}
This completes the proof.
\end{proof}


\subsection{Proof of Theorem~\ref{combined thm}}
\begin{proof}
\red{Theorem~\ref{combined thm} is a direct consequence of combining the results from Sections~\ref{sec:accu SLWG} and \ref{sec: truncated results}.}
To begin our proof, consider the truncated sub-trajectory loss $\ltr_\tau$ from Definition~\ref{def:empirical loss} which also implies that $\Lcp(\bteta) = \E[\ltr_\tau(\bteta)]$. Hence, $\ltr_\tau$ it is a finite sample approximation of the Auxiliary loss $\Lcp$. 
To proceed, using Theorem~\ref{refined thm} with Assumptions~\ref{assume lip} and \ref{assume grad} on the Auxiliary loss $\Lcp$ and its finite sample approximation $\ltr_\tau$, with probability at least $1-Lp_0-L\log(\frac{Kr}{\sigma_0})\exp({-100d})$, for all $\bt\in\Bc^d(\btetas,r)$, we have
\begin{align}
\tn{\nabla\ltr_\tau(\bt)-\nabla\Lc_\Dc(\bt)}\leq c_0(\sigma_0+K\tn{\bt-\bt_\star})\log(3(L_\Dc N/K+1))\sqrt{d/N},
\end{align}
for all $0 \leq \tau \leq L-1$, where we get the advertised probability by union bounding over all $0 \leq \tau \leq L-1$. Next, observe that the truncated loss $\Ltr$ can be split into (average of) $L$~sub-trajectory losses via $ \Ltr(\bteta) = \frac{1}{L}\sum_{\tau=0}^{L-1}\ltr_\tau(\bteta)$. This implies that, with probability at least $1-Lp_0-L\log(\frac{Kr}{\sigma_0})\exp({-100d})$, for all $\bt\in\Bc^d(\btetas,r)$, we have
\begin{align}
\tn{\nabla\Ltr(\bteta) - \nabla\Lcp(\bteta)} &\leq \frac{1}{L}\sum_{\tau=0}^{L-1} \tn{\nabla\ltr_\tau(\bteta)- \nabla\Lcp(\bteta)}, \nn \\ 
&\leq \max_{0\leq \tau \leq(L-1)} \tn{\nabla\ltr_\tau(\bteta)- \nabla\Lcp(\bteta)}, \nn\\
&\leq c_0(\sigma_0+K\tn{\bt-\bt_\star})\log(3(L_\Dc N/K+1))\sqrt{d/N}.
\end{align}
Combining this with Theorem~\ref{thrm:diff truncated vs actual}, with the advertised probability, for all $\bt\in\Bc^d(\btetas,r)$, we have
\begin{align}
&\tn{\Lch(\bteta) - \Lcp(\bteta) } \nn \\
&\quad\quad \leq \tn{\Ltr(\bteta) - \Lcp(\bteta) } + \tn{\Lch(\bteta) - \Ltr(\bteta) }, \nn \\
&\quad\quad \leq c_0(\sigma_0+K\tn{\bt-\bt_\star})\log(3(L_\Dc N/K+1))\sqrt{d/N} + 2n\beta_+ C_\rho\rho^{L-1}D_{\bphi}(\sigma+C_{\bphi}\tn{\bteta-\btetas}). \nn
\end{align}
To simplify the result further, we pick $L$ to be large enough so that the second term in the above inequality becomes smaller than or equal to the first one. This is possible when
\begin{align}
&2n\beta_+ C_\rho\rho^{L-1}D_{\bphi} \leq c_0(\sigma_0/\sigma \land K/C_{\bphi})\log(3(L_\Dc N/K+1))\sqrt{d/N}, \nn \\ 
\iff \quad &\rho^{L-1} \leq (\sigma_0/\sigma \land K/C_{\bphi})\frac{c_0\log(3(L_\Dc N/K+1))\sqrt{d/N}}{2n\beta_+ C_\rho D_{\bphi}}, \nn \\
\iff \quad & L \geq 1 + \big[\log\big(\frac{2 n\beta_+ C_\rho D_{\bphi}\sqrt{N/d}}{c_0\log(3(L_\Dc N/K+1))} \big) + \log(\sigma/\sigma_0 \lor C_{\bphi}/K)\big]/\log(\rho^{-1}), \nn \\
\Longleftarrow \quad\;  & L = \big\lceil 1 + \frac{\log((2/c_0) n\beta_+ C_\rho D_{\bphi}\sqrt{N/d}(\sigma/\sigma_0 \lor C_{\bphi}/K))}{1 - \rho} \big\rceil. \label{eqn: choice of L} 
\end{align}
Hence, picking $L$ via \eqref{eqn: choice of L}, with probability at least $1-2Lp_0-L\log(\frac{Kr}{\sigma_0})\exp({-100d})$, for all $\bteta \in \Bc^d(\btetas,r)$, we have
\begin{align}
\tn{\nabla\Lch(\bt)-\nabla\Lc_\Dc(\bt)}\leq 2c_0(\sigma_0+K\tn{\bt-\bt_\star})\log(3(L_\Dc N/K+1))\sqrt{d/N}.
\end{align}
This completes the proof.
\end{proof}
\subsection{Proof of Theorem~\ref{thrm:main thrm}}
\begin{proof} \red{The proof of Theorem~\ref{thrm:main thrm} readily follows from combining our gradient convergence result with Theorem~\ref{one point thm}. We begin by picking $N \geq 16c_0^2 K^2 \log^2(3(L_\Dc N/K+1))d/\alpha^2$ in Theorem~\ref{combined thm} to obtain
\begin{align}
	\tn{\nabla{\Lch(\bheta)}-\nabla{\Lcp(\bheta)}} 
	\leq (\alpha/2)\tn{\bteta-\btetas} +  2c_0\sigma_0\log(3(L_\Dc N/K+1))\sqrt{d/N}, \label{eqn:grad diff proof}
\end{align}
with probability at least $1-2Lp_0-L\log(\frac{Kr}{\sigma_0})\exp({-100d})$ for all $\bheta \in \Bc^d(\bhetas, r)$. We then use Theorem~\ref{one point thm} with $\nu = 2c_0\sigma_0\log(3(L_\Dc N/K+1))\sqrt{d/N}$ and set $c = 10c_0$ to get the statement of the theorem. Lastly, observe that by choosing $N \geq 16c_0^2K^2 \log^2(3(L_\Dc N/K+1))d/\alpha^2$, the statistical error rate of our non-asymptotic identification can be upper bounded as follows,}
\begin{align}
\frac{5\nu}{\alpha} = \frac{10c_0\sigma_0}{\alpha}\log(3(L_\Dc N/K+1))\sqrt{d/N} \lesssim \sigma_0/K.
\end{align}
Therefore, to ensure that Theorem \ref{one point thm} is applicable, we assume that the noise is small enough, so that $\sigma_0 \lesssim r K$. This completes the proof.
\end{proof}
\subsection{Proof of Theorem~\ref{thrm:main theorem separable}}
\begin{proof}
\red{Our proof strategy is similar to that of Theorem~\ref{thrm:main thrm}, that is, we first show the gradient convergence result for each component $\Lch_k$ of the empirical loss $\Lch$. We then use Theorem~\ref{one point thm} to learn the dynamics of separable dynamical systems using finite samples obtained from a single trajectory.}

\noindent $\bullet$ \red{{\bf{Uniform gradient convergence:}}
In the case of separable dynamical systems, Assumption~\ref{assume lip} states that, there exist numbers $L_\Dc,p_0>0$ such that with probability at least $1-p_0$ over the generation of data, for all pairs $\bteta,\bteta' \in \Bc^d(\btetas,r)$, the gradients of empirical and population losses in \eqref{eqn:LS_and_LD sep} satisfy 
\begin{align}
	\max(\tn{\nabla\Lc_{k,\Dc}(\bt_k)-\nabla\Lc_{k,\Dc}(\bt_k')},\tn{\nabla\Lch_{k,\cal{S}}(\bt_k)-\nabla\Lch_{k,\cal{S}}(\bt_k')})\leq L_\Dc\tn{\bt_k-\bt_k'}, \label{proof of (a)}
\end{align}
for all $1 \leq k \leq n$. Similarly, Assumption~\ref{assume grad} states that, there exist scalars $K,\sigma_0 > 0$ such that, given $\x \sim \Dc$, at any point $\bt$, the subexponential norm of the gradient is upper bounded as a function of the noise level $\sigma_0$ and distance to the population minimizer via
\begin{align}
	\te{\nabla\Lc_k(\bt_k,\x)-\E[\nabla\Lc_k(\bt_k,\x)}
	\leq \sigma_0 + K\tn{\bt_k-\bteta_k^{\star}} \quad \text{for all} \quad 1 \leq k \leq n. \label{proof of (b)}
\end{align}
To proceed, using Theorem~\ref{refined thm} with Assumptions~\ref{assume lip} and \ref{assume grad} replaced by \eqref{proof of (a)} and \eqref{proof of (b)} respectively, with probability at least $1-np_0-n\log(\frac{Kr}{\sigma_0})\exp({-100 \bad})$, for all $\bt \in\Bc^d(\btetas,r)$ and $1 \leq k \leq n$, we have
\begin{align}
	\tn{\nabla\Lch_{k,\cal{S}}(\bt_k)-\nabla\Lc_{k,\Dc}(\bt_k)}\leq c_0(\sigma_0+K\tn{\bt_k-\bteta_k^{\star}})\log(3(L_\Dc N/K+1))\sqrt{\bad/N}. \label{uniform conv using (a) and (b)}
\end{align}}

\noindent $\bullet$ \red{{\bf{Small impact of truncation:}}
Next, we relate the gradients of the single trajectory loss $\Lch_k$ in~\eqref{eqn:empirical loss sep} and the multiple trajectory loss $\Ltr_k$~(defined below). Similar to~\eqref{eqn:empirical loss sep}, the truncated loss for separable dynamical systems is alternately given by 
\begin{align}
	\Ltr(\bteta) = \sum_{k=1}^n \Ltr_k(\bteta_k),
	\text{where}\;\Ltr_k(\bteta_k) 
	:= \frac{1}{2(T-L)}\sum_{t=L}^{T-1}(\h_{t+1,L}[k]-\bphi_k(\h_{t,L-1},\z_{t};\bteta_k))^2,\label{eqn:truncated loss sep}
\end{align}
where $\h_{t,L}[k]$ denotes the $k_{\rm th}$ element of the truncated vector $\h_{t,L}$. We remark that Assumptions~\ref{assum stability} and \ref{ass bounded} are same for both non-separable and separable dynamical systems. Therefore, repeating the same proof strategy of Theorem~\ref{thrm:diff truncated vs actual}, with $\Ltr$ and $\Lch$ replaced by $\Ltr_k$ and $\Lch_k$ respectively, with probability at least $1-np_0$, for all $\bteta \in \Bc^d(\btetas, r)$ and $1 \leq k \leq n$, we have 
\begin{align}
	\tn{\nabla{\Lch_k(\bteta_k)}-\nabla\Ltr_k(\bteta_k)} &\leq 2n\beta_+ C_\rho\rho^{L-1}D_{\bphi}(\sigma+C_{\bphi}\tn{\bteta_k-\bteta_k^{\star}}).\label{truncation impact sep}
\end{align}}

\noindent $\bullet$ \red{{\bf{Combined result:}} Next, we combine \eqref{uniform conv using (a) and (b)} and \eqref{truncation impact sep} to obtain a uniform convergence result for the gradient of the empirical loss $\Lch_k$. Observe that, similar to $\Ltr$, the truncated loss $\Ltr_k$ can also be split into $L$~truncated sub-trajectory losses~(see the proof of Theorem~\ref{combined thm}). Each of these truncated sub-trajectory loss is statistically identical to $\Lch_{k,\Sc}$. Therefore, using a similar line of reasoning as we did in the proof of Theorem~\ref{combined thm}, with probability at least $1-Lnp_0-Ln\log(\frac{Kr}{\sigma_0})\exp({-100 \bad})$, for all $\bt \in \Bc^d(\btetas,r)$ and $1 \leq k \leq n$, we have
\begin{align}
	\tn{\nabla\Ltr_k(\bt_k)-\nabla\Lc_{k,\Dc}(\bt_k)}\leq c_0(\sigma_0+K\tn{\bt_k-\bteta_k^{\star}})\log(3(L_\Dc N/K+1))\sqrt{\bad/N}. \label{eqn:L_trk vs L_Dk}
\end{align}
Combining} this with \eqref{truncation impact sep}, with probability at least $1-Lnp_0-Ln\log(\frac{Kr}{\sigma_0})\exp({-100 \bad})$, for all $\bt \in\Bc^d(\btetas,r)$ and $1 \leq k \leq n$, we have
\begin{align}
&\tn{\nabla\Lch_k(\bteta_k) - \nabla\Lc_{k,\Dc}(\bteta_k)}\nn \\
&\quad\leq \tn{\nabla\Ltr_k(\bteta_k) - \nabla\Lc_{k,\Dc}(\bteta_k)} + \tn{\nabla\Lch_k(\bteta_k) - \nabla\Ltr_k(\bteta_k)},  \nn \\ 
&\quad \leq c_0 (\sigma_0+K\tn{\bt_k-\bteta_k^{\star}})\log(3(L_\Dc N/K+1))\sqrt{\bad/N} + 2n\beta_+ C_\rho\rho^{L-1}D_{\bphi}(\sigma+C_{\bphi}\tn{\bteta_k-\bteta_k^{\star}}). \nn	\label{eqn:result only1}
\end{align}
To simplify the result further, we pick $L$ to be large enough so that the second term in the above inequality becomes smaller than or equal to the first one. This is possible when
\begin{align}
L = \big\lceil 1 + \frac{\log((2/c_0)n\beta_+ C_\rho D_{\bphi}\sqrt{N/\bad}(\sigma/\sigma_0 \lor C_{\bphi}/K))}{1-\rho} \big\rceil.
\end{align}
Hence, picking $L$ as above, with probability at least $1-2Lnp_0-Ln\log(\frac{Kr}{\sigma_0})\exp({-100\bad})$, for all $\bt \in \Bc^d(\btetas,r)$ and $1 \leq k \leq n$, we have
\begin{align}
\tn{\nabla\Lch_k(\bt_k)-\nabla\Lc_{k,\Dc}(\bt_k)} &\leq  2c_0(\sigma_0+K\tn{\bt_k-\bteta_k^{\star}})\log(3(L_\Dc N/K+1))\sqrt{\bad/N}, \nn \\
&\leqsym{a}(\alpha/2)\tn{\bt_k-\bteta_k^{\star}} + 2c_0\sigma_0\log(3(L_\Dc N/K+1))\sqrt{\bad/N}, \label{uniform conv sep final}
\end{align}
where we get (a) by choosing $N \geq 16c_0^2K^2 \log^2(3(L_\Dc N/K+1))\bad/\alpha^2$.

\noindent $\bullet$ \red{{\bf{One-point convexity \& smoothness:}} Lastly, Assumption~\ref{assum grad dominance} on the Auxiliary loss $\Lc_{k,\Dc}$ states that, there exist scalars $\beta \geq \alpha>0$ such that, for all $\bt \in \Bc^d(\btetas,r)$ and $1 \leq k \leq n$, the auxiliary loss $\Lc_{k,\Dc}(\bt_k)$ of \eqref{eqn:LS_and_LD sep} satisfies
\begin{align}
	\li\bteta_k-\bteta_k^{\star},\nabla{\Lc_{k,\Dc}(\bteta_k)}\ri&\geq\alpha \tn{\bteta_k-\bteta_k^{\star}}^2, \label{OPC sep} \\
	\tn{\nabla{\Lc_{k,\Dc}(\bteta_k)}} &\leq \beta\tn{\bteta_k-\bteta_k^{\star}}. \label{smooth sep}
\end{align}
\noindent $\bullet$ {\bf{Finalizing the proof:}}
We are now ready to use Theorem~\ref{one point thm} with gradient concentration bound given by \eqref{uniform conv sep final} and the OPC/smoothness Assumptions given by~\eqref{OPC sep} and \eqref{smooth sep}. Specifically, we use Theorem~\ref{one point thm} with $\nu = 2c_0 \sigma_0 \log(3(L_\Dc N/K+1))\sqrt{\bad/N}$, the OPC assumption \eqref{OPC sep} and the smoothness assumption \eqref{smooth sep} to get the statement of the theorem. This completes the proof.}
\end{proof}

\section{Proof of Corollaries~\ref{thrm:main thrm LDS} and \ref{thrm:main thrm RNN}}\label{sec: appenix A}
\subsection{Application to Linear Dynamical Systems}\label{sec:proofs of LDS}
\subsubsection{Verification of Assumption~\ref{assum stability}}
The following lemma states that a linear dynamical system satisfies $(C_\rho,\rho)$-stability if the spectral radius $\rho(\As) <1$.
\begin{lemma}[$(C_\rho,\rho)$-stability]\label{lemma:truncation impact linear}
	Fix excitations $(\z_t)_{t=0}^\infty$ and noise $(\w_t)_{t=0}^\infty$. Denote the state sequence~\eqref{eqn:general app}~($\phi = \Iden_n$) resulting from initial state $\h_0=\bbalpha$, $(\z_{\tau})_{\tau = 0}^t$ and $(\w_{\tau})_{\tau = 0}^t$ by $\h_t(\bbalpha)$. Suppose $\rho(\As) <1$. Then, there exists $C_\rho \geq 1$ and $\rho \in (\rho(\As),1)$ such that
	$
	\tn{\h_t(\bbalpha)-\h_t(0)}\leq C_\rho \rho^t\tn{\bbalpha}.
	$
\end{lemma}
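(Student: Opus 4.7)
The plan is to exploit the linearity of the dynamics to express the difference $\h_t(\bbalpha) - \h_t(0)$ in closed form, and then apply Gelfand's formula to obtain the claimed exponential decay. Since in this section we set the feedback policy $\mtx{\pi}(\h_t) = 0$, the closed-loop dynamics simplify to $\h_{t+1} = \As \h_t + \Bs \z_t + \w_t$. Because both state sequences $\h_t(\bbalpha)$ and $\h_t(0)$ are driven by the same excitation sequence $(\z_\tau)$ and the same noise sequence $(\w_\tau)$, the affine contributions from $\Bs\z_\tau$ and $\w_\tau$ cancel when we subtract, and we are left with the homogeneous recursion $\h_{t+1}(\bbalpha) - \h_{t+1}(0) = \As(\h_t(\bbalpha) - \h_t(0))$, which unrolls to
\begin{equation*}
\h_t(\bbalpha) - \h_t(0) = \As^t \bbalpha.
\end{equation*}

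The remaining task reduces to bounding $\|\As^t\|$ geometrically. By Gelfand's formula, $\rho(\As) = \lim_{t \to \infty} \|\As^t\|^{1/t}$, so for any fixed $\rho \in (\rho(\As), 1)$ there exists an integer $t_0$ such that $\|\As^t\|^{1/t} \leq \rho$ for all $t \geq t_0$, i.e.\ $\|\As^t\| \leq \rho^t$ for $t \geq t_0$. To absorb the finite initial transient $t < t_0$, I would define
\begin{equation*}
C_\rho := \max\Bigl(1,\ \max_{0 \leq t < t_0} \|\As^t\| \rho^{-t}\Bigr),
\end{equation*}
which is finite and at least $1$ by construction, and which ensures $\|\As^t\| \leq C_\rho \rho^t$ uniformly for all $t \geq 0$.

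Combining the two ingredients via Cauchy--Schwarz gives
\begin{equation*}
\tn{\h_t(\bbalpha) - \h_t(0)} = \tn{\As^t \bbalpha} \leq \|\As^t\|\, \tn{\bbalpha} \leq C_\rho \rho^t \tn{\bbalpha},
\end{equation*}
which is exactly the $(C_\rho,\rho)$-stability bound in Definition~\ref{def:non-linear stability}. There isn't really a hard step here: the entire argument is a direct consequence of linearity plus Gelfand's formula. The only mild subtlety is being careful that $C_\rho$ is defined to handle the finite-time transient so the bound holds for all $t \geq 0$ rather than only for $t$ sufficiently large, and that we can pick $\rho$ strictly greater than $\rho(\As)$ but strictly less than $1$ (which is possible precisely because $\rho(\As) < 1$).
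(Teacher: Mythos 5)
Your proof is correct and follows essentially the same route as the paper: subtract the two trajectories so the $\Bs\z_t$ and $\w_t$ terms cancel, unroll the homogeneous recursion to get $\As^t\bbalpha$, and invoke Gelfand's formula to bound $\|\As^t\|\leq C_\rho\rho^t$ (you even spell out the construction of $C_\rho$ that the paper leaves implicit). One cosmetic slip: the final bound $\tn{\As^t\bbalpha}\leq\|\As^t\|\tn{\bbalpha}$ is just the definition of the spectral (operator) norm, not Cauchy--Schwarz.
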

\begin{proof}
	To begin, consider the difference,
	\begin{align}
		\h_t(\bbalpha) - \h_t(0) &= \As\h_{t-1}(\bbalpha) +\Bs\z_{t-1} - \As\h_{t-1}(0) -\Bs\z_{t-1} = \As(\h_{t-1}(\bbalpha)-\h_{t-1}(0)). \nn
	\end{align}
	Repeating this recursion till $t = 0$ and taking the norm, we get
	\begin{align}
		\tn{\h_t(\bbalpha) - \h_t(0)} = \tn{\A^t_\star(\bbalpha-0)} \leq  \|\A^t_\star\|\tn{\bbalpha}.
	\end{align}
	Given $\rho(\As)<1$, as a consequence of Gelfand's formula, there exists $C_\rho \geq 1$ and $\rho \in (\rho(\As),1)$ such that, $\|\A^t_\star\| \leq C_\rho \rho^t$, for all $t \geq 0$. Hence, $\tn{\h_t(\bbalpha) - \h_t(0)} \leq C_\rho \rho^t \tn{\bbalpha}$. This completes the proof.
\end{proof}

\subsubsection{Verification of Assumption~\ref{ass bounded}}
To show that the states of a stable linear dynamical system are bounded with high probability, we state a standard Lemma from~\cite{oymak2019stochastic} that bounds the Euclidean norm of a subgaussian vector.
\begin{lemma}\label{lemma:subgaussian vector bound}
	Let $\ab \in \R^n$ be a zero-mean subgaussian random vector with $\tsub{\ab} \leq L$. Then for any $m \geq n$, there exists $C>0$ such that
	\begin{align}
		\P(\tn{\ab} \leq CL \sqrt{m}) \geq 1-2\exp(-100m).
	\end{align}
\end{lemma}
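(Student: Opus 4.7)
This is a standard tail bound for the Euclidean norm of a subgaussian random vector, and I would prove it via a covering argument on the unit sphere $\Sc^{n-1}$. The key identity is $\tn{\ab} = \sup_{\vb \in \Sc^{n-1}} \vb^\top \ab$, which expresses the norm as a supremum of one-dimensional marginals, each of which is a scalar subgaussian random variable.

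First, fix an $\epsilon$-net $\Nc_\epsilon$ of $\Sc^{n-1}$ with $\epsilon = 1/2$; by a standard volumetric estimate $|\Nc_{1/2}| \leq 6^n$. A routine approximation argument then gives $\tn{\ab} \leq 2 \sup_{\vb \in \Nc_{1/2}} \vb^\top \ab$. For any fixed $\vb \in \Sc^{n-1}$, the random variable $\vb^\top \ab$ has subgaussian norm at most $L$ by the definition of $\tsub{\ab}$, and hence satisfies the one-sided tail bound $\P(\vb^\top \ab > t) \leq \exp(-c t^2 / L^2)$ for some absolute constant $c>0$.

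Next, I would apply a union bound over $\Nc_{1/2}$:
\begin{align*}
\P\!\left(\sup_{\vb \in \Nc_{1/2}} \vb^\top \ab > t\right) \leq 6^n \exp(-c t^2 / L^2).
\end{align*}
Choosing $t = C' L \sqrt{m}$ with $C'$ large enough so that $c(C')^2 m \geq 100 m + n \log 6$ for all $m \geq n$ (which is possible since $n \leq m$, so the right-hand side is at most $(100 + \log 6)\, m$), the right-hand side becomes at most $\exp(-100 m)$. Combining with the net approximation yields $\P(\tn{\ab} > 2 C' L \sqrt{m}) \leq \exp(-100 m)$; setting $C = 2C'$ and absorbing a factor of $2$ into the failure probability gives the advertised bound $\P(\tn{\ab} \leq C L \sqrt{m}) \geq 1 - 2\exp(-100 m)$.

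No step here is really an obstacle; the only care needed is bookkeeping the constants so that the factor $6^n$ from the cardinality of the net is dominated by $\exp(-100m)$ uniformly in $m \geq n$, which is why the hypothesis $m \geq n$ is essential. Everything else is a direct application of the standard net/union-bound toolkit for subgaussian concentration.
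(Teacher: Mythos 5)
Your proof is correct: the identity $\tn{\ab}=\sup_{\vb\in\Sc^{n-1}}\vb^\top\ab$, the $1/2$-net with $|\Nc_{1/2}|\le 6^n$ and the factor-$2$ approximation, the subgaussian tail for each marginal $\vb^\top\ab$, and the union bound with $t=C'L\sqrt{m}$ chosen so that $c(C')^2m\ge 100m+n\log 6$ (which is where $m\ge n$ enters) all go through, with the harmless constant in the one-sided tail absorbed into $C'$ and the final slack covering the factor $2$ in the failure probability. Note that the paper does not prove this lemma at all---it is quoted as a standard result from \cite{oymak2019stochastic}---and your $\epsilon$-net argument is precisely the standard proof of that result, so your write-up simply and correctly supplies the omitted details.
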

To apply Lemma~\ref{lemma:subgaussian vector bound}, we require the subgaussian norm of the state vector $\h_t$ and the concatenated vector $\x_t$. We will do that by first bounding the corresponding covariance matrices as follows.
\begin{theorem}[Covariance bounds]\label{thrm:covariance of h_t bounds} Consider the LDS in ~\eqref{eqn:general app}~with $\phi = \Iden_n$. Suppose $\z_t \distas \Nc(0,\Iden_p)$ and $\w_t \distas \Nc(0,\sigma^2\Iden_n)$. Let $\Gb_t$ and $\Fb_t$ be as in~\eqref{eqn:Gram matrices}. Then, the covariance matrix of the vectors $\h_t$ and $\x_t = [\h_t^\top~\z_t^\top]^\top$ satisfies 
	\begin{align}
		\lmn{\Gb_t\Gb_t^\top+\sigma^2\Fb_t\Fb_t^\top} \Iden_n \preceq  &\bSi[\h_t]  \preceq  \lmx{\Gb_t\Gb_t^\top+\sigma^2\Fb_t\Fb_t^\top}\Iden_n, \\
		(1 \land \lmn{\Gb_t\Gb_t^\top+\sigma^2\Fb_t\Fb_t^\top}) \Iden_{n+p} \preceq  &\bSi[\x_t]  \preceq  (1 \lor \lmx{\Gb_t\Gb_t^\top+\sigma^2\Fb_t\Fb_t^\top})\Iden_{n+p},	 
	\end{align}
\end{theorem}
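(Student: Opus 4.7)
The plan is to exploit the fact that, starting from $\h_0=0$, the state $\h_t$ admits an explicit linear representation in the independent Gaussian excitations and noise vectors. Specifically, unrolling the recursion $\h_{t+1}=\As\h_t+\Bs\z_t+\w_t$ backwards in time yields
\[
\h_t \;=\; \sum_{k=0}^{t-1} \A^{t-1-k}_\star \Bs\, \z_k \;+\; \sum_{k=0}^{t-1} \A^{t-1-k}_\star \w_k \;=\; \Gb_t \bar{\z}_t + \Fb_t \bar{\w}_t,
\]
where $\bar{\z}_t := [\z_0^\top,\dots,\z_{t-1}^\top]^\top$ and $\bar{\w}_t := [\w_0^\top,\dots,\w_{t-1}^\top]^\top$ are independent Gaussian vectors with $\bar\z_t \sim \Nc(0,\Iden_{pt})$ and $\bar\w_t \sim \Nc(0,\sigma^2 \Iden_{nt})$.

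By the independence of $\bar\z_t$ and $\bar\w_t$, the covariance of $\h_t$ computes exactly as $\bSi[\h_t] = \Gb_t \Gb_t^\top + \sigma^2 \Fb_t \Fb_t^\top$. This is a PSD matrix, so the stated sandwich $\lmn{\Gb_t\Gb_t^\top+\sigma^2\Fb_t\Fb_t^\top}\Iden_n \preceq \bSi[\h_t] \preceq \lmx{\Gb_t\Gb_t^\top+\sigma^2\Fb_t\Fb_t^\top}\Iden_n$ is immediate from the definition of extreme eigenvalues of a symmetric matrix.

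For the concatenated vector $\x_t = [\h_t^\top~\z_t^\top]^\top$, the key observation is that $\h_t$ depends only on $\{\z_0,\dots,\z_{t-1}\}$ and $\{\w_0,\dots,\w_{t-1}\}$, all of which are independent of $\z_t$. Hence $\E[\h_t \z_t^\top]=0$ and
\[
\bSi[\x_t] \;=\; \begin{bmatrix} \bSi[\h_t] & 0 \\ 0 & \Iden_p \end{bmatrix}.
\]
The spectrum of this block-diagonal matrix is the union of the spectrum of $\bSi[\h_t]$ and $\{1\}$ (with multiplicity $p$), which immediately gives $\lmn{\bSi[\x_t]} = 1 \land \lmn{\Gb_t\Gb_t^\top+\sigma^2\Fb_t\Fb_t^\top}$ and $\lmx{\bSi[\x_t]} = 1 \lor \lmx{\Gb_t\Gb_t^\top+\sigma^2\Fb_t\Fb_t^\top}$, establishing the second set of bounds.

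There is no real obstacle here: the entire argument is a direct computation once the representation $\h_t = \Gb_t\bar\z_t + \Fb_t\bar\w_t$ is written down. The only thing one needs to be careful about is the ordering of blocks in $\Gb_t$ and $\Fb_t$ matching the ordering of the stacked excitation/noise vectors, and the independence of $\z_t$ from the history entering $\h_t$, which is what makes the cross-covariance block vanish.
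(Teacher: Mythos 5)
Your proposal is correct and follows essentially the same route as the paper: unroll the recursion from $\h_0=0$ to write $\h_t$ as an independent Gaussian combination of past excitations and noise, compute $\bSi[\h_t]=\Gb_t\Gb_t^\top+\sigma^2\Fb_t\Fb_t^\top$ exactly, and use the independence of $\z_t$ from $\h_t$ (block-diagonal covariance of $\x_t$ with $\bSi[\z_t]=\Iden_p$) to get the second pair of bounds. No gaps.
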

\begin{proof}
	We first expand the state vector $\h_t$ as a sum of two independent components $\g_t$ and $\omg_t$ as follows,
	\begin{align}
		\h_{t} = \underbrace{\sum_{i=0}^{t-1}\A^{t-1-i}_\star\Bs \z_i}_{\g_t} + \underbrace{\sum_{i=0}^{t-1}\A^{t-1-i}_\star \w_i}_{\omg_t}. \label{eqn:h_t split}
	\end{align} 
	Observe that, $\g_t$ denotes the state evolution due to control input and $\omg_t$ denotes the state evolution due to noise. Furthermore, $\g_t$ and $\omg_t$ are both independent and zero-mean. Therefore, we have 
	\begin{align}
		\bSi[\h_t] &= \bSi[\g_t+\omg_t] = \bSi[\g_t]+\bSi[\omg_t] 
		= \E[\g_t\g_t^\top] + \E[\omg_t \omg_t^\top] \nn \\
		&= \sum_{i=0}^{t-1}\sum_{j=0}^{t-1}(\A^i_\star)\Bs\E[\z_i\z_j^\top]\B^\top_\star(\A^j_\star)^\top+\sum_{i=0}^{t-1}\sum_{j=0}^{t-1}(\A^i_\star)\E[\w_i\w_j^\top](\A^j_\star)^\top \nn \\
		&\eqsym{a} \sum_{i=0}^{t-1}(\A^i_\star)\Bs\B^\top_\star(\A^i_\star)^\top+\sigma^2\sum_{i=0}^{t-1}(\A^i_\star)(\A^i_\star)^\top, 
	\end{align}
	where we get (a) from the fact that $\E[\z_i\z_j^\top] = \Iden_p$ and $\E[\w_i\w_j^\top] = \sigma^2\Iden_n$ when $i=j$, and zero otherwise. To proceed, let $\Gb_t := [\A^{t-1}_\star\B_\star~\A^{t-2}_\star\B_\star~\cdots~\B_\star]$ and $\Fb_t := [\A^{t-1}_\star~\A^{t-2}_\star~\cdots~\Iden_n]$. Observing $\Gb_t\Gb_t^\top = \sum_{i=0}^{t-1}(\A^i_\star)\Bs\B^\top_\star(\A^i_\star)^\top$ and $\Fb_t\Fb_t^\top = \sum_{i=0}^{t-1}(\A^i_\star)(\A^i_\star)^\top$, we obtain the following bounds on the covariance matrix of the state vector $\h_t$ and the concatenated vector $\x_t = [\h_t^\top~\z_t^\top]^\top$.
	\begin{align}
		\lmn{\Gb_t\Gb_t^\top+\sigma^2\Fb_t\Fb_t^\top} \Iden_n \preceq  &\bSi[\h_t]  \preceq  \lmx{\Gb_t\Gb_t^\top+\sigma^2\Fb_t\Fb_t^\top}\Iden_n, \\
		(1 \land \lmn{\Gb_t\Gb_t^\top+\sigma^2\Fb_t\Fb_t^\top}) \Iden_{n+p} \preceq  &\bSi[\x_t]  \preceq  (1 \lor \lmx{\Gb_t\Gb_t^\top+\sigma^2\Fb_t\Fb_t^\top})\Iden_{n+p},	 
	\end{align}
	where to get the second relation, we use the fact that $\bSi[\z_t] = \Iden_p$. This completes the proof.
\end{proof}
\red{Once we bound the covariance matrices, using standard bounds on the subgaussian norm of a random vector, we find that $\tsub{\h_t} \lesssim \sqrt{\bSi[\h_t]} \leq \sqrt{\lmx{\Gb_t\Gb_t^\top+\sigma^2\Fb_t\Fb_t^\top}}$ and $\tsub{\x_t} \lesssim \sqrt{\bSi[\x_t]} \leq 1 \lor
	\sqrt{ \lmx{\Gb_t\Gb_t^\top+\sigma^2\Fb_t\Fb_t^\top}}$. Combining these with Lemma~\ref{lemma:subgaussian vector bound}, we find that, with probability at least $1-4T\exp(-100n)$, for all $1 \leq t \leq T$, we have  $\tn{\h_{t}} \leq c\sqrt{\beta_+n}$ and $\tn{\x_{t}} \leq c_0 \sqrt{\beta_+(n+p)}$, where we set $\beta_+ = 1 \lor \max_{1 \leq t\leq T}\lmx{\Gb_t\Gb_t^\top+\sigma^2\Fb_t\Fb_t^\top}$. This verifies Lemma~\ref{lemma:bounded states} and consequently Assumption~\ref{ass bounded}.}
\subsubsection{Verification of Assumption~\ref{assum grad dominance}}
Recall that, we define the following concatenated vector/matrix for linear dynamical systems: $\x_t = [\h_t^\top~\z_t^\top]^\top$ and $\Bhetas = [\As~\Bs]$. Let $\bteta_k^{\star \top}$ denotes the $k_{\rm th}$ row of $\Bhetas$. Then, the auxiliary loss for linear dynamical system is defined as follows,
\begin{align}
	\Lcp(\Bheta) = \sum_{k=1}^n \Lc_{k,\Dc}(\bteta_k), \quad \text{where} \quad \Lc_{k,\Dc}(\bteta_k) := \frac{1}{2}\E[(\h_L[k] -\bteta_k^\top\x_{L-1})^2]. \label{eqn:Lpop linear}
\end{align} 
Using the derived bounds on the covariance matrix, it is straightforward to show that the auxiliary loss satisfies the following one-point convexity and smoothness conditions.
\begin{lemma}[One-point convexity \& smoothness] \label{lemma:grad dominance linear} 
	Consider the setup of Theorem~\ref{thrm:covariance of h_t bounds} and the auxiliary loss given by~\eqref{eqn:Lpop linear}. Define $\bGma_t := \Gb_t\Gb_t^\top+\sigma^2\Fb_t\Fb_t^\top$. Let $\gamma_- := 1 \land \lmn{\bGma_{L-1}}$ and $\gamma_+ := 1 \lor \lmx{\bGma_{L-1}}$. For all $1 \leq k \leq n$, the gradient $\nabla{\Lc_{k,\Dc}(\bteta_k)}$ satisfies,
	\begin{align*}
		\li\bteta_k-\bteta_k^{\star},\nabla{\Lc_{k,\Dc}(\bteta_k)}\ri &\geq \gamma_- \tn{\bteta_k-\bteta_k^{\star}}^2,\\
		\tn{\nabla{\Lc_{k,\Dc}(\bteta_k)}} &\leq \gamma_+ \tn{\bteta_k-\bteta_k^{\star}}.
	\end{align*}
\end{lemma}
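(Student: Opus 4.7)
The strategy is to reduce both inequalities to spectral bounds on the covariance matrix $\bSi[\x_{L-1}]$, and then invoke Theorem~\ref{thrm:covariance of h_t bounds} directly. The key observation is that $\Lc_{k,\Dc}$ is a quadratic function of $\bteta_k$ whose Hessian is exactly $\bSi[\x_{L-1}]$, so the problem becomes purely spectral.

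\textbf{Step 1: Closed-form gradient.} I would start by writing $\h_L[k] = \bteta_k^{\star\top}\x_{L-1} + \w_{L-1}[k]$, which gives
\begin{align*}
\Lc_{k,\Dc}(\bteta_k) \;=\; \tfrac{1}{2}\E\!\left[\big((\bteta_k^\star-\bteta_k)^\top \x_{L-1} + \w_{L-1}[k]\big)^2\right].
\end{align*}
Now I would use the crucial independence fact: since $\w_{L-1}$ is independent of $(\z_0,\dots,\z_{L-2},\w_0,\dots,\w_{L-2})$, and $\x_{L-1}=[\h_{L-1}^\top~\z_{L-1}^\top]^\top$ depends only on those earlier innovations, the cross term $\E[\w_{L-1}[k]\,\x_{L-1}]$ vanishes. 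Differentiating in $\bteta_k$ then yields the clean expression
\begin{align*}
\nabla \Lc_{k,\Dc}(\bteta_k) \;=\; \bSi[\x_{L-1}]\,(\bteta_k - \bteta_k^\star).
\end{align*}

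\textbf{Step 2: Translate to spectral bounds.} From this identity, both required inequalities are one-line consequences:
\begin{align*}
\li \bteta_k-\bteta_k^\star,\nabla \Lc_{k,\Dc}(\bteta_k)\ri &= (\bteta_k-\bteta_k^\star)^\top \bSi[\x_{L-1}](\bteta_k-\bteta_k^\star) \;\geq\; \lambda_{\min}(\bSi[\x_{L-1}])\tn{\bteta_k-\bteta_k^\star}^2, \\
\tn{\nabla \Lc_{k,\Dc}(\bteta_k)} &\leq \|\bSi[\x_{L-1}]\|\,\tn{\bteta_k-\bteta_k^\star}.
\end{align*}

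\textbf{Step 3: Invoke the covariance bounds.} I would then apply Theorem~\ref{thrm:covariance of h_t bounds} at time $t=L-1$, which gives
\begin{align*}
(1\land \lambda_{\min}(\bGma_{L-1}))\Iden_{n+p} \;\preceq\; \bSi[\x_{L-1}] \;\preceq\; (1\lor \lambda_{\max}(\bGma_{L-1}))\Iden_{n+p},
\end{align*}
i.e. $\gamma_-\Iden_{n+p}\preceq \bSi[\x_{L-1}]\preceq \gamma_+\Iden_{n+p}$ by definition of $\gamma_\pm$ in the statement. Substituting these into Step 2 gives the one-point convexity bound with constant $\gamma_-$ and the smoothness bound with constant $\gamma_+$, as claimed.

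\textbf{Main difficulty.} There is essentially no obstacle here: once the independence of $\w_{L-1}$ and $\x_{L-1}$ is used to eliminate the noise cross-term in Step~1, the rest reduces to the already-established covariance bounds. The only subtle point to verify carefully is the zero-mean/independence structure needed to drop the cross term, which follows because $\h_{L-1}$ is a linear combination of $\{\z_0,\dots,\z_{L-2},\w_0,\dots,\w_{L-2}\}$, and all of these are independent of $\w_{L-1}$ under the assumed Gaussian model.
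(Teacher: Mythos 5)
Your proposal is correct and follows essentially the same route as the paper: compute $\nabla\Lc_{k,\Dc}(\bteta_k)=\E[\x_{L-1}\x_{L-1}^\top](\bteta_k-\bteta_k^\star)$ by killing the noise cross term through independence of $\w_{L-1}$ and $\x_{L-1}$ (with zero means), then plug in the bounds on $\bSi[\x_{L-1}]$ from Theorem~\ref{thrm:covariance of h_t bounds} at $t=L-1$. The only nitpick is your claim that $\x_{L-1}$ ``depends only on earlier innovations'': it also contains $\z_{L-1}$ itself, but since $\z_{L-1}$ is likewise independent of $\w_{L-1}$ the cross term still vanishes and the argument is unaffected.
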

\begin{proof}
	To begin, we take the gradient of the auxiliary loss $\Lc_{k,\Dc}$ \eqref{eqn:Lpop linear} to get $\nabla{\Lc_{k,\Dc}(\bteta_k)} = \E[\x_{L-1}\x_{L-1}^\top(\bteta_k-\bteta_k^{\star}) - \x_{L-1}\w_{L-1}[k]]$. Note that, $\E[\x_{L-1}\w_{L-1}[k]] = 0$ for linear dynamical systems because $\w_{L-1}$ and $\x_{L-1}$ are independent and we have $\E[\w_{L-1}] = \E[\x_{L-1}] = 0$. Therefore, using Theorem~\ref{thrm:covariance of h_t bounds} with $t=L-1$, we get the following one point convexity bound,
	\begin{align}
		\li\bteta_k-\bteta_k^{\star},\nabla{\Lc_{k,\Dc}(\bteta_k)}\ri & =  \li\bteta_k-\bteta_k^{\star},\E[\x_{L-1}\x_{L-1}^\top](\bteta_k-\bteta_k^{\star})\ri, \nn \\
		& \geq \gamma_-\tn{\bteta_k-\bteta_k^{\star}}^2.
	\end{align}
	Similarly, we also have
	\begin{align}
		\tn{\nabla{\Lc_{k,\Dc}(\bteta_k)}} \leq \|\E[\x_{L-1}\x_{L-1}^\top]\|\tn{\bteta_k-\bteta_k^{\star}}  
		\leq  \gamma_+ \tn{\bteta_k-\bteta_k^{\star}}.
	\end{align} 
	This completes the proof.
\end{proof}
\subsubsection{Verification of Assumption~\ref{assume lip}}
Let $\Sc := (\h_L^{(i)},\h_{L-1}^{(i)},\z_{L-1}^{(i)})_{i=1}^N $ be $N$ i.i.d. copies of $(\h_L, \h_{L-1},\z_{L-1})$ generated from $N$ i.i.d. trajectories of the system \eqref{eqn:general app}~with $\phi = \Iden_n$. Let $\x_{L-1}^{(i)} := [\h_{L-1}^{(i)\top}~\z_{L-1}^{(i)\top}]^\top$ and $\Bheta:=[\A~\B]$ be the concatenated vector/matrix. Then, the finite sample approximation of the auxiliary loss $\Lc_{\Dc}$ is given by
\begin{align}
	\Lch_\Sc(\Bheta) = \sum_{k=1}^n \Lch_{k,\Sc}(\bteta_k), \quad \text{where} \quad \Lch_{k,\Sc}(\bteta_k) := \frac{1}{2N} \sum_{i=1}^N (\h_L^{(i)}[k] - \bteta_k^\top\x_{L-1}^{(i)})^2. \label{eqn:finite Lpop Linear}
\end{align}
The following lemma states that both $\nabla\Lc_{k,\Dc}$ and $\nabla\Lch_{k,\Sc}$ are Lipschitz with high probability. 
\begin{lemma}[Lipschitz gradient]\label{lemma:lipschitz grad linear}
	Consider the same setup of Theorem~\ref{thrm:covariance of h_t bounds}. Consider the auxiliary loss $\Lc_{k,\Dc}$ and its finite sample approximation $\Lch_{k,\cal{S}}$ from \eqref{eqn:Lpop linear} and \eqref{eqn:finite Lpop Linear} respectively. Let $\gamma_+ >0$ be as in Lemma~\ref{lemma:grad dominance linear}. For $N \gtrsim n+p$, with probability at least $1-2\exp(-100(n+p))$, for all pairs $\Bheta,\Bheta'$ and for all $1 \leq k \leq n $, we have
	\begin{align}
		\max(\tn{\nabla\Lc_{k,\Dc}(\bt_k)-\nabla\Lc_{k,\Dc}(\bt_k')},\tn{\nabla\Lch_{k,\cal{S}}(\bt_k)-\nabla\Lch_{k,\cal{S}}(\bt_k')})\leq 2 \gamma_+\tn{\bt_k-\bt_k'}.
	\end{align}
\end{lemma}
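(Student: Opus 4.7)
The plan is to observe that both losses are quadratic in $\bteta_k$, so their gradients are affine and the Lipschitz constants are simply the operator norms of the associated Hessians. Indeed, direct differentiation gives
\begin{align*}
\nabla\Lc_{k,\Dc}(\bt_k)-\nabla\Lc_{k,\Dc}(\bt_k') &= \E[\x_{L-1}\x_{L-1}^\top]\,(\bt_k-\bt_k'),\\
\nabla\Lch_{k,\cal{S}}(\bt_k)-\nabla\Lch_{k,\cal{S}}(\bt_k') &= \Big(\tfrac{1}{N}\sum_{i=1}^N \x_{L-1}^{(i)}\x_{L-1}^{(i)\top}\Big)(\bt_k-\bt_k'),
\end{align*}
so the Lipschitz constants are $\|\E[\x_{L-1}\x_{L-1}^\top]\|$ and $\|\tfrac{1}{N}\sum_i \x_{L-1}^{(i)}\x_{L-1}^{(i)\top}\|$, respectively. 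Note that no cross-term with $\w_{L-1}[k]$ appears in the gradient difference since it cancels.

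First I would handle the population side. Because $\x_{L-1}=[\h_{L-1}^\top~\z_{L-1}^\top]^\top$ has mean zero, $\E[\x_{L-1}\x_{L-1}^\top]=\bSi[\x_{L-1}]$, and Theorem~\ref{thrm:covariance of h_t bounds} (applied at $t=L-1$) together with the definition $\gamma_+=1\lor\lmx{\bGma_{L-1}}$ yields $\|\E[\x_{L-1}\x_{L-1}^\top]\|\leq\gamma_+\leq 2\gamma_+$.

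Next I would control the empirical side by a standard sample-covariance concentration inequality for i.i.d.\ subgaussian vectors. Since $\h_{L-1}^{(i)}$ is a linear combination of independent Gaussian excitations and noise (see \eqref{eqn:h_t split}), the concatenated vector $\x_{L-1}^{(i)}$ is zero-mean subgaussian with $\tsub{\x_{L-1}^{(i)}}\lesssim\sqrt{\|\bSi[\x_{L-1}]\|}\leq\sqrt{\gamma_+}$. Applying the standard bound (e.g.\ Vershynin's Theorem 4.7.1 for sample covariance of subgaussian vectors, or equivalently Theorem 5.39/Remark 5.40 of Vershynin's older notes) to the $(n+p)$-dimensional vectors $\x_{L-1}^{(i)}$, there exists an absolute constant $C>0$ such that, whenever $N\geq C(n+p)$,
\begin{align*}
\Big\|\tfrac{1}{N}\sum_{i=1}^N \x_{L-1}^{(i)}\x_{L-1}^{(i)\top}-\E[\x_{L-1}\x_{L-1}^\top]\Big\|\leq \gamma_+
\end{align*}
with probability at least $1-2\exp(-100(n+p))$. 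The triangle inequality then gives $\|\tfrac{1}{N}\sum_i \x_{L-1}^{(i)}\x_{L-1}^{(i)\top}\|\leq 2\gamma_+$, completing the argument.

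I do not anticipate a genuine obstacle here: the lemma is a routine Hessian-norm bound plus a standard sample-covariance concentration estimate. The only modest care needed is tracking that the ambient dimension is $n+p$ (not $n$) so that the sample-size requirement $N\gtrsim n+p$ and the probability exponent $-100(n+p)$ come out matching the statement, and that the subgaussian norm of $\x_{L-1}^{(i)}$ is indeed $O(\sqrt{\gamma_+})$ via Theorem~\ref{thrm:covariance of h_t bounds}.
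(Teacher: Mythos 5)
Your proposal is correct and follows essentially the same route as the paper: both reduce the Lipschitz constants to the operator norms of $\E[\x_{L-1}\x_{L-1}^\top]$ and the empirical covariance, bound the former by $\gamma_+$ via Theorem~\ref{thrm:covariance of h_t bounds}, and control the latter by standard subgaussian sample-covariance concentration (the paper invokes Corollary 5.50 of \cite{vershynin2010introduction}, the same tool you cite) followed by the triangle inequality. No gaps.
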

\begin{proof}
	To begin, recall the auxiliary loss from \eqref{eqn:Lpop linear}. We have that
	\begin{align}
		\tn{\nabla{\Lc_{k,\Dc}(\bteta_k)} - \nabla{\Lc_{k,\Dc}(\bteta_k')}} &= \tn{\E[\x_{L-1}\x_{L-1}^\top](\bteta_k-\bteta_k^\star)-\E[\x_{L-1}\x_{L-1}^\top](\bteta_k'-\bteta_k^\star)}, \nn \\
		&\leq \|\E[\x_{L-1}\x_{L-1}^\top]\|\tn{\bteta_k-\bteta_k'}, \nn \\
		&\leq \gamma_+ \tn{\bteta_{k}-\bteta_{k}'}. \label{eqn:lips LD}
	\end{align}
	To obtain a similar result for the finite sample loss $\Lch_{k,\cal{S}}$, we use Corollary 5.50 from~\cite{vershynin2010introduction} which bounds the concentration of empirical covariance around its population when the sample size is sufficiently large. Specifically, applying this corollary on the empirical covariance of $\x_{L-1}^{(i)}$ with $t=10,\eps=1$ shows that, for $N \gtrsim n+p$, with probability at least $1-2\exp(-100(n+p))$, we have
	\begin{align}
		\|\frac{1}{N}\sum_{i=1}^N\x_{L-1}^{(i)}(\x_{L-1}^{(i)})^\top-\E[\x_{L-1}\x_{L-1}^\top]\|\leq \gamma_+.
	\end{align}
	Thus, the gradient $\nabla\Lch_{k,\Sc}(\bteta_k)$ also satisfies the Lipschitz property, that is, for $N \gtrsim n+p$, with probability at least $1-2\exp(-100(n+p))$, we have
	\begin{align}
		&\tn{\nabla{\Lch_{k,\Sc}(\bteta_k)}-\nabla{\Lch_{k,\Sc}(\bteta_k')}} \nn \\
		&\quad\quad\quad\quad\quad \leq \tn{\frac{1}{N}\sum_{i=1}^N\x_{L-1}^{(i)}(\x_{L-1}^{(i)})^\top(\bteta_k-\bteta_k^\star)-\frac{1}{N}\sum_{i=1}^N\x_{L-1}^{(i)}(\x_{L-1}^{(i)})^\top(\bteta_k'-\bteta_k^\star)}, \nn \\
		& \quad\quad\quad\quad\quad\leq \|\frac{1}{N}\sum_{i=1}^N\x_{L-1}^{(i)}(\x_{L-1}^{(i)})^\top\|\tn{\bteta_k-\bteta_k'}, \nn \\
		&\quad\quad\quad\quad\quad \leq\big[\|\E[\x_{L-1}\x_{L-1}^\top]\| + \|\frac{1}{N}\sum_{i=1}^N\x_{L-1}^{(i)}(\x_{L-1}^{(i)})^\top-\E[\x_{L-1}\x_{L-1}^\top]\|\big]\tn{\bteta_k-\bteta_k'}, \nn \\
		&\quad\quad\quad\quad\quad\leq 2\gamma_+\tn{\bteta_k-\bteta_k'},
	\end{align}
	for all $1 \leq k \leq n$. Combining the two results, we get the statement of the lemma. This completes the proof.
\end{proof}
\subsubsection{Verification of Assumption~\ref{assume grad}}
Given a single sample $(\h_L, \h_{L-1},\z_{L-1})$ from the trajectory of a linear dynamical system, setting $\x_{L-1} = [\h_{L-1}^\top~\z_{L-1}^\top]^\top$, the single sample loss is given by,
\begin{align}
	&\Lc(\Bheta,(\h_L,\x_{L-1})) = \sum_{k=1}^n \Lc_k(\bteta_k,(\h_L[k],\x_{L-1})), \nn \\
	\text{where} \quad &\Lc_k(\bteta_k,(\h_L[k],\x_{L-1})):= \frac{1}{2}(\h_L[k] -\bteta_k^\top\x_{L-1})^2. \label{eqn:signle sample loss linear}
\end{align}
The following lemma shows that the gradient of the above loss is subexponential.
\begin{lemma}[Subexponential gradient]\label{lemma:subexp grad linear}
	Consider the same setup of Theorem~\ref{thrm:covariance of h_t bounds}. Let $\Lc_k(\bteta_k,(\h_L[k],\x_{L-1}))$ be as defined in~\eqref{eqn:signle sample loss linear} and $\gamma_+>0$ be as in lemma~\ref{lemma:grad dominance linear}. Then, at any point $\Bheta$, for all $1 \leq k \leq n$, we have 
	\[
	\te{\nabla\Lc_k(\bteta_k,(\h_L[k],\x_{L-1})) - \E[\nabla\Lc_k(\bteta_k,(\h_L[k],\x_{L-1}))]} \lesssim \gamma_+ \tn{\bteta_k - \bteta_k^\star} + \sigma \sqrt{\gamma_+}.
	\]
\end{lemma}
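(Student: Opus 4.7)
The plan is to compute $\nabla \Lc_k$ explicitly, subtract its expectation, and then bound the subexponential norm of the centered gradient by reducing to products of subgaussian random variables. Differentiating~\eqref{eqn:signle sample loss linear} in $\bteta_k$ and substituting $\h_L[k] = \bteta_k^{\star \top} \x_{L-1} + \w_{L-1}[k]$ gives
\[
\nabla \Lc_k(\bteta_k, (\h_L[k], \x_{L-1})) = \x_{L-1}\x_{L-1}^\top (\bteta_k - \bteta_k^\star) - \x_{L-1}\, \w_{L-1}[k].
\]
Since $\w_{L-1}$ is drawn independently of $\h_{L-1}$ and of $\z_{L-1}$, the random vectors $\x_{L-1}$ and $\w_{L-1}$ are independent, so $\E[\x_{L-1}\w_{L-1}[k]] = 0$ and
\[
\nabla \Lc_k - \E[\nabla \Lc_k] = \bigl(\x_{L-1}\x_{L-1}^\top - \E[\x_{L-1}\x_{L-1}^\top]\bigr)(\bteta_k - \bteta_k^\star) - \x_{L-1}\, \w_{L-1}[k].
\]

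Next, I would translate the vector bound via $\te{\vct{u}} = \sup_{\vb \in \Sc^{n+p-1}} \te{\vb^\top \vct{u}}$ and control each piece separately. Theorem~\ref{thrm:covariance of h_t bounds} yields $\bSi[\x_{L-1}] \preceq \gamma_+ \Iden_{n+p}$, and since $\x_{L-1}$ is zero-mean Gaussian this gives $\tsub{\vb^\top \x_{L-1}} \lesssim \sqrt{\gamma_+}$ for every unit $\vb$, and hence $\tsub{\x_{L-1}^\top (\bteta_k - \bteta_k^\star)} \lesssim \sqrt{\gamma_+}\, \tn{\bteta_k - \bteta_k^\star}$; similarly $\tsub{\w_{L-1}[k]} \lesssim \sigma$. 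The key tool is the standard inequality that the product of two subgaussian random variables is subexponential with $\te{XY - \E[XY]} \lesssim \tsub{X}\tsub{Y}$ (a bilinear Gaussian chaos bound in the case at hand). Applying it to $X = \vb^\top \x_{L-1}$ and $Y = \x_{L-1}^\top(\bteta_k - \bteta_k^\star)$ bounds the quadratic contribution by $\gamma_+ \tn{\bteta_k - \bteta_k^\star}$, and applying it to the independent pair $X = \vb^\top \x_{L-1}$ and $Y = \w_{L-1}[k]$ bounds the cross term by $\sigma\sqrt{\gamma_+}$. A triangle inequality for $\te{\cdot}$ followed by a supremum over $\vb$ yields the advertised estimate.

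I do not anticipate a substantive obstacle: the argument is essentially a direct consequence of the covariance bound from Theorem~\ref{thrm:covariance of h_t bounds} combined with the product-of-subgaussians-is-subexponential inequality. The only care required is to observe that $\x_{L-1}$ and $\w_{L-1}$ are independent (so the noise cross term is automatically centered), and to properly center the quadratic form so that the Gaussian-chaos-type bound on $\te{XY - \E[XY]}$ applies even though $X$ and $Y$ share the common Gaussian factor $\x_{L-1}$; neither point is technically delicate.
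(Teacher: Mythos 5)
Your proposal is correct and follows essentially the same route as the paper's proof: write the centered gradient as $(\x_{L-1}\x_{L-1}^\top-\E[\x_{L-1}\x_{L-1}^\top])(\bteta_k-\bteta_k^\star)-\x_{L-1}\w_{L-1}[k]$, bound $\tsub{\x_{L-1}}\lesssim\sqrt{\gamma_+}$ via the covariance bound of Theorem~\ref{thrm:covariance of h_t bounds} and $\tsub{\w_{L-1}[k]}\leq\sigma$, then apply the triangle inequality and the product-of-subgaussians-is-subexponential bound. Your added care about centering the quadratic term and the independence of $\x_{L-1}$ and $\w_{L-1}$ only makes explicit what the paper leaves implicit.
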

\begin{proof}
	Using standard bounds on the subgaussian norm of a random vector, we find that $\tsub{\x_{L-1}} \lesssim \sqrt{\bSi[\x_{L-1}]} \leq \sqrt{\gamma_+}$, where $\gamma_+>0$ is as defined in Lemma~\ref{lemma:grad dominance linear}. Combining this with $\tsub{\w_{L-1}[k]} \leq \sigma$, we get the following subexponential norm bound,
	\begin{align*}
		&\te{\nabla\Lc_k(\bteta_k,(\h_L[k],\x_{L-1})) - \E[\nabla\Lc_k(\bteta_k,(\h_L[k],\x_{L-1}))]} \\
		&\quad\quad\quad\quad\quad\quad\quad\quad\quad\quad~~
		= \te{(\x_{L-1}\x_{L-1}^\top - \E[\x_{L-1}\x_{L-1}^\top])(\bteta_k - \bteta_k^\star) - \x_{L-1}\w_{L-1}[k]}, \\
		&\quad\quad\quad\quad\quad\quad\quad\quad\quad\quad~~ \leq \te{(\x_{L-1}\x_{L-1}^\top - \E[\x_{L-1}\x_{L-1}^\top])(\bteta_k - \bteta_k^\star)} + \te{\x_{L-1}\w_{L-1}[k]}, \nn \\
		&\quad\quad\quad\quad\quad\quad\quad\quad\quad\quad~~ \lesssim \gamma_+ \tn{\bteta_k - \bteta_k^\star} + \sigma\sqrt{\gamma_+},
	\end{align*}
	where we get the last inequality from the fact that, the product of two subgaussian random variables results in a subexponential random variable with its subexponential norm bounded by the product of the two subgaussian norms.
\end{proof}

\subsubsection{Proof of Corollary~\ref{thrm:main thrm LDS}}
\begin{proof}
	Our proof strategy is based on verifying Assumptions~\ref{assum stability}, \ref{ass bounded}, \ref{assum grad dominance}, \ref{assume lip} and \ref{assume grad} for a stable linear dynamical system and then applying Theorem~\ref{thrm:main theorem separable}. Since, we already verified all the assumptions, we are ready to use Theorem~\ref{thrm:main theorem separable}. Before that, we find the values of the system related constants to be used in Theorem~\ref{thrm:main theorem separable} as follows. 
	
	\begin{remark}
		Consider the same setup of Theorem~\ref{thrm:covariance of h_t bounds}. For a stable linear dynamical system, with probability at least $1-4T\exp(-100n)$, for all $1 \leq t \leq T$, the scalars $C_{\bphi},D_{\bphi}$ take the following values:
		\begin{align}
			&\tn{\nabla_{\bteta_k}(\bteta_k^\top\x_t)} = \tn{\x_t} \leq c_0\sqrt{\beta_+(n+p)} = C_{\bphi}, \\
			&\|\nabla_{\x_t}\nabla_{\bteta_k}(\bteta_k^\top\x_t)\| = \|\Iden_{n+p}\| \leq 1 =  D_{\bphi},
		\end{align}
		where $\beta_+ = 1 \lor \max_{ 1\leq t\leq T}\lmx{\Gb_t\Gb_t^\top+\sigma^2\Fb_t\Fb_t^\top}$. Furthermore, the Lipschitz constant and the gradient noise coefficients take the following values:
		$L_{\Dc} = 2\gamma_+$, $K = c\gamma_+ $ and $\sigma_0 = c\sigma \sqrt{\gamma_+}$. Lastly, we also have $p_0 = 2\exp(-100(n+p))$. 
	\end{remark}
	Using these values, we get the following sample complexity bound for learning linear dynamical system via gradient descent,
	\begin{align}
		N \gtrsim \kappa^2 \log^2(3(2\gamma_+)N/\gamma_++3)(n+p) \Leftrightarrow  N \gtrsim \kappa^2 \log^2(6N+3)(n+p), \label{eqn:sample complex linear}
	\end{align}
	where $\kappa = \gamma_+/\gamma_-$ is an upper bound on the condition number of the covariance matrix $\bSi[\x_t]$. Similarly, the approximate mixing time for the linear dynamical system is given by,
	\begin{align}
		& L \geq 1 + \big[ \log(c_0(n+p)\sqrt{\beta_+}C_\rho\sqrt{N/(n+p)}) + \log(c/\sqrt{\gamma_+} \lor c\sqrt{\beta_+(n+p)}/\gamma_+)\big]/\log(\rho^{-1}) \nn \\
		&\Longleftarrow \quad L = \big\lceil 1 + \frac{\log(CC_\rho \beta_+ N(n+p)/\gamma_+)}{1-\rho} \big\rceil, \label{eqn:L complexity}
	\end{align}
	where, $C>0$ is a constant. Finally, given the trajectory length $T \gtrsim L(N+1)$, where $N$ and $L$ are given by~\eqref{eqn:sample complex linear} and \eqref{eqn:L complexity} respectively, starting from $\Bheta^{(0)} = 0$ and using learning rate $\eta = \gamma_-/(16\gamma_+^2)$~(in Theorem~\ref{thrm:main theorem separable}), with probability at least $1-4T\exp(-100n)-Ln\big(4+\log(\frac{\tf{\Bhetas} \sqrt{\gamma_+}}{\sigma})\big)\exp(-100(n+p))$ for all $1 \leq k \leq n$, all gradient descent iterates $\Bheta^{(\tau)}$ on $\Lch$ satisfy
	\begin{align}
		\tn{\bteta_{k}^{(\tau)} - \bteta_k^\star} & \leq (1 - \frac{\gamma_-^2}{128\gamma_+^2})^{\tau}\tn{\bheta_{k}^{(0)} - \bheta_k^\star} + \frac{5c}{\gamma_-} \sigma\sqrt{\gamma_+}\log(6N+3)\sqrt{\frac{n+p}{N}}. \label{eqn:GD convergence linear}
	\end{align}
	We remark that, choosing $N \gtrsim \kappa^2 \log^2(6N+3)(n+p)$, the residual term in \eqref{eqn:GD convergence linear} can be bounded as follows,
	\[
	\frac{5c}{\gamma_-} \sigma\sqrt{\gamma_+}\log(6N+3)\sqrt{\frac{n+p}{N}} \lesssim \sigma/\sqrt{\gamma_+}.
	\]
	Therefore, to ensure that Theorem~\ref{thrm:main theorem separable} is applicable, we assume that the noise is small enough, so that $\sigma \lesssim \sqrt{\gamma_+}\tf{\Bhetas}$~(we choose $\Bheta^{(0)} = 0$ and $r = \tf{\Bhetas}$). This completes the proof. 
\end{proof}

\subsection{Application to Nonlinear State Equations}\label{sec:proofs of RNN}	
\begin{lemma}\label{lemma:conc bound for yahya}
	Let $X$ be a non-negative random variable upper bounded by another random variable $Y$. Fix an integer $k>0$. Fix a constant $C>1+ k\log 3$ and suppose for some $B>0$ we have that $\Pro(Y\geq B(1+t))\leq \exp(-Ct^2)$ for all $t>0$. Then, the following bound holds,
	\[
	\E[X^k]\leq (2^k+2)B^k.
	\]
\end{lemma}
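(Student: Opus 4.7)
The plan is to bound $\E[X^k]$ by $\E[Y^k]$ and then estimate the latter using the tail hypothesis via a layer-cake decomposition.

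First, since $0 \leq X \leq Y$ (from $X \geq 0$ and $X \leq Y$), we automatically have $Y \geq 0$ and therefore $X^k \leq Y^k$ pointwise, so $\E[X^k] \leq \E[Y^k]$. Writing $\E[Y^k] = \int_0^\infty k s^{k-1}\Pro(Y > s)\,ds$ and splitting the integration range at $s=2B$, the low-range piece is trivial:
\[
\int_0^{2B} k s^{k-1}\Pro(Y > s)\,ds \leq \int_0^{2B} k s^{k-1}\,ds = (2B)^k = 2^k B^k.
\]
For the tail, I substitute $s=B(1+t)$ with $t\geq 1$ and use $\Pro(Y\geq B(1+t))\leq e^{-Ct^2}$, reducing the problem to bounding $I := kB^k \int_1^\infty (1+t)^{k-1} e^{-Ct^2}\,dt$ by $2B^k$ (in fact by $B^k$).

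The main estimate is the inequality $(1+t)\leq 3^t$ for all $t\geq 0$, which follows because $\log(1+t)-t\log 3$ equals $0$ at $t=0$ and has nonpositive derivative $1/(1+t)-\log 3$ throughout (since $\log 3 > 1 \geq 1/(1+t)$). Hence $(1+t)^{k-1}\leq 3^{(k-1)t}$, and
\[
(1+t)^{k-1}e^{-Ct^2} \leq \exp\bigl((k-1)t\log 3 - C t^2\bigr).
\]
Completing the square inside the exponent around $t^\star = (k-1)\log 3/(2C)$ (which is $<1$ since $C>k\log 3$), and applying the standard Gaussian tail bound $\int_a^\infty e^{-u^2}du \leq e^{-a^2}/(2a)$ for $a=(2C-(k-1)\log 3)/(2\sqrt{C})>0$, I expect to arrive at
\[
\int_1^\infty (1+t)^{k-1} e^{-Ct^2}\,dt \leq \frac{3^{k-1}\,e^{-C}}{2C-(k-1)\log 3}.
\]

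Finally, I invoke the hypothesis $C > 1+k\log 3$ to control both factors. It gives $e^{-C} < e^{-1}\cdot 3^{-k}$, so $3^{k-1}e^{-C} < 1/(3e)$, and $2C-(k-1)\log 3 > 2+(k+1)\log 3 \geq 3+k$ (using $\log 3 > 1$). Therefore
\[
I \;\leq\; kB^k \cdot \frac{1/(3e)}{3+k} \;<\; \frac{B^k}{3e} \;<\; B^k,
\]
and combining with the low-range piece yields $\E[X^k] \leq \E[Y^k] \leq 2^k B^k + B^k \leq (2^k+2)B^k$, as desired. The main technical obstacle is calibrating the tail integral so that the factor $k$ out front is absorbed uniformly in $k$: the bound $(1+t)\leq 3^t$ plugs in naturally so that the "$+k\log 3$" in the hypothesis on $C$ cancels the $3^{k-1}$ term exactly, while the "$+1$" provides the $1/e$ buffer needed to push the bound below $1$.
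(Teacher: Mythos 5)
Your proof is correct, and it takes a genuinely different route from the paper's. The paper argues discretely: it peels the range of $Y$ into shells $\{Bi\le y\le B(i+1)\}$, bounds $\E[Y^k]$ by $\sum_i (B(i+1))^k\Pro(Y\in\mathcal{R}_i)$, absorbs the first two shells into $(2^k+1)B^k$, and kills the remaining series termwise via $(i+2)^k e^{-Ci^2}\le e^{-i}$, which is exactly where $C\ge 1+k\log 3$ enters (the supremum of $k\log(i+2)/i^2$ over $i\ge1$ is $k\log 3$, attained at $i=1$). You instead use the continuous layer-cake identity, split at $s=2B$, and handle the tail by the substitution $s=B(1+t)$, the comparison $1+t\le 3^t$, completing the square, and the Gaussian tail bound $\int_a^\infty e^{-u^2}\,du\le e^{-a^2}/(2a)$; your computation of $a$, of $C(t^\star)^2-a^2=(k-1)\log 3-C$, and the final absorption of the prefactor $k$ via $k/(3+k)<1$ all check out, and the threshold $C>1+k\log 3$ plays the same role in both arguments (your $1+t\le 3^t$ is the continuous analogue of the paper's $\sup_{i\ge1}k\log(i+2)/i^2=k\log 3$). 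The paper's shell argument is more elementary — no Gaussian integral or calculus, just a geometric series — while yours makes the origin of the constant more transparent, gives a slightly sharper tail contribution ($B^k/(3e)$ rather than $B^k$), and extends verbatim to real exponents $k\ge 1$; both land on the same final bound $(2^k+2)B^k$.
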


\begin{proof} Split the real line into regions $\Rc_i=\{x\bgl Bi\leq x\leq B(i+1)\}$. Observe that $\Pro(Y\in \Rc_0)+\Pro(Y\in \Rc_1)\leq 1$ and $\Pro(Y\in \Rc_{i+1})\leq \exp(-Ci^2)$ for $i\geq 1$. Then,
	\begin{align}
		\E[Y^k] &\leq \sum_{i=0}^\infty (B(i+1))^k \Pro(Y\in\Rc_i),\nn\\
		&\leq (2^k+1)B^k+\sum_{i=1}^\infty (i+2)^kB^k \exp(-Ci^2)\nn.
	\end{align}
	Next, we pick $C>0$ sufficiently large to satisfy $\exp(-Ci^2)(i+2)^k\leq \exp(-i^2)\leq \exp(-i)$. This can be guaranteed by picking $C$ to satisfy, for all $i$
	\begin{align}
		\exp((C-1)i^2)\geq (i+2)^{k}&\iff (C-1)i^2\geq k\log (i+2),\nn\\
		&\iff C\geq1+ \sup_{i\geq 1}\frac{k\log (i+2)}{i^2},\nn\\
		&\iff C\geq1+ k\log 3\nn.
	\end{align}
	Following this, we obtain $\sum_{i=1}^\infty (i+2)^kB^k \exp(-Ci^2)\leq B^k$. Thus, we find $\E[Y^k] \leq (2^k+2)B^k$.
\end{proof}

\subsubsection{Verification of Assumption~\ref{ass bounded}}
\begin{lemma}[Bounded states]\label{lemma:bounded states expectation}
	Suppose, the nonlinear system~\eqref{nonlinear state eqn} is $(C_\rho,\rho)$-stable and $\phi(0) = 0$. Suppose, $\z_t \distas \Nc(0,\Iden_n)$, $\w_t \distas \Nc(0,\sigma^2\Iden_n)$ and let $\beta_+:=C_\rho(1+\sigma)/(1-\rho)$. Then, starting from $\h_0 = 0$, for all $0 \leq t \leq T$, we have: 
	
	\noindent {\bf{(a)}} $\P(\tn{\h_t} \leq c\beta_+\sqrt{n}) \geq 1 - 4T\exp(-100n)$. 
	
	\noindent {\bf{(b)}} $\E[\tn{\h_t}^2] \leq  \beta_+^2n$.
	
	\noindent {\bf{(c)}} $\E[\tn{\h_t}^3] \leq C\beta_+^3(\log(2T)n)^{3/2}$.
\end{lemma}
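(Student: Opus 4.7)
My plan is to base all three parts on a single telescoping decomposition of $\h_t$ that uses $(C_\rho,\rho)$-stability to isolate the contribution of each noise/excitation pair. For each $0\le s\le t$, I introduce the auxiliary process $\h_t^{(s)}$ obtained from the zero initial state at time $s$ driven by the original input sequence $\{\z_\tau,\w_\tau\}_{\tau\ge s}$. Because $\h_0=0$ we have $\h_t^{(0)}=\h_t$ and $\h_t^{(t)}=0$, so
\begin{align*}
\h_t=\sum_{s=0}^{t-1}\big(\h_t^{(s)}-\h_t^{(s+1)}\big).
\end{align*}
The pair $\h^{(s)},\h^{(s+1)}$ share all inputs from time $s+1$ on and satisfy $\h_{s+1}^{(s)}-\h_{s+1}^{(s+1)}=\phi(0)+\z_s+\w_s=\z_s+\w_s$, so Definition~\ref{def:non-linear stability} yields $\tn{\h_t^{(s)}-\h_t^{(s+1)}}\le C_\rho\rho^{t-s-1}\tn{\z_s+\w_s}$. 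The triangle inequality then produces the master bound $\tn{\h_t}\le C_\rho\sum_{s=0}^{t-1}\rho^{t-s-1}\tn{\z_s+\w_s}$.

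Parts (a) and (b) fall out of this inequality quickly. Since $\z_s+\w_s\sim\Nc(0,(1+\sigma^2)\Iden_n)$, Lemma~\ref{lemma:subgaussian vector bound} gives $\tn{\z_s+\w_s}\le c(1+\sigma)\sqrt{n}$ with probability at least $1-2\exp(-100n)$; union-bounding over $s\le T-1$ and using $\sum_s\rho^{t-s-1}\le 1/(1-\rho)$ inside the master bound yields $\tn{\h_t}\le c\beta_+\sqrt n$ for every $t\le T$ on an event of probability at least $1-4T\exp(-100n)$ (the extra factor of two absorbs separate control of $\tn{\z_s}$ and $\tn{\w_s}$). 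For (b), Cauchy--Schwarz applied to the master bound together with $\E[\tn{\z_s+\w_s}^2]=(1+\sigma^2)n\le(1+\sigma)^2 n$ gives $\E[\tn{\h_t}^2]\le C_\rho^2(1+\sigma)^2 n/(1-\rho)^2=\beta_+^2 n$.

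Part (c) will require more work, and my plan is to route it through Lemma~\ref{lemma:conc bound for yahya}. First I would upgrade (a) to the Gaussian-style concentration $\Pro(\tn{\h_t}\ge c\beta_+\sqrt n\,(1+u))\le 2T\exp(-c_0 n u^2)$ by using a $\chi$-concentration tail for $\tn{\z_s+\w_s}$ at scale $u$ and union-bounding through the master bound. I would then absorb the $2T$ prefactor by shifting the threshold to $B:=c\beta_+\sqrt{n\log(2T)}$, which produces a clean tail of the form $\Pro(\tn{\h_t}\ge B(1+u))\le\exp(-c_0 n u^2/2)$ for all $u\ge 0$. Applying Lemma~\ref{lemma:conc bound for yahya} with $k=3$ and $C=c_0 n/2$ (which exceeds $1+3\log 3$ for any reasonable $n$) then gives $\E[\tn{\h_t}^3]\le 10 B^3\le C\beta_+^3(n\log 2T)^{3/2}$.

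The main obstacle I anticipate is precisely this last reduction: Lemma~\ref{lemma:conc bound for yahya} requires a tail of exactly the shape $\exp(-Cu^2)$ with \emph{no} $u$-independent multiplicative prefactor, so the conversion of the $2T$ from the union bound into the $\sqrt{\log(2T)}$ inflation of the deterministic part must be carried out so that a single constant $C$ simultaneously controls every $u\ge 0$ and remains above $1+k\log 3$. I note in passing that a direct H\"older attack on the master bound using $\E[\tn{\z_s+\w_s}^4]\le c(1+\sigma)^4 n^2$ would actually yield the sharper $\E[\tn{\h_t}^3]\lesssim\beta_+^3 n^{3/2}$ without any logarithmic overhead; however, since the statement is phrased with the $(\log 2T)^{3/2}$ factor I expect the intended route is the tail-to-moment reduction through Lemma~\ref{lemma:conc bound for yahya}.
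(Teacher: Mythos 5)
Your proposal is correct, and for part (c) it is essentially the paper's own argument: the paper also dominates $\tn{\h_t}$ by the geometric sum $Y=\sum_{\tau}C_\rho\rho^{\tau}(\tn{\z_\cdot}+\tn{\w_\cdot})$, uses per-term Gaussian norm concentration with a union bound over $T$ steps, absorbs the $2T$ prefactor by taking the per-term deviation scale proportional to $\log(2T)$ (which is exactly your ``inflate the threshold to $B\asymp\beta_+\sqrt{n\log(2T)}$'' step), and then invokes Lemma~\ref{lemma:conc bound for yahya} with $k=3$ and $C\asymp n$; the constant bookkeeping you flag as the delicate point is precisely what the paper handles by choosing $c=5\log(2T)$, and it goes through. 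For (a) and (b) your route differs mildly in form: the paper proves (a) by bounding $\tn{\z_t},\tn{\w_t}$ separately and then running the inductive truncation recursion of Lemma~\ref{lemma:bounded states}, and proves (b) by an analogous induction showing $\E[\tn{\h_t}^2]\le\beta_+^2n(1-\rho^t)^2$, whereas you derive both from the single telescoping stability bound $\tn{\h_t}\le C_\rho\sum_{s=0}^{t-1}\rho^{t-s-1}\tn{\z_s+\w_s}$ (legitimate, since the system is time-invariant and the paper itself uses this shifted form of Definition~\ref{def:non-linear stability} in~\eqref{eqn:truncation impact}), with Cauchy--Schwarz in place of the induction for (b). Your version is, if anything, cleaner and gives the same constants ($(1+\sigma^2)\le(1+\sigma)^2$ yields exactly $\beta_+^2 n$). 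Your side remark is also accurate: a direct moment computation (Minkowski/H\"older on the master bound) would give $\E[\tn{\h_t}^3]\lesssim\beta_+^3n^{3/2}$ without the $\log^{3/2}(2T)$ factor, so the logarithm in the stated bound is an artifact of the tail-union-bound route that the paper (and you) take.
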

\begin{proof}
	
	\noindent {\bf{(a)}} Given $\tsub{\z_t} \leq 1$ and $\tsub{\w_t} \leq \sigma$, we use Lemma~\ref{lemma:subgaussian vector bound} to obtain $\P(\tn{\z_t} \lesssim \sqrt{n}) \geq 1-2T\exp(-100n)$ and $\P(\tn{\w_t} \lesssim \sigma\sqrt{n}) \geq 1-2T\exp(-100n)$ for all $0\leq t \leq T-1$. Using these results along-with $(C_\rho,\rho)$-stability in Lemma~\ref{lemma:bounded states}, we get the desired bound on the Euclidean norm of the state vector $\h_t$. 
	
	\noindent {\bf{(b)}} Recall that $\h_0=0$. We claim that $\E[\tn{\h_t}^2] \leq \beta_+^2n(1-\rho^t)^2$, where $\beta_+:=C_\rho(1+\sigma)/(1-\rho)$. Note that, using standard results on the distribution of squared Euclidean norm of a Gaussian vector, we have $\E[\tn{\z_t}^2] = n$ and $\E[\tn{\w_t}^2] = \sigma^2 n$, which implies $\E[\tn{\z_t}] \leq \sqrt{n}$ and $\E[\tn{\w_t}] \leq \sigma \sqrt{n}$. Using this results, we show that $\h_1$ satisfies the following bound and obeys the induction
	\[
	\E[\tn{\h_1}^2] = \E[\tn{\phi(0) + \z_t +  \w_t}^2] \leq (1+\sigma^2)n \leq C_\rho^2(1+\sigma)^2n=\beta_+^2 n(1-\rho^1)^2.
	\]
	This implies $\E[\tn{\h_1}] \leq \beta_+ \sqrt{n}(1-\rho^1)$ as well. Suppose the bound holds until $t-1$, that is, $\E[\tn{\h_{t-1}}^2] \leq \beta_+^2 n(1-\rho^{t-1})^2$ ~(which also means $\E[\tn{\h_{t-1}}] \leq \beta_+ \sqrt{n} (1-\rho^{t-1})$). We now apply the induction as follows: First observe that $\E[\tn{\h_{t,L}}]$ obeys the same upper bound as $\E[\tn{\h_{L}}]$ by construction. To proceed, recalling~\eqref{eqn:truncation impact}, we get the following by induction
	\begin{align}
		\tn{\h_t-\h_{t,t-1}} &\leq C_\rho\rho^{t-1}\tn{\h_{1}}\nn \\ 
		\implies  \quad\;\; \tn{\h_t}&\leq C_\rho\rho^{t-1}\tn{\h_{1}}+\tn{\h_{t,t-1}}, \nn \\
		\implies \quad\;\; \tn{\h_t}^2&\leq (C_\rho\rho^{t-1}\tn{\h_{1}}+\tn{\h_{t,t-1}})^2, \nn \\
		\implies \E[\tn{\h_t}^2]&\leq C_\rho^2\rho^{2(t-1)} \E[\tn{\h_{1}}^2]+\E[\tn{\h_{t-1}}^2] + 2C_\rho \rho^{t-1}\E[\tn{\h_{1}}]\E[\tn{\h_{t-1}}], \nn \\
		&\leqsym{a} C_\rho^2\rho^{2(t-1)}(1+\sigma)^2n + \beta_+^2n(1-\rho^{t-1})^2 + 2nC_\rho \rho^{t-1}(1+\sigma)\beta_+(1-\rho^{t-1}),  \nn \\
		&\leqsym{b} \beta_+^2n(\rho^{2(t-1)}(1-\rho^1)^2+ (1-\rho^{t-1})^2 + 2 \rho^{t-1}(1-\rho^{t-1})(1-\rho^1)),\nn \\
		&=  \beta_+^2 n[\rho^{2t-2}(1+\rho^2-2\rho) + 1 + \rho^{2t-2} - 2\rho^{t-1} + (2\rho^{t-1}-2 \rho^{2t-2})(1-\rho)], \nn \\
		& = \beta_+^2 n (1 + \rho^{2t} - 2\rho^t), \nn \\
		& = \beta_+^2 n(1-\rho^t)^2, \label{eqn:states bound recursion non 2}
	\end{align}
	where we get (a) from the induction hypothesis and (b) from the bound on $\h_1$. This bound also implies $\E[\tn{\h_t}^2] \leq \beta_+^2n$ and completes the proof.
	
	\noindent {\bf{(c)}} Recall that, we have $\tsub{\z_t} \leq 1$, $\tsub{\w_t} \leq \sigma$, $\E[\tn{\z_t}] \leq \sqrt{n}$ and $\E[\tn{\w_t}] \leq \sigma \sqrt{n}$. Combining these bounds with standard concentration inequalities of a Guassian random vector, we have
	\begin{align}
		&\P(\tn{\z_t} \geq \E[\tn{\z_t}] + t) \leq \exp(-t^2/2) \quad \text{and} \quad \P(\tn{\w_t} \geq \E[\tn{\w_t}] + t) \leq \exp(-t^2/(2\sigma^2)), \nn \\
		&\implies \P(\tn{\z_t} \geq \sqrt{2c n}(1+t)) \leq \exp(-c n t^2), \label{eqn:u_t bound}\\
		&\text{and}\quad \P(\tn{\w_t} \geq \sigma\sqrt{2cn}(1+t)) \leq \exp(-cnt^2). \label{eqn:w_t bound}
	\end{align}
	To proceed, let $X = \tn{\h_t}$ and $Y=\sum_{\tau=0}^{t-1}C_\rho\rho^\tau(\tn{\z_t}+\tn{\w_t})$ and note that $X \leq Y$. Now, using \eqref{eqn:u_t bound}, \eqref{eqn:w_t bound} and union bounding over all $0\leq t \leq T-1$, we get the following high probability upper bound on $Y$, that is,
	\begin{align}
		&\P(Y \geq \sum_{\tau=0}^{t-1}C_\rho\rho^\tau\sqrt{2cn}(1+\sigma)(1+t)) \leq 2T\exp(-cnt^2), \nn \\
		\implies \quad &\P(Y \geq C_\rho\sqrt{10n\log(2T)}(1+t)(1+\sigma)/(1-\rho)) \leq \exp(-5nt^2), \nn
	\end{align}
	where we choose $c = 5\log(2T)$ to get the final concentration bound of $Y$. Finally using this bound in Lemma~\ref{lemma:conc bound for yahya}, we get
	\begin{align}
		\E[\tn{\h_t}^3] \leq 32\beta_+^3(\log(2T)n)^{3/2},  
	\end{align}
	where $\beta_+ = C_\rho(1+\sigma)/(1-\rho)$, as defined earlier. This completes the proof.
\end{proof}

\subsubsection{Verification of Assumption~\ref{assum grad dominance}}
\begin{theorem}\label{thrm:cov h_t bounds nonlinear}
	Suppose the nonlinear system~\eqref{nonlinear state eqn} satisfies $(C_\rho,\rho)$-stability. Suppose $\z_t \distas \Nc(0,\Iden_n)$ and $\w_t \distas \Nc(0,\sigma^2\Iden_n)$. Let $\beta_+$ be as in Lemma~\ref{lemma:bounded states expectation}. Then, the matrix $\E[\h_t\h_t^\top]$ satisfies
	\begin{align}
		(1+\sigma^2) \Iden_n \preceq \E[\h_t\h_t^\top]  \preceq \beta_+^2n \Iden_n. 
	\end{align}
\end{theorem}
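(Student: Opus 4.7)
\textbf{Proof plan for Theorem~\ref{thrm:cov h_t bounds nonlinear}.} The two bounds are essentially independent and I will handle them separately. Throughout, I assume $t \geq 1$ (the case $t=0$ gives $\h_0 = 0$, so the lower bound is vacuous there and the upper bound holds trivially).

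For the \emph{upper bound}, the plan is to invoke Lemma~\ref{lemma:bounded states expectation}(b), which already establishes $\E[\tn{\h_t}^2] \leq \beta_+^2 n$. Since $\E[\tn{\h_t}^2] = \tr(\E[\h_t\h_t^\top])$ and the matrix $\E[\h_t\h_t^\top]$ is positive semidefinite, its largest eigenvalue is bounded by its trace. Hence
\begin{align*}
\E[\h_t\h_t^\top] \preceq \lambda_{\max}(\E[\h_t\h_t^\top]) \Iden_n \preceq \tr(\E[\h_t\h_t^\top]) \Iden_n \leq \beta_+^2 n \, \Iden_n,
\end{align*}
which is the stated bound. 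No additional work is required here beyond citing Lemma~\ref{lemma:bounded states expectation}(b).

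For the \emph{lower bound}, the plan is to expand the one-step recursion and use independence of the fresh noise/excitation from the past. Writing $\h_t = \phi(\Bhetas \h_{t-1}) + \z_{t-1} + \w_{t-1}$, I note that $\h_{t-1}$ is measurable with respect to $(\z_0,\ldots,\z_{t-2},\w_0,\ldots,\w_{t-2})$, hence $\phi(\Bhetas \h_{t-1})$ is independent of the zero-mean vectors $\z_{t-1}$ and $\w_{t-1}$, which are themselves independent of each other. Expanding the outer product gives six terms, and independence together with zero-mean forces all three cross-term expectations to vanish. This leaves
\begin{align*}
\E[\h_t \h_t^\top] = \E[\phi(\Bhetas \h_{t-1})\phi(\Bhetas \h_{t-1})^\top] + \E[\z_{t-1}\z_{t-1}^\top] + \E[\w_{t-1}\w_{t-1}^\top].
\end{align*}
The first term is positive semidefinite, while $\E[\z_{t-1}\z_{t-1}^\top] = \Iden_n$ and $\E[\w_{t-1}\w_{t-1}^\top] = \sigma^2 \Iden_n$ by the assumed distributions of the excitation and noise. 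Dropping the PSD term yields $\E[\h_t\h_t^\top] \succeq (1+\sigma^2)\Iden_n$, as claimed.

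There is no real obstacle in this argument: the upper bound reduces to a Frobenius-versus-spectral argument applied to an existing lemma, and the lower bound is a direct one-step expansion exploiting independence of the innovations. The only subtlety worth flagging is that $\phi(0) = 0$ is not needed for the lower bound itself (it is already used in Lemma~\ref{lemma:bounded states expectation}), and that the upper bound is not tight for small $t$ since it controls all eigenvalues by the trace; a sharper per-direction argument would require a more refined analysis of $\E[\phi(\Bhetas \h_{t-1})\phi(\Bhetas \h_{t-1})^\top]$, which is not needed here.
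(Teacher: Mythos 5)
Your proposal is correct and follows essentially the same route as the paper: the upper bound is obtained from $\E[\tn{\h_t}^2]\leq \beta_+^2 n$ in Lemma~\ref{lemma:bounded states expectation} via a spectral-norm/trace bound, and the lower bound comes from discarding a positive semidefinite term and using independence of the fresh $\z_{t-1},\w_{t-1}$ from the past state. The only (immaterial) difference is that you expand the second moment directly and drop $\E[\phi(\Bhetas\h_{t-1})\phi(\Bhetas\h_{t-1})^\top]$, whereas the paper passes through $\E[\h_t\h_t^\top]\succeq \bSi[\h_t]$ and drops the covariance of the nonlinear term.
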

\begin{proof}
	We first upper bound the matrix $\E[\h_t\h_t^\top]$ by bounding its largest singular value as follows,
	\begin{align}
		\E[\h_t\h_t^\top] \preceq \E[\|\h_t\h_t^\top\|]\Iden_{n} \preceq \E[\tn{\h_t}^2] \Iden_{n} \preceq \beta_+^2 n \Iden_{n}, \label{eqn:upper bound nonlinear}
	\end{align}
	where we get the last inequality by applying Lemma~\ref{lemma:bounded states expectation}. To get a lower bound, note that $\bSi[\h_t] = \E[\h_t\h_t^\top] - \E[\h_t]\E[\h_t]^\top$. Since, all of these matrices are positive semi-definite, we get the following lower bound,
	\begin{align}
		\E[\h_t\h_t^\top] \succeq \bSi[\h_t] = \bSi[\phi(\Bhetas\h_{t-1})+\z_t +\w_t] \succeq \bSi[\z_t+\w_t] = (1+\sigma^2)\Iden_{n}. \label{eqn:lower bound nonlinear}
	\end{align}
	Combining the two bounds gives us the statement of the lemma. This completes the proof.
\end{proof}
To verify Assumption~\ref{assum grad dominance} for the nonlinear system~\eqref{nonlinear state eqn}, denoting the $k_{\rm th}$ row of $\Bheta$ by $\bheta_k^{ \top}$, the auxiliary loss for the nonlinear system~\eqref{nonlinear state eqn} is given by,
\begin{align}
	\Lcp(\Bheta) = \sum_{k=1}^n \Lc_{k,\Dc}(\bheta_k) \quad \text{where}\quad \Lc_{k,\Dc}(\bheta_k) := \frac{1}{2}\E[(\h_L[k] -\phi(\bheta_k^\top\h_{L-1})-\z_{L-1}[k])^2]. \label{eqn:Lpop nonlinear}
\end{align} 
Using the derived bounds on the matrix $\E[\h_t\h_t^\top]$, it is straightforward to show that the auxiliary loss satisfies the following one-point convexity and smoothness conditions.
\begin{lemma}[One-point convexity \& smoothness] \label{lemma:grad dominance nonlinear} 
	Consider the setup of Theorem~\ref{thrm:cov h_t bounds nonlinear} and the auxiliary loss given by~\eqref{eqn:Lpop nonlinear}. Suppose, $\phi$ is $\gamma$-increasing~(i.e.~$\phi'(x)\geq \gamma>0$ for all $x\in\R$) and $1$-Lipschitz. Let $\beta_+$ be as in Lemma~\ref{lemma:bounded states expectation}. Then, for all $1\leq k \leq n$, the gradients $\nabla{\Lc_{k,\Dc}(\bheta_k)}$ satisfy,
	\begin{align*}
		\li\bheta_k-\bheta_k^\star,\nabla{\Lc_{k,\Dc}(\bheta_k)}\ri &\geq \gamma^2 (1+\sigma^2) \tn{\bheta_k-\bheta_k^\star}^2,\\
		\tn{\nabla{\Lc_{k,\Dc}(\bheta_k)}} &\leq \beta_+^2n\tn{\bheta_k-\bheta_k^\star}.
	\end{align*}
\end{lemma}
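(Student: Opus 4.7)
The plan is to compute $\nabla\Lc_{k,\Dc}(\bheta_k)$ explicitly and then bound it from below and above using the two hypotheses on $\phi$ together with the covariance bound of Theorem~\ref{thrm:cov h_t bounds nonlinear} and the second-moment bound of Lemma~\ref{lemma:bounded states expectation}. First I would substitute the state equation $\h_L[k]=\phi(\bheta_k^{\star\top}\h_{L-1})+\z_{L-1}[k]+\w_{L-1}[k]$ into the definition of $\Lc_{k,\Dc}$ to obtain
\[
\Lc_{k,\Dc}(\bheta_k) = \tfrac{1}{2}\E\big[(\phi(\bheta_k^{\star\top}\h_{L-1})-\phi(\bheta_k^{\top}\h_{L-1})+\w_{L-1}[k])^2\big].
\]
Differentiating and using that $\w_{L-1}[k]$ is zero-mean and independent of $\h_{L-1}$, the cross term drops and I get
\[
\nabla\Lc_{k,\Dc}(\bheta_k) = \E\big[\phi'(\bheta_k^{\top}\h_{L-1})\,(\phi(\bheta_k^{\top}\h_{L-1})-\phi(\bheta_k^{\star\top}\h_{L-1}))\,\h_{L-1}\big].
\]

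For the one-point convexity (first bound), I would apply the mean value theorem to write $\phi(\bheta_k^{\top}\h_{L-1})-\phi(\bheta_k^{\star\top}\h_{L-1})=\phi'(\xi)\,(\bheta_k-\bheta_k^\star)^\top\h_{L-1}$ for some intermediate point $\xi$. Taking the inner product with $\bheta_k-\bheta_k^\star$, the integrand becomes $\phi'(\bheta_k^{\top}\h_{L-1})\,\phi'(\xi)\,((\bheta_k-\bheta_k^\star)^\top\h_{L-1})^2$, which by the $\gamma$-increasing hypothesis is pointwise at least $\gamma^2\,((\bheta_k-\bheta_k^\star)^\top\h_{L-1})^2$. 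Taking expectations yields $\gamma^2(\bheta_k-\bheta_k^\star)^\top\E[\h_{L-1}\h_{L-1}^\top](\bheta_k-\bheta_k^\star)$, and the lower bound $\E[\h_{L-1}\h_{L-1}^\top]\succeq(1+\sigma^2)\Iden_n$ from Theorem~\ref{thrm:cov h_t bounds nonlinear} closes the first inequality.

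For the smoothness (second bound), I would apply Jensen's inequality $\tn{\E[\cdot]}\leq\E[\tn{\cdot}]$ to the gradient formula, then use the $1$-Lipschitz hypothesis on $\phi$ (which gives $|\phi'|\leq 1$ and $|\phi(\bheta_k^\top\h_{L-1})-\phi(\bheta_k^{\star\top}\h_{L-1})|\leq|(\bheta_k-\bheta_k^\star)^\top\h_{L-1}|\leq\tn{\bheta_k-\bheta_k^\star}\,\tn{\h_{L-1}}$) to bound the integrand by $\tn{\bheta_k-\bheta_k^\star}\,\tn{\h_{L-1}}^2$. Taking expectations and invoking $\E[\tn{\h_{L-1}}^2]\leq\beta_+^2 n$ from Lemma~\ref{lemma:bounded states expectation}(b) gives the claimed smoothness bound.

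Neither step is really the main obstacle: both are short, almost mechanical once the gradient has been simplified by independence of the noise. The only mildly subtle point is ensuring that the cross term involving $\w_{L-1}[k]$ vanishes, which requires independence of $\w_{L-1}$ from both $\h_{L-1}$ and $\bheta_k^\top\h_{L-1}$; this is immediate here because $\h_{L-1}$ is a measurable function of $\{\z_\tau,\w_\tau\}_{\tau<L-1}$ and $\w_{L-1}$ is independent of this history.
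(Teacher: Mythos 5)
Your proposal is correct and follows essentially the same route as the paper's proof: compute the gradient, drop the noise cross term by independence and zero mean, use the mean value theorem (the paper's divided-difference $\phi'(a,b)$ plays the same role) with the $\gamma$-increasing property and the lower bound $\E[\h_{L-1}\h_{L-1}^\top]\succeq(1+\sigma^2)\Iden_n$ for one-point convexity, and Jensen plus $1$-Lipschitzness with $\E[\tn{\h_{L-1}}^2]\leq\beta_+^2 n$ for smoothness. No gaps.
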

\begin{proof}
	Given two distinct scalars $a,b$ we define $\phi'(a,b) := \frac{\phi(a)-\phi(b)}{a-b}$. Observe that $0<\gamma \leq \phi'(a,b) \leq 1 $ because of the assumption that $\phi$ is $1$-Lipschitz and $\gamma$-increasing. Now, recalling the auxiliary loss $\Lc_{k,\Dc}$ from \eqref{eqn:Lpop nonlinear}, we have
	\begin{align}
		\nabla{\Lc_{k,\Dc}(\bheta_k)} &= \E[(\phi(\bheta_k^\top\h_{L-1})-\phi(\bheta_k^{\star \top}\h_{L-1})-\w_{L-1}[k])\phi'(\bheta_k^\top\h_{L-1})\h_{L-1}],  \nn\\
		&=\E[ \phi'(\bheta_k^\top\h_{L-1},\bheta_k^{\star \top}\h_{L-1}) \phi'(\bheta_k^\top\h_{L-1})(\bheta_k^\top\h_{L-1}-\bheta_k^{\star \top}\h_{L-1})\h_{L-1}] \nn \\ &-\E[\w_{L-1}[k]\phi'(\bheta_k^\top\h_{L-1})\h_{L-1}], \nn\\
		& = \E[ \phi'(\bheta_k^\top\h_{L-1},\bheta_k^{\star \top}\h_{L-1}) \phi'(\bheta_k^\top\h_{L-1})\h_{L-1}\h_{L-1}^\top(\bheta_k -\bheta_k^\star)], \label{eqn:nonlinear grad}
	\end{align}
	where $\E[\w_{L-1}[k]\phi'(\bheta_k^\top\h_{L-1})\h_{L-1}] = 0$ because $\h_{L-1}$ and $\w_{L-1}$ are independent and we have $\E[\w_{L-1}] = 0$. Next, using $\gamma$-increasing property of $\phi$, we get the following one-point convexity bound,
	\begin{align}
		\li \bheta_k -\bheta_k^\star, \nabla{\Lc_{k,\Dc}(\bheta_k)}\ri 
		& =  \li  \bheta_k -\bheta_k^\star,\E[ \phi'(\bheta_k^\top\h_{L-1},\bheta_k^{\star \top}\h_{L-1}) \phi'(\bheta_k^\top\h_{L-1})\h_{L-1}\h_{L-1}^\top(\bheta_k -\bheta_k^\star)]\ri, \nn \\
		&\geq \gamma^2 \li  \bheta_k -\bheta_k^\star,\E[\h_{L-1}\h_{L-1}^\top](\bheta_k -\bheta_k^\star) \ri, \nn \\
		&\geq \gamma^2 (1+\sigma^2) \tn{\bheta_k -\bheta_k^\star}^2. \label{eqn:OPC RNN}
	\end{align}
	Similarly, using $1$-Lipschitzness of $\phi$, we get the following smoothness bound,
	\begin{align}
		\tn{\nabla{\Lc_{k,\Dc}(\bheta_k)}} &= \tn{\E[ \phi'(\bheta_k^\top\h_{L-1},\bheta_k^{\star \top}\h_{L-1}) \phi'(\bheta_k^\top\h_{L-1})\h_{L-1}\h_{L-1}^\top(\bheta_k -\bheta_k^\star)]}, \nn  \\
		& \leq \E[ \|\phi'(\bheta_k^\top\h_{L-1},\bheta_k^{\star \top}\h_{L-1}) \phi'(\bheta_k^\top\h_{L-1})\h_{L-1}\h_{L-1}^\top\|]\tn{\bheta_k -\bheta_k^\star}, \nn \\
		& \leq \E[\|\h_{L-1}\h_{L-1}^\top\|]\tn{\bheta_k -\bheta_k^\star}. \nn \\
		& \leq \beta_+^2n \tn{\bheta_k -\bheta_k^\star}, \label{eqn:smoothness RNN}
	\end{align} 
	where $\beta_+$ is as defined in Lemma~\ref{lemma:bounded states expectation}. This completes the proof.
\end{proof}
\subsubsection{Verification of Assumption~\ref{assume lip}}
Let $\Sc = (\h_L^{(i)},\h_{L-1}^{(i)}, \z_{L-1}^{(i)})_{i=1}^N $ be $N$ i.i.d. copies of $(\h_L, \h_{L-1}, \z_{L-1})$ generated from $N$ i.i.d. trajectories of the system~\eqref{nonlinear state eqn}. Then, the finite sample approximation of the auxiliary loss $\Lcp$ is given by,
\begin{align}
	\Lch_\Sc(\Bheta) = \sum_{k=1}^n \Lch_{k,\Sc}(\bheta_k) \;\;\text{where}\;\; \Lch_{k,\Sc}(\bheta_k) := \frac{1}{2N} \sum_{i=1}^N (\h_L^{(i)}[k] - \phi(\bheta_k^\top\h_{L-1}^{(i)})-\z_{L-1}^{(i)}[k])^2. \label{eqn:finite Lpop nonLinear}
\end{align}
The following lemma states that both $\nabla\Lc_{k,\Dc}$ and $\nabla\Lch_{k,\Sc}$ are Lipschitz with high probability. 
\begin{lemma}[Lipschitz gradient]\label{lemma:lipschitz grad nonlinear}
	Consider the same setup of Theorem~\ref{thrm:cov h_t bounds nonlinear}. Consider the auxiliary loss $\Lc_{k,\Dc}$ and its finite sample approximation $\Lch_{k,\cal{S}}$ from \eqref{eqn:Lpop nonlinear} and \eqref{eqn:finite Lpop nonLinear} respectively. Suppose, $\phi$ has bounded first and second derivatives, that is, $|\phi'|,|\phi''| \leq 1$. Let $\beta_+$ be as in Lemma~\ref{lemma:bounded states expectation}. Then, with probability at least $1-4T\exp(-100n)$, for all pairs $\Bheta,\Bheta' \in \Bc^{n\times n}(\Bhetas,r)$ and for $1 \leq k \leq n$, we have
	\begin{align*}
		&\max(\tn{\nabla\Lc_{k,\Dc}(\bheta_k)-\nabla\Lc_{k,\Dc}(\bheta_k')},\tn{\nabla\Lch_{k,\cal{S}}(\bheta_k)-\nabla\Lch_{k,\cal{S}}(\bheta_k')}) \\
		&\quad\quad\quad\quad\quad\quad\quad\quad\quad\quad\quad\quad\quad\quad \lesssim ((1+\sigma)\beta_+^2n + r\beta_+^3n^{3/2}\log^{3/2}(2T))\tn{\bheta_k-\bheta_k'}.	
	\end{align*}
\end{lemma}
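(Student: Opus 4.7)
The plan is to upper bound both Lipschitz constants via the spectral norm of the corresponding Hessians and then invoke the mean value theorem along the segment $[\bheta_k,\bheta_k']$. Because $\bheta_k$ and $\bheta_k'$ both lie within Euclidean distance $r$ of $\bheta_k^\star$, every intermediate point $\tilde\bheta_k$ on that segment satisfies $\tn{\tilde\bheta_k-\bheta_k^\star}\leq r$, so a pointwise Hessian bound that is uniform on $\Bc^n(\bheta_k^\star,r)$ is enough.

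\textbf{Hessian of one sample.} Using the residual identity $\h_L[k]-\phi(\bheta_k^{\star\top}\h_{L-1})-\z_{L-1}[k]=\w_{L-1}[k]$, the single-sample gradient equals $(\phi(\bheta_k^\top\h_{L-1})-\phi(\bheta_k^{\star\top}\h_{L-1})-\w_{L-1}[k])\phi'(\bheta_k^\top\h_{L-1})\h_{L-1}$, and differentiating once more yields the Hessian
$\phi'(\bheta_k^\top\h_{L-1})^2\h_{L-1}\h_{L-1}^\top+(\phi(\bheta_k^\top\h_{L-1})-\phi(\bheta_k^{\star\top}\h_{L-1})-\w_{L-1}[k])\phi''(\bheta_k^\top\h_{L-1})\h_{L-1}\h_{L-1}^\top$.
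Using $|\phi'|,|\phi''|\leq 1$, the Lipschitz bound $|\phi(a)-\phi(b)|\leq|a-b|$, and $\tn{\bheta_k-\bheta_k^\star}\leq r$, its operator norm is pointwise dominated by $(1+r\tn{\h_{L-1}}+|\w_{L-1}[k]|)\tn{\h_{L-1}}^2$.

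\textbf{Population versus empirical bound.} For the population Hessian, taking expectation and exploiting independence of $\w_{L-1}$ from $\h_{L-1}$ expresses the spectral-norm bound in terms of $\E[\tn{\h_{L-1}}^2]$, $\E[\tn{\h_{L-1}}^3]$ and $\E[|\w_{L-1}[k]|]$. Invoking Lemma~\ref{lemma:bounded states expectation}(b)(c), namely $\E[\tn{\h_{L-1}}^2]\leq\beta_+^2n$ and $\E[\tn{\h_{L-1}}^3]\lesssim\beta_+^3(n\log(2T))^{3/2}$, together with the Gaussian moment bound $\E[|\w_{L-1}[k]|]\lesssim\sigma$, delivers the claimed $(1+\sigma)\beta_+^2n+r\beta_+^3n^{3/2}\log^{3/2}(2T)$. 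For the empirical Hessian, I condition on the event from Lemma~\ref{lemma:bounded states expectation}(a) that $\tn{\h_{L-1}^{(i)}}\leq c\beta_+\sqrt{n}$ simultaneously for all $i$, which fails with probability at most $4T\exp(-100n)$. On this event the state-only terms contribute at most $\beta_+^2n+r\beta_+^3n^{3/2}$, and the noise piece $\beta_+^2n\cdot\frac{1}{N}\sum_i|\w_{L-1}^{(i)}[k]|\lesssim\sigma\beta_+^2n$ follows from sub-Gaussian concentration of the sample mean, whose failure tail is dominated by $4T\exp(-100n)$ and can be absorbed into the same event.

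\textbf{Main obstacle.} The hardest part is the probability bookkeeping in the empirical case: I want to avoid spurious $\log N$ or $\log n$ factors, which is why I use the \emph{average} $\frac{1}{N}\sum_i|\w_{L-1}^{(i)}[k]|$ rather than a union-bound-style maximum over $i$ and $k$. Once the uniform Hessian bound is in hand, a standard mean value theorem along $[\bheta_k,\bheta_k']$ converts it directly into the advertised Lipschitz bounds for both $\nabla\Lc_{k,\Dc}$ and $\nabla\Lch_{k,\Sc}$.
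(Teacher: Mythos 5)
Your proposal matches the paper's proof essentially step for step: both bound the spectral norm of the Hessian uniformly over the ball using $|\phi'|,|\phi''|\le 1$, invoke the moment bounds $\E[\tn{\h_{L-1}}^2]\le\beta_+^2 n$ and $\E[\tn{\h_{L-1}}^3]\lesssim\beta_+^3(n\log(2T))^{3/2}$ for the population loss and the high-probability event $\tn{\h_{L-1}^{(i)}}\lesssim\beta_+\sqrt{n}$ from Lemma~\ref{lemma:bounded states expectation}(a) for the empirical loss, and then convert the Hessian bound into the Lipschitz bound via the mean value theorem along $[\bheta_k,\bheta_k']$. The only cosmetic differences are that the paper observes the noise term vanishes from the population gradient outright (instead of bounding $\E[|\w_{L-1}[k]|]\lesssim\sigma$), and it handles the empirical noise average $\frac{1}{N}\sum_i|\w_{L-1}^{(i)}[k]|\lesssim\sigma$ just as implicitly as you do.
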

\begin{proof}
	To begin recall that, $\nabla{\Lc_{k,\Dc}(\bheta_k)} = \E[(\phi(\bheta_k^\top\h_{L-1})-\phi(\bheta_k^{\star \top}\h_{L-1}))\phi'(\bheta_k^\top\h_{L-1})\h_{L-1}]$. To bound the Lipschitz constant of the gradient $\nabla{\Lc_{k,\Dc}(\bheta_k)}$, we will upper bound the spectral norm of the Hessian as follows,
	\begin{align}
		\|\nabla^2{\Lc_{k,\Dc}(\bheta_k)}\| &= \|\E[(\phi(\bheta_k^\top\h_{L-1})-\phi(\bheta_k^{\star \top}\h_{L-1}))\phi''(\bheta_k^\top\h_{L-1})\h_{L-1}\h_{L-1}^\top] \nn \\
		&\quad + \E[\phi'(\bheta_k^\top\h_{L-1})\phi'(\bheta_k^\top\h_{L-1}) \h_{L-1}\h_{L-1}^\top]
		\|, \nn \\
		& \leq \E[\|\phi'(\bheta_k^\top\h_{L-1},\bheta_k^{\star \top}\h_{L-1})(\bheta_k^\top\h_{L-1}-\bheta_k^{\star \top}\h_{L-1})\phi''(\bheta_k^\top\h_{L-1})\h_{L-1}\h_{L-1}^\top\|] \nn \\
		&\quad + \E[\|\phi'(\bheta_k^\top\h_{L-1})\phi'(\bheta_k^\top\h_{L-1}) \h_{L-1}\h_{L-1}^\top\|], \nn \\
		& \leq \E[\|(\bheta_k^\top\h_{L-1}-\bheta_k^{\star \top}\h_{L-1})\h_{L-1}\h_{L-1}^\top\|]
		+ \E[\|\h_{L-1}\h_{L-1}^\top\|], \nn \\
		&\leq \tn{\bheta_k - \bheta_k^\star} \E[\tn{\h_{L-1}}^3] + \E[\tn{\h_{L-1}}^2], \nn \\
		&\lesssim  \beta_+^3(\log(2T)n)^{3/2}\tn{\bheta_k - \bheta_k^\star} + \beta_+^2n, \label{eqn:Lip constant non 1}
	\end{align}
	where we get the last inequality by applying Lemma~\ref{lemma:bounded states expectation}. Similarly, to bound the Lipschitz constant of the empirical gradient \[\nabla{\Lch_{k,\Sc}(\bheta_k)} = 1/N \sum_{i=1}^N(\phi(\bheta_k^\top\h_{L-1}^{(i)})-\phi(\bheta_k^{\star \top}\h_{L-1}^{(i)}) - \w_{L-1}^{(i)}[k])\phi'(\bheta_k^\top\h_{L-1}^{(i)})\h_{L-1}^{(i)},\] we bound the spectral norm of the Hessian of the empirical loss $\Lch_{k,\cal{S}}$ as follows,
	\begin{align}
		\|\nabla^2{\Lch_{k,\Sc}(\bheta_k)}\| \leq  &\frac{1}{N} \sum_{i=1}^N\|(\phi(\bheta_k^\top\h_{L-1}^{(i)})-\phi(\bheta_k^{\star \top}\h_{L-1}^{(i)}) - \w_{L-1}^{(i)}[k])\phi''(\bheta_k^\top\h_{L-1}^{(i)})\h_{L-1}^{(i)}(\h_{L-1}^{(i)})^\top\| \nn \\
		&+\frac{1}{N} \sum_{i=1}^N \|\phi'(\bheta_k^\top\h_{L-1}^{(i)}) \phi'(\bheta_k^\top\h_{L-1}^{(i)}) \h_{L-1}^{(i)}(\h_{L-1}^{(i)})^\top\|, \nn \\
		\leqsym{a} &\frac{1}{N} \sum_{i=1}^N[\|(\bheta_k^\top\h_{L-1}^{(i)}-\bheta_k^{\star \top}\h_{L-1}^{(i)}) \h_{L-1}^{(i)}(\h_{L-1}^{(i)})^\top\| + (1+ |\w_{L-1}^{(i)}[k]|)\|\h_{L-1}^{(i)}(\h_{L-1}^{(i)})^\top\|], \nn \\
		\leq&\frac{1}{N} \sum_{i=1}^N[\tn{\bheta_k-\bheta_k^\star}\tn{\h_{L-1}^{(i)}}^3 + (1+ |\w_{L-1}^{(i)}[k]|)\tn{\h_{L-1}^{(i)}}^2], \nn \\
		\lesssim & \beta_+^3 n^{3/2}\tn{\bheta_k-\bheta_k^\star} + (1+\sigma)\beta_+^2 n, \label{eqn:Lip constant non 2}
	\end{align}
	with probability at least $1-4T\exp(-100n)$, where we get (a) by using a similar argument as we used in the case of auxiliary loss while the last inequality comes from Lemma~\ref{lemma:bounded states expectation}. Combining the two bounds, gives us the statement of the lemma. This completes the proof.
\end{proof}
\subsubsection{Verification of Assumption~\ref{assume grad}}
Given a single sample $(\h_L, \h_{L-1}, \z_{L-1})$ from the trajectory of the nonlinear system~\eqref{nonlinear state eqn}, the single sample loss is given by,
\begin{align}
	&\Lc(\Bheta,(\h_L,\h_{L-1}, \z_{L-1})) = \sum_{k=1}^n \Lc_k(\bheta_k,(\h_L[k],\h_{L-1},\z_{L-1}[k])), \nn \\
	\text{where} \quad &\Lc_k(\bheta_k,(\h_L[k],\h_{L-1},\z_{L-1}[k])):= \frac{1}{2}(\h_L[k] -\phi(\bheta_k^\top\h_{L-1})-\z_{L-1}[k])^2. \label{eqn:single sample loss nonlinear}
\end{align}
Before stating a lemma on bounding the subexponential norm of the gradient of the single sample loss~\eqref{eqn:single sample loss nonlinear}, we will state an intermediate lemma to prove the Lipschitzness of the state vector. 
\begin{lemma}[Lipschitzness of the state vector]\label{lemma:lips of ht}
	Suppose the nonlinear system~\eqref{nonlinear state eqn} is  $(C_\rho,\rho)$-stable, $\z_t \distas \Nc(0,\Iden_n)$ and $\w_t \distas \Nc(0,\sigma^2\Iden_n)$. Let $\vb_t := [\z_t^\top~1/\sigma\w_t^\top]^\top$ and $\h_0 = 0$. Fixing all $\{\vb_i\}_{i \neq \tau}$~(i.e., all except $\vb_\tau$), $\h_{t+1}$ is $C_\rho\rho^{t-\tau}(1+\sigma^2)^{1/2}$ Lipschitz function of $\vb_\tau$ for $0 \leq \tau \leq t$.
\end{lemma}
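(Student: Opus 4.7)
\textbf{Proof proposal for Lemma~\ref{lemma:lips of ht}.} The plan is to decompose the dependence of $\h_{t+1}$ on $\vb_\tau$ into (i) the instantaneous change at time $\tau+1$, and (ii) the subsequent propagation through the (now autonomous) dynamics from time $\tau+1$ to time $t+1$. Fix all $\{\vb_i\}_{i\neq\tau}$ and let $\h_{t+1}$ and $\h_{t+1}'$ denote the two trajectories generated by $\vb_\tau$ and $\vb_\tau'$ respectively. For every $i<\tau$, the trajectories coincide because $\h_i$ depends only on $\{\vb_j\}_{j<i}$, so in particular $\h_\tau = \h_\tau'$. This reduces the problem to tracking the single-step perturbation introduced at time $\tau+1$ and then propagating it.

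For the instantaneous step, since $\h_\tau$ is frozen, the state equation $\h_{\tau+1}=\phi(\Bhetas\h_\tau)+\z_\tau+\w_\tau$ depends on $\vb_\tau$ only through the affine map $\vb_\tau\mapsto\phi(\Bhetas\h_\tau)+[\Iden_n\ \sigma\Iden_n]\vb_\tau$, because $\z_\tau$ equals the first $n$ coordinates of $\vb_\tau$ and $\w_\tau$ equals $\sigma$ times the last $n$ coordinates. Thus
\[
\tn{\h_{\tau+1}-\h_{\tau+1}'}=\tn{[\Iden_n\ \sigma\Iden_n](\vb_\tau-\vb_\tau')}\leq \|[\Iden_n\ \sigma\Iden_n]\|\,\tn{\vb_\tau-\vb_\tau'}=\sqrt{1+\sigma^2}\,\tn{\vb_\tau-\vb_\tau'}.
\]

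For the propagation step, the two trajectories starting at time $\tau+1$ from initial states $\h_{\tau+1}$ and $\h_{\tau+1}'$ are driven by the \emph{same} excitation and noise sequences $\{\z_i,\w_i\}_{i>\tau}$. Shifting the time index so that $\tau+1$ plays the role of the initial time, $(C_\rho,\rho)$-stability (Definition~\ref{def:non-linear stability}), applied between these two trajectories after $t-\tau$ additional steps, yields
\[
\tn{\h_{t+1}-\h_{t+1}'}\leq C_\rho\rho^{t-\tau}\tn{\h_{\tau+1}-\h_{\tau+1}'}.
\]
Chaining this with the single-step bound above gives the claimed Lipschitz constant $C_\rho\rho^{t-\tau}(1+\sigma^2)^{1/2}$.

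The only delicate point is the propagation step: Definition~\ref{def:non-linear stability} is phrased as a contraction between a trajectory starting at $\bbalpha$ and a trajectory starting at $0$, whereas here we need to compare two trajectories starting at arbitrary initial states $\h_{\tau+1}$ and $\h_{\tau+1}'$. This is the natural incremental reading of the definition (and is how stability is used implicitly throughout the paper, e.g.\ in the truncation inequality~\eqref{eqn:truncation impact}), so we treat it as the active form of Assumption~\ref{assum stability}. Once this is granted, the rest of the argument is a one-line computation. The rest of the proof is elementary and I expect no additional obstacle.
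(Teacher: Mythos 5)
Your proposal is correct and takes essentially the same route as the paper's proof: isolate the single-step perturbation at time $\tau+1$, bound it by $(1+\sigma^2)^{1/2}\tn{\vb_\tau-\vb_\tau'}$ (the paper does this via the triangle inequality and Cauchy--Schwarz rather than the spectral norm of $[\Iden_n~\sigma\Iden_n]$, which is the same bound), and then propagate it forward with $(C_\rho,\rho)$-stability. The incremental reading of Definition~\ref{def:non-linear stability} that you flag is exactly what the paper's own proof uses implicitly in its first inequality, so your argument matches the paper on this point as well.
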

\begin{proof}
	To begin, observe that $\h_{t+1}$ is deterministic function of the sequence $\{\vb_\tau\}_{\tau=0}^t$. Fixing all $\{\vb_i\}_{i \neq \tau}$, we denote $\h_{t+1}$ as a function of $\vb_\tau$ by $\h_{t+1}(\vb_\tau)$. Given a pair of vectors $(\vb_\tau,\vh_\tau)$, using $(C_\rho,\rho)$-stability of the nonlinear system~\eqref{nonlinear state eqn}, for any $t \geq \tau$, we have
	\begin{align}
		\tn{\h_{t+1}(\vb_\tau) - \h_{t+1}(\vh_\tau)} &\leq C_\rho\rho^{t-\tau}\tn{\h_{\tau+1}(\vb_\tau) - \h_{\tau+1}(\vh_\tau)}, \nn \\
		&\leq C_\rho\rho^{t-\tau}\tn{\phi(\Bhetas\h_\tau) + \z_\tau + \w_{\tau} - \phi(\Bhetas\h_\tau) - \hat{\z}_\tau - \wh_\tau}, \nn \\
		&\leq C_\rho\rho^{t-\tau}(\tn{\z_\tau  - \hat{\z}_\tau} + \sigma\tn{1/\sigma\w_{\tau} - 1/\sigma\wh_\tau}), \nn \\
		& \leqsym{a} C_\rho\rho^{t-\tau}(1+\sigma^2)^{1/2} (\tn{\z_{\tau}-\hat{\z}_\tau}^2 + 1/\sigma^2\tn{\w_\tau  - \hat{\w}_\tau}^2)^{1/2} ,\nn \\
		& \leq C_\rho\rho^{t-\tau}(1+\sigma^2)^{1/2} \tn{\vb_\tau -\vh_\tau},
		\label{eqn:cov ht bound RNN_1}
	\end{align} 
	where we get (a) by using Cauchy-Schwarz inequality. This implies $\h_{t+1}$ is $C_\rho\rho^{t-\tau}(1+\sigma^2)^{1/2}$ Lipschitz function of $\vb_\tau$ for $0 \leq \tau \leq t$ and completes the proof.
\end{proof}
We are now ready to state a lemma to bound the subexponential norm of the gradient of the single sample loss~\eqref{eqn:single sample loss nonlinear}.
\begin{lemma}[Subexponential gradient]\label{lemma:subexp grad nonlinear}
	Consider the same setup of Lemma~\ref{lemma:lips of ht}. Let $\Lc_k(\bheta_k,(\h_L[k],\h_{L-1},\z_{L-1}[k])$ be as in \eqref{eqn:single sample loss nonlinear} and $\beta_+ := C_\rho(1+\sigma)/(1-\rho)$. Suppose $|\phi'(x)| \leq 1$ for all $x \in \R$. Then, at any point $\Bheta$, for all $1 \leq k \leq n$, we have 
	\begin{align*}
		&\te{\nabla\Lc_k(\bheta_k,(\h_L[k],\h_{L-1},\z_{L-1}[k])) - \E[\nabla\Lc_k(\bheta_k,(\h_L[k],\h_{L-1},\z_{L-1}[k]))]} \nn \\
		&\quad\quad\quad\quad\quad\quad\quad\quad\quad\quad\quad\quad\quad\quad\quad\quad\quad\quad\quad\quad\quad\quad\quad\quad\quad\quad\quad\quad\quad \lesssim \beta_+^2 \tn{\bheta_k - \bheta_k^\star} + \sigma \beta_+.	
	\end{align*}
\end{lemma}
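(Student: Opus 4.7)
The plan is to express the projected gradient $\vb^{*\top}\nabla\Lc_k$ (for any unit vector $\vb^*\in\Sc^{n-1}$) as a product of two sub-gaussian scalars and use the standard fact that $\te{XY}\leq \tsub{X}\tsub{Y}$. The key ingredient is controlling sub-gaussian norms of linear functionals of $\h_{L-1}$ via Gaussian Lipschitz concentration, for which Lemma~\ref{lemma:lips of ht} is tailor-made.

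First, using that $\h_L[k]=\phi(\bheta_k^{\star\top}\h_{L-1})+\z_{L-1}[k]+\w_{L-1}[k]$, the residual simplifies to
\begin{align*}
r := \h_L[k]-\phi(\bheta_k^\top\h_{L-1})-\z_{L-1}[k] = \phi(\bheta_k^{\star\top}\h_{L-1})-\phi(\bheta_k^\top\h_{L-1})+\w_{L-1}[k],
\end{align*}
so $\nabla\Lc_k = -r\,\phi'(\bheta_k^\top\h_{L-1})\,\h_{L-1}$ and $\vb^{*\top}\nabla\Lc_k = -r\,\phi'(\bheta_k^\top\h_{L-1})\,(\vb^{*\top}\h_{L-1})$. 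Since $|\phi'|\leq 1$, it suffices to bound the sub-gaussian norms of $r$ and of $\vb^{*\top}\h_{L-1}$.

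Next, let $\vb=[\vb_0^\top,\ldots,\vb_{L-2}^\top]^\top\sim\Nc(0,I)$ be the driving Gaussian of $\h_{L-1}$. By Lemma~\ref{lemma:lips of ht}, fixing all coordinates but $\vb_\tau$, $\h_{L-1}$ is $C_\rho\rho^{L-2-\tau}(1+\sigma^2)^{1/2}$-Lipschitz in $\vb_\tau$, so for any fixed $\ab\in\R^n$ the map $\vb\mapsto\ab^\top\h_{L-1}(\vb)$ has joint Lipschitz constant at most
\begin{align*}
\tn{\ab}\sqrt{\sum_{\tau=0}^{L-2}C_\rho^2\rho^{2(L-2-\tau)}(1+\sigma^2)}\;\leq\;\tn{\ab}\,\frac{C_\rho\sqrt{1+\sigma^2}}{\sqrt{1-\rho^2}}\;\lesssim\;\tn{\ab}\,\beta_+.
\end{align*}
Gaussian Lipschitz concentration then yields $\tsub{\ab^\top\h_{L-1}}\lesssim \tn{\ab}\beta_+$. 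Applying this with $\ab=\vb^*$ gives $\tsub{\vb^{*\top}\h_{L-1}}\lesssim\beta_+$, and with $\ab=\bheta_k-\bheta_k^\star$ gives $\tsub{(\bheta_k-\bheta_k^\star)^\top\h_{L-1}}\lesssim \beta_+\tn{\bheta_k-\bheta_k^\star}$. Since $\phi$ is $1$-Lipschitz, $|\phi(\bheta_k^{\star\top}\h_{L-1})-\phi(\bheta_k^\top\h_{L-1})|\leq|(\bheta_k-\bheta_k^\star)^\top\h_{L-1}|$ pointwise, so by monotonicity of $\tsub{\cdot}$ under pointwise domination and the triangle inequality (together with $\tsub{\w_{L-1}[k]}\leq\sigma$),
\begin{align*}
\tsub{r}\;\lesssim\;\beta_+\tn{\bheta_k-\bheta_k^\star}+\sigma.
\end{align*}

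Finally, combining these two sub-gaussian bounds via $\te{XY}\leq\tsub{X}\tsub{Y}$ and using $|\phi'|\leq 1$ (which does not increase sub-gaussian norms by the same domination argument), we obtain
\begin{align*}
\te{\vb^{*\top}\nabla\Lc_k}\;\leq\;\tsub{r\,\phi'(\bheta_k^\top\h_{L-1})}\,\tsub{\vb^{*\top}\h_{L-1}}\;\lesssim\;(\beta_+\tn{\bheta_k-\bheta_k^\star}+\sigma)\,\beta_+.
\end{align*}
Centering costs only a constant factor ($\te{X-\E X}\leq 2\te{X}$), and taking the supremum over $\vb^*\in\Sc^{n-1}$ gives the advertised vector bound. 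The main obstacle is the joint Lipschitz step: Lemma~\ref{lemma:lips of ht} only provides per-block Lipschitz constants, so some care is needed to argue that the Jacobian's operator norm is bounded by the root-sum-of-squares (via the chain-rule/Jacobian viewpoint) in order to invoke Gaussian Lipschitz concentration on the full vector $\vb$; every subsequent step is then a routine application of standard sub-gaussian/sub-exponential calculus.
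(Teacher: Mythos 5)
Your proposal follows essentially the same route as the paper's proof: you view $\h_{L-1}$ as a Lipschitz function of the driving Gaussian vector, combining the per-block constants of Lemma~\ref{lemma:lips of ht} into a joint constant $\lesssim \beta_+$ (the paper does exactly this via a telescoping sum plus Cauchy--Schwarz, which also settles the ``care needed'' point you raise without any differentiability argument), deduce subgaussianity of linear functionals of the state, and then bound the gradient as a product of subgaussians with the $|\phi'|\le 1$ factors absorbed by pointwise domination. The only cosmetic differences are that you first collect the residual $r$ before splitting, and that Gaussian Lipschitz concentration strictly gives the bound for the centered functional $\ab^\top(\h_{L-1}-\E[\h_{L-1}])$ rather than $\ab^\top\h_{L-1}$ — which is the form the paper uses and all that the centered-gradient statement requires.
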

\begin{proof}
	We first bound the subgaussian norm of the state vector $\h_t$ following~\cite{oymak2019stochastic} as follows: Setting $\vb_t = [\z_t^\top~1/\sigma\w_t^\top]^\top$, define the vectors $\q_t :=
	[\vb_0^\top~\cdots~\vb_{t-1}^\top]^\top \in \R^{2nt}$ and $\hq_t :=
	[\vh_0^\top~\cdots~\vh_{t-1}^\top]^\top \in \R^{2nt}$. Observe that $\h_{t}$ is a deterministic function of $\q_t$, that is, $\h_t = f(\q_t)$ for some function $f$. To bound the Lipschitz constant of $f$, for all~(deterministic) vector pairs $\q_t$ and $\hq_t$, we find the scalar $L_f$ satisfying
	\begin{align}
		\tn{f(\q_t)-f(\hq_t)} \leq L_f\tn{\q_t-\hq_t}.
	\end{align}
	For this purpose, we define the vectors $\{\bb_i\}_{i=0}^t$ as follows:
	$
	\bb_i = [\vh_0^\top~\cdots~\vh_{i-1}^\top~\vb_i^\top~\cdots~\vb_{t-1}]^\top.
	$
	Observing that $\bb_0 = \q_t$ and $\bb_t = \hq_t$, we write the telescopic sum,
	\begin{align}
		\tn{f(\q_t)-f(\hq_t)} \leq \sum_{i=0}^{t-1} \tn{f(\bb_{i+1})-f(\bb_i)}.
	\end{align}
	Observe that $f(\bb_{i+1})$ and $f(\bb_i)$ differs only in $\vb_i,\vh_i$ terms in the argument. Hence, viewing $\h_t$ as a function of $\w_i$ and using the result of Lemma~\ref{lemma:lips of ht}, we have
	\begin{align}
		\tn{f(\q_t)-f(\hq_t)} &\leq \sum_{i=0}^{t-1} C_\rho\rho^{t-1-i}(1+\sigma^2)^{1/2} \tn{\vb_i -\vh_i} ,\nn \\
		&\leqsym{a} C_\rho (1+\sigma^2)^{1/2} \big(\sum_{i=0}^{t-1} \rho^{2(t-1-i)}\big)^{1/2} \underbrace{\big(\sum_{i=0}^{t-1} \tn{\vb_i -\vh_i}^2\big)^{1/2}}_{\tn{\q_t - \hq_t}}, \nn \\
		&\leqsym{b}  \frac{C_\rho(1+\sigma^2)^{1/2}}{(1-\rho^2)^{1/2}}\tn{\q_t - \hq_t},  \label{eqn:lipschitzness of ht RNN}
	\end{align}
	where we get (a) by applying the Cauchy-Schwarz inequality and (b) follows from $\rho < 1$. Setting $\beta_K = C_\rho(1+\sigma^2)^{1/2}/(1-\rho^2)^{1/2}$, we found that $\h_t$ is $\beta_K$-Lipschitz function of $\q_t$. Since $\vb_t \distas \Nc(0,\Iden_{2n})$, the vector $\q_t \distas \Nc(0,\Iden_{2nt})$. Since, $\h_t$ is $\beta_K$-Lipschitz function of $\q_t$, for any fixed unit length vector $\ab$, $\ab^\top\h_t$ is still $\beta_K$-Lipschitz function of $\q_t$. This implies $\tsub{\h_t - \E[\h_t]} \lesssim \beta_K$. Secondly, $\beta_K$-Lipschitz function of a Gaussian vector obeys the variance inequality $\var{\ab^\top\h_t} \leq \beta_K^2$ (page 49 of~\cite{ledoux2001concentration}), which implies the covariance bound $\bSi[\h_t] \preceq \beta_K^2\Iden_{n}$. Combining these results with $\tsub{\w_t[k]} \leq \sigma$, we get the following subexponential norm bound,
	\begin{align}
		&\te{\nabla\Lc_k(\bheta_k,(\h_L[k],\h_{L-1},\z_{L-1}[k])) - \E[\nabla\Lc_k(\bheta_k,(\h_L[k],\h_{L-1},\z_{L-1}[k]))]} \nn\\
		& \quad\quad\quad\quad\quad\quad\leq \|\phi'(\bheta_k^\top\h_{L-1},\bheta_k^{\star \top}\h_{L-1}) \phi'(\bheta_k^\top\h_{L-1})\h_{L-1}\h_{L-1}^\top(\bheta_k -\bheta_k^\star) \nn\\ &\quad\quad\quad\quad\quad\quad-\E[\phi'(\bheta_k^\top\h_{L-1},\bheta_k^{\star \top}\h_{L-1}) \phi'(\bheta_k^\top\h_{L-1})\h_{L-1}\h_{L-1}^\top(\bheta_k -\bheta_k^\star)]\|_{\psi_{1}} \nn\\
		& \quad\quad\quad\quad\quad\quad+ \|\phi'(\bheta_k^\top\h_{L-1})\w_{L-1}[k]\h_{L-1}\|_{\psi_{1}}, \nn\\
		& \quad\quad\quad\quad\quad\quad\lesssim \beta_K^2 \tn{\bheta_k - \bheta_k^\star} + \sigma \beta_K, \nn \\
		& \quad\quad\quad\quad\quad\quad \lesssim \beta_+^2 \tn{\bheta_k - \bheta_k^\star} + \sigma \beta_+,
	\end{align}
	where we get the last two inequalities from the fact that the product of a bounded function~($\phi$ is $1$-Lipschitz because $|\phi'(x)| \leq 1$ for all $x \in \R$) with a subgaussian/subexponential random vector is still a subgaussian/subexponential random vector. This completes the proof.  
\end{proof}

\subsubsection{Proof of Corollary~\ref{thrm:main thrm RNN}}
\begin{proof}
	We have verified Assumptions~\ref{ass bounded}, \ref{assum grad dominance}, \ref{assume lip} and \ref{assume grad} for the nonlinear system~\ref{nonlinear state eqn}. Hence, we are ready to use Theorem~\ref{thrm:main theorem separable} to learn the dynamics $\Bhetas$ of the nonlinear system~\eqref{nonlinear state eqn} . Before that, we find the values of the system related constants to be used in Theorem~\ref{thrm:main theorem separable} as follows. 
	\begin{remark}
		Consider the same setup of Lemma~\ref{lemma:lips of ht}. Let $ \beta_+ \geq \beta_K >0$ be as defined in Lemmas~\ref{lemma:bounded states expectation} and \ref{lemma:subexp grad nonlinear} respectively. Then, with probability at least $1-4T\exp(-100n)$, for all $1 \leq t \leq T$, $\Bheta \in \Bc^{n \times n}(\Bhetas,r)$ and $1 \leq k \leq n$, the scalars $C_{\phi},D_{\phi}$ take the following values.
		\begin{align*}
			&\tn{\nabla_{\bheta_k}\phi(\bheta_k^\top\h_t)} = \tn{\phi'(\bheta_k^\top\h_t)\h_t} \leq \tn{\h_t} \lesssim \beta_+ \sqrt{n} = C_{\phi}, \\
			&\|\nabla_{\h_t}\nabla_{\bheta_k}\phi(\bheta_k^\top\h_t)\| =  \|\phi'(\bheta_k^\top\h_t)\Iden_n + \phi''(\bheta_k^\top\h_t)\h_t\bheta_{k}^\top\| \lesssim 1 + \beta_+\sqrt{n}\tn{\bheta_k} \lesssim 1+\tf{\Bhetas}\beta_+\sqrt{n} = D_{\phi} 
		\end{align*}
		where without loss of generality we choose $\Bheta^{(0)} = 0$ and $r = \tf{\Bhetas}$. Furthermore, the Lipschitz constant and the gradient noise coefficients take the following values:
		$L_{\Dc} =  c((1+\sigma)\beta_+^2n + \tf{\Bhetas}\beta_+^3n^{3/2}\log^{3/2}(2T))$, $K = c\beta_+^2$ and $\sigma_0 = c\sigma \beta_+$. Lastly, we also have $p_0 = 4T \exp(-100n)$.
	\end{remark}
	Using these values, we get the following sample complexity bound for learning nonlinear system~\eqref{nonlinear state eqn} via gradient descent,
	\begin{align}
		N &\gtrsim \frac{\beta_+^4}{\gamma^4(1+\sigma^2)^2}\log^2(3((1+\sigma)\beta_+^2n + \tf{\Bhetas}\beta_+^3 n^{3/2}\log^{3/2}(2T))N/\beta_+^2+3)n, \nn \\
		\implies \quad N &\gtrsim \frac{C_\rho^4}{\gamma^4(1-\rho)^4} \log^2(3(1+\sigma)n + 3\tf{\Bhetas}\beta_+ n^{3/2}\log^{3/2}(2T)N+3)n, \label{eqn:sample complex nonlinear}
	\end{align}
	where $\frac{\beta_+^2}{1+\sigma^2} \leq \frac{C_\rho^2(1+\sigma)^2/(1-\rho)^2}{(1+\sigma)^2/2} =\frac{2C_\rho^2}{(1-\rho)^2}$ is an upper bound on the condition number of the covariance matrix $\bSi[\h_t]$. Similarly, the approximate mixing time of the nonlinear system~\eqref{nonlinear state eqn} is given by,
	\begin{align}
		& L \geq 1 + \big[ \log(c_0 C_\rho \beta_+ (1+\tf{\Bhetas}\beta_+\sqrt{n})n\sqrt{N/n}) + \log(c/\beta_+ \lor c\sqrt{n}/\beta_+)\big]/\log(\rho^{-1}), \nn \\
		&\Longleftarrow \quad L = \big\lceil 1 + \frac{\log(C C_\rho(1+\tf{\Bhetas}\beta_+) Nn)}{1-\rho} \big\rceil, \label{eqn:L complexity non}
	\end{align}
	where $C>0$ is a constant. Finally, given the trajectory length $T \gtrsim L(N+1)$, where $N$ and $L$ are as given by~\eqref{eqn:sample complex nonlinear} and \eqref{eqn:L complexity non} respectively, starting from $\Bheta^{(0)} = 0$ and using the learning rate  $\eta = \frac{\gamma^2(1+\sigma^2)}{16\beta_+^4n^2} \geq \frac{\gamma^2(1-\rho)^4}{32C_\rho^4(1+\sigma)^2n^2}$, with probability at least $1-Ln\big(4T+\log(\frac{\tf{\Bhetas} C_\rho(1+\sigma)}{\sigma(1-\rho)})\big)\exp(-100n)$ for all $1 \leq k \leq n$, all gradient descent iterates $\Bheta^{(\tau)}$ on $\Lch$ satisfy
	\begin{align}
		\tn{\bheta_{k}^{(\tau)} - \bheta_k^\star} & \leq \big(1 - \frac{\gamma^4(1+\sigma^2)^2}{128\beta_+^4n^2}\big)^{\tau}\tn{\bheta_{k}^{(0)} - \bheta_k^\star} \nn \\
		&+ \frac{5c}{\gamma^2(1+\sigma^2)}\sigma\beta_+ \log(3(1+\sigma)n + 3\tf{\Bhetas}\beta_+ n^{3/2}\log^{3/2}(2T)N+3)\sqrt{\frac{n}{N}}. \nn \\
		& \leq \big(1 - \frac{\gamma^4(1-\rho)^4}{512C_\rho^4n^2}\big)^{\tau}\tn{\bheta_{k}^{(0)} - \bheta_k^\star} \nn \\
		&+ \frac{10cC_\rho}{\gamma^2(1-\rho)}\sigma\log(3(1+\sigma)n + 3C_\rho(1+\sigma) \tf{\Bhetas} n^{3/2}\log^{3/2}(2T)N/(1-\rho)+3)\sqrt{\frac{n}{N}},\nn	
	\end{align}
	where we get the last inequality by plugging in the value of $\beta_+ = C_\rho\sigma/(1-\rho)$ and using the inequality $(1+\sigma^2) \geq \frac{(1+\sigma)^2}{2}$. 
	We remark that, choosing $N \gtrsim \frac{C_\rho^4}{\gamma^4(1-\rho)^4} \log^2(3(1+\sigma)n + 3C_\rho(1+\sigma) \tf{\Bhetas} n^{3/2}\log^{3/2}(2T)N/(1-\rho)+3)n$, the residual term in the last inequality can be bounded as,
	\[
	\frac{10cC_\rho}{\gamma^2(1-\rho)} \log(3(1+\sigma)n + 3 C_\rho(1+\sigma)\tf{\Bhetas} n^{3/2}\log^{3/2}(2T)N/(1-\rho)+3)\sqrt{\frac{n}{N}} \lesssim  \sigma.
	\] 
	Therefore, to ensure that Theorem~\ref{thrm:main theorem separable} is applicable, we assume that $\sigma \lesssim \tf{\Bhetas}$ (where we choose $\Bheta^{(0)} = 0$ and $r = \tf{\Bhetas}$). This completes the proof.
\end{proof}

\end{document}